\documentclass[11pt]{article}

\usepackage[T1]{fontenc}
\usepackage{lmodern}

\usepackage[margin=1in]{geometry}
\usepackage[english]{babel}
\usepackage{appendix}

\usepackage{amsmath,amssymb,amsthm,mathtools,mathrsfs,bbm}
\usepackage{relsize}

\usepackage{graphicx}
\usepackage{subcaption}
\usepackage{booktabs}
\usepackage{placeins}
\usepackage{tikz}
\usetikzlibrary{plotmarks}
\usepackage{pgfplots}
\usepgfplotslibrary{groupplots}
\pgfplotsset{compat=1.18}

\usepackage{enumitem}
\usepackage{xcolor}
\usepackage[]{natbib}
\usepackage{fancyhdr}
\usepackage{titlesec}
\usepackage{pifont}
\usepackage{hyperref}

\hypersetup{
    colorlinks=true,
    linkcolor=black,
    citecolor=black,
    urlcolor=blue
}
\allowdisplaybreaks

\newcommand{\E}{\mathbb{E}}
\newcommand{\eps}{\varepsilon}
\renewcommand{\epsilon}{\eps}

\newcommand{\cmark}{\ding{51}}
\newcommand{\xmark}{\ding{55}}

\DeclareMathOperator*{\argmin}{arg\,min}

\newtheorem{assum}{Assumption}
\newtheorem{defi}{Definition}
\newtheorem{thm}{Theorem}
\newtheorem{prop}{Proposition}
\newtheorem{lem}{Lemma}
\newtheorem{cor}{Corollary}

\newtheorem{rem}{Remark}

\titleformat{\section}{\large\bfseries}{\thesection}{1em}{}
\title{Finite-Time Analysis of the Natural Policy Gradient in
Finite-Horizon Markov Decision Processes}

\author{%
Asha Barua\textsuperscript{1}, 
Sajad Khodadadian\textsuperscript{1*}
}

\date{}

\begin{document}

\maketitle

\begingroup
\renewcommand{\thefootnote}{\fnsymbol{footnote}}
\footnotetext[1]{Corresponding author.} 
\endgroup
\footnotetext[1]{Grado Department of Industrial and Systems Engineering,
Virginia Polytechnic Institute and State University,
Blacksburg, Virginia 24061, USA.
E-mail: \texttt{\{ashabarua,sajadk\}@vt.edu}.}

\setcounter{footnote}{1}

\begin{abstract}
Natural Policy Gradient (NPG) is a well-established Reinforcement Learning algorithm that underlies widely used methods such as Trust Region Policy Optimization and Proximal Policy Optimization, both of which have demonstrated strong empirical success. In this paper, we study exact NPG in finite-horizon Markov Decision Processes with known dynamics and horizon-dependent transition kernels. We provide the first finite-time convergence guarantees for this algorithm in this setting, for which we consider both constant and increasing step size regimes. With a constant step size $\eta_t=\eta$, we prove that NPG converges sublinearly with a rate of $\mathcal{O}(H^{2}/t)$ after $t$ iterations, where $H$ is the horizon length. We also extend this constant step size analysis to linear MDPs in an exact population-projection oracle under a full support projection distribution, recovering the same sublinear rate as in the tabular
setting. Furthermore, with increasing step sizes, we prove that this algorithm achieves a linear convergence rate of $\mathcal{O}\left(\left(1-\frac{1}{\vartheta_\rho}\right)^t\right)$ for a problem-dependent constant $\vartheta_\rho > 1$, and the horizon-only robust schedule of the form $\eta_t=\eta_0(H/(H-1))^t$ where $\eta_0>0$ and $H \geq 2$, attains this same geometric rate.
\end{abstract}

\section{Introduction}
Reinforcement Learning (RL)
has achieved substantial empirical success in domains such as robotic control,
game playing, and autonomous navigation
\citep{singh2022reinforcement,silver2016mastering,arulkumaran2017deep}.
Among RL methods, policy gradient algorithms are widely used for
high-dimensional control. The Natural Policy Gradient (NPG)
\citep{kakade2001natural}, in particular, accounts for the geometry of the
policy space and underlies widely used practical methods such as Trust Region Policy
Optimization (TRPO) \citep{schulman2015trust} and Proximal Policy Optimization
(PPO) \citep{schulman2017proximal}. Mathematically, MDPs characterize the underlying structure of RL algorithms.

MDPs are commonly formulated over either an infinite horizon, with average or discounted return objectives, or a finite horizon, with decisions made over a fixed number of time steps \citep{puterman2014markov,jin2020provably}. Existing theoretical analyses of policy gradient methods have primarily
considered infinite-horizon discounted MDPs, for which both asymptotic
convergence \citep{kakade2001natural} and finite-time guarantees
\citep{fazel2018global,agarwal2021theory} are well developed. Finite-horizon
MDPs \citep{puterman2014markov,jin2020provably}, however, arise naturally in
sequential planning, robotic manipulation, and RL for large language models
\citep{Bai2022TrainingAH,ouyang2022training,
zhan2023policy, SHAKYA2023120495, bhandari2024global, zhang2025survey}. Despite their practical relevance,
finite-time convergence guarantees for exact NPG in finite-horizon MDPs remain
unavailable, even in the tabular setting.

The finite-horizon setting introduces two structural difficulties. First, an
optimal policy is generally nonstationary and therefore consists of a sequence
of horizon-dependent decision rules. Consequently, updating the horizon-$h$
decision rule can alter the state distributions at subsequent horizons
$h+1,\ldots,H$, coupling policy improvement across the horizon. Second, the analyses in the
discounted setting commonly exploit the strict contraction induced by the
discount factor. No analogous discount-induced contraction is available in
the finite-horizon setting, where policies, transition kernels, and value
functions may depend on the horizon index.

In this paper, we study the finite time convergence of NPG for finite horizon MDPs. To the best of our knowledge, this is the first finite-time analysis of
exact NPG in finite-horizon MDPs with known, horizon-dependent transition
kernels and nonstationary policies; the closest finite-horizon convergence analysis concerns vanilla softmax policy gradient
\citep{klein2024beyond}, whose rates are sublinear, derived from smoothness and
weak Polyak-{\L}ojasiewicz inequalities with model-dependent
constants. Our main
contributions are as follows:
\begin{itemize}
    \item \textbf{Constant step size:}
    For tabular MDPs, we prove that NPG with a constant step size attains an
    optimality gap of order $\mathcal{O}(H^2/t)$ after $t$ iterations. Under
    the linear MDP assumption of \citep{jin2020provably}, the same rate holds in
    an oracle regime with exact action-value evaluation and exact
    population projection under a full support projection distribution.

    \item \textbf{Increasing step sizes:}
    For tabular MDPs, we prove that increasing step sizes yield an optimality
    gap of order
    $\mathcal{O}((1-1/\vartheta_\rho)^t)$, where $\vartheta_\rho>1$ is a
    problem-dependent distribution-mismatch coefficient. We further show that the horizon-only schedule
$\eta_t=\eta_0(H/(H-1))^t$, with $\eta_0>0$ and $H \geq 2$, satisfies the step size growth
condition required for geometric convergence; in the best case
$\vartheta_\rho=H$, it yields the rate
$\mathcal{O}\!\left(\left(1-\frac{1}{H}\right)^t\right)$.

    \item \textbf{Numerical illustration:} Finally, we present 
    simulations that illustrate the convergence behavior predicted by the
    theoretical bounds.
\end{itemize}
\subsection{Related work}\label{sec:Related_work}

 In this section, we provide a brief overview of the existing literature most
relevant to our analysis. Table~\ref{tab:comparison} compares our results with prior work.

\begin{table}[htbp]
    \caption{Finite-time convergence guarantees for NPG and PMD. The NPG update considered here is equivalent to KL-based PMD \citep{shani2020adaptive,xiao2022convergence}. ``Horizon-dep.\ $P^h$''
    indicates whether the analysis permits horizon-dependent transition
    kernels. Sublinear and linear denote, respectively, an
    $\mathcal{O}(1/t)$ optimality gap and a geometric
    $\mathcal{O}(c^t)$ optimality gap for some $c<1$. Step size regimes and
    rates are paired within each row.}
    \resizebox{\textwidth}{!}{%
        \begin{tabular}{llccc}
            \toprule
            \textbf{Paper} & \textbf{Algorithm} & \textbf{Step size}
            & \textbf{Horizon-dep.\ $P^h$} & \textbf{Rate} \\
            \midrule
            \citet{agarwal2021theory}
                & NPG & Constant & \xmark & Sublinear \\
            \citet{khodadadian2022linear}
                & NPG & Constant \& Adaptive & \xmark
                & Sublinear \& Linear \\
            \citet{xiao2022convergence}
                & PMD & Constant \& Increasing & \xmark
                & Sublinear \& Linear \\
            \citet{lan2023policy}
                & PMD & Constant \& Increasing & \xmark
                & Sublinear \& Linear \\
            \citet{liu2024elementary}
                & NPG & Constant & \xmark & Linear \\
            \midrule
            \textbf{This paper}
                & \textbf{NPG (KL-PMD)}
                & \textbf{Constant \& Increasing}
                & \cmark
                & \textbf{Sublinear \& Linear} \\
            \bottomrule
        \end{tabular}%
    }
    \label{tab:comparison}
\end{table}

NPG was introduced by \citet{kakade2001natural}, and its finite-time behavior,
together with that of related policy gradient methods, is now well understood
for tabular infinite-horizon discounted MDPs
\citep{agarwal2021theory,bhandari2024global}. Sublinear
$\mathcal{O}(1/t)$ guarantees have been established under constant step sizes,
whereas increasing or adaptive step sizes can yield geometric convergence
\citep{khodadadian2022linear,xiao2022convergence,
shani2020adaptive,lan2023policy}. Geometric convergence has also been obtained
under constant step sizes \citep{liu2024elementary} and through regularization
\citep{mei2020global,cen2022fast,zhan2023policy}. These analyses concern
infinite-horizon discounted MDPs with stationary transition kernels and
therefore do not cover finite-horizon MDPs with horizon-dependent dynamics. We establish the corresponding sublinear and
geometric guarantees for exact NPG in this setting, providing a baseline for
regularized and sample-based finite-horizon extensions.

\section{Preliminaries}\label{sec:prelim}

\subsection{Finite-Horizon Markov Decision Process}
We consider a finite-horizon Markov Decision Process (MDP) represented by the tuple
$(\mathcal{S},\mathcal{A},H,\mathcal{P},\mathcal{R})$, where $\mathcal{S}$ and $\mathcal{A}$
are finite state and action sets, respectively, and $H$ is the horizon length. The transition kernels are
$\mathcal{P}=\{P^h:\mathcal{S} \times \mathcal{A} \to \Delta({\mathcal{S}}) \mid h=1,2,\dots,H \}$, where $P^h(s'\mid s,a)$ denotes the probability of transitioning to $s'$ from
state-action pair $(s,a)$ at horizon $h$, and $\Delta(\mathcal{S})$ is the
probability simplex over $\mathcal{S}$. The reward
functions are $\mathcal{R}=\{R^h: \mathcal{S} \times \mathcal{A} \to [0,1] \mid h=1,2,\dots,H\}$. 

In the finite-horizon setting, an optimal policy is generally nonstationary. We therefore
represent a policy as $\pi=(\pi^1,\ldots,\pi^H)$, where
$\pi^h(\cdot\mid s)\in\Delta(\mathcal{A})$ is the action distribution at
horizon $h$ in state $s$. For $h\in[H]$ and $s\in\mathcal{S}$, we define
\begin{align*}
V^{\pi,h}(s)
:=
\mathbb{E}\!\left[
\sum_{i=h}^{H}R^i(S^i,A^i)
\mid
S^h=s,\;
A^i\sim\pi^i(\cdot\mid S^i),\;
S^{i+1}\sim P^i(\cdot\mid S^i,A^i),\;
i=h,\ldots,H
\right].
\end{align*}
We adopt the convention $V^{\pi,H+1}(s)=0$ for every policy $\pi$ and
$s\in\mathcal{S}$. Since $R^h(s,a)\in[0,1]$,  we have
$0\le V^{\pi,h}(s)\le H-h+1$ for all $(\pi,h,s)$. The corresponding action-value function is
\begin{align*}
Q^{\pi,h}(s,a)
&:=
R^h(s,a)
+
\sum_{s'\in\mathcal{S}}
P^h(s'\mid s,a)V^{\pi,h+1}(s'),
\end{align*}
and the advantage function is defined as $A^{\pi,h}(s, a) = Q^{\pi,h} (s,a) -V^{\pi,h} (s)$.
The goal of the agent is to find an optimal policy $\pi^\star$ such that for every horizon $h$, every state $s$ and any policy $\pi$,
\begin{equation} \nonumber
V^{\pi^\star,h}(s) \geq  V^{\pi,h} (s).
\end{equation}
Throughout the paper, a superscript $\star$ denotes a quantity associated with the optimal policy $\pi^\star$.

We analyze the Natural Policy Gradient (NPG), an iterative algorithm that finds the optimal policy through a smooth form of policy iteration \citep{kakade2001natural}.  In iteration $t$, given a constant step size $\eta>0$, the update of the NPG policy takes the form: 
\begin{equation}\label{eq:NPG}
\pi_{t+1}^h(a\mid s)
=
\frac{
\pi_t^h(a\mid s)
\exp\!\left(\eta Q^{\pi_t,h}(s,a)\right)
}{
Z_t^h(s)
},
\qquad
h\in[H],\ s\in\mathcal{S},\ a\in\mathcal{A},
\end{equation}
where the normalization constant is $Z_t^h(s) = {\sum_{a'\in\mathcal{A}} \pi^h_t (a' \mid s) \exp( \eta  Q^{\pi_t , h} (s,a'))}$. The NPG update \eqref{eq:NPG} is equivalently expressed as the KL-based policy
mirror descent (PMD) update
\citep{shani2020adaptive,lan2023policy}
\begin{equation}\label{eq:PMD}
\pi_{t+1}^h(\cdot\mid s)
=
\arg\max_{p\in\Delta(\mathcal{A})}
\left\{
\eta
\left\langle Q^{\pi_t,h}(s,\cdot),p\right\rangle
-
D_{\mathrm{KL}}\!\left(
p\mid\mid\pi_t^h(\cdot\mid s)
\right)
\right\}.
\end{equation}
Both forms of the update are applied independently to every horizon-state pair $(h,s)$.
We use the multiplicative form \eqref{eq:NPG} in the sublinear analysis and
the variational form \eqref{eq:PMD}, with an iteration-dependent step size
$\eta_t$, in the geometric analysis.

Furthermore, for any horizon $h \in [H]$ and every $i \geq h$, we define the state visitation distribution induced by a policy $\pi$ as
\begin{equation} \label{eq:SVD}
d^{h\to i,\pi}_s(\tilde s)
:= \mathbb{P}\Big(
S^i=\tilde s \mid
S^h=s,\;
A^j\sim \pi^j(\cdot\mid S^j),\;
S^{j+1}\sim P^j(\cdot\mid S^j,A^j),\;
j=h,\ldots,i-1
\Big).
\end{equation}
\subsection{Linear MDP}
Beyond the tabular setting, we consider the linear MDP model of
\citet{jin2020provably}, which represents transition kernels and rewards using
a low-dimensional feature map.
\begin{assum}\label{ass:LinMDP}
An MDP is linear with respect to a feature map
$\phi:\mathcal{S}\times\mathcal{A}\to\mathbb{R}^d$ if, for every $h\in[H]$, there exist a vector-valued function
$\omega^h=(\omega_1^h,\ldots,\omega_d^h):\mathcal{S}\to\mathbb{R}^d$
and a vector $\zeta^h\in\mathbb{R}^d$ such that, for every
$(s,a)\in\mathcal{S}\times\mathcal{A}$,
\[
P^h(\cdot\mid s,a)
=
\left\langle\phi(s,a),\omega^h(\cdot)\right\rangle,
\qquad
R^h(s,a)
=
\left\langle\phi(s,a),\zeta^h\right\rangle,
\]
where
$\langle\phi(s,a),\omega^h(\cdot)\rangle:=\sum_{j=1}^d\phi_j(s,a)\,\omega_j^h(\cdot)$.
We further assume that $\|\phi(s,a)\|\le B$ for some $B>0$ and every
$(s,a)\in\mathcal{S}\times\mathcal{A}$, and that
$\max\{\|\sum_{s\in\mathcal{S}}\omega^h(s)\|,\|\zeta^h\|\}\le\sqrt{d}$ for
every $h\in[H]$\footnote{Throughout the paper, $\|\cdot\|$ denotes the
Euclidean norm.}.
\end{assum}
Under Assumption~\ref{ass:LinMDP}, for every policy $\pi$, there exist vectors
$\{w^{\pi,h}\}_{h=1}^H\subset\mathbb{R}^d$ such that
\begin{equation}\label{eq:Q_w}
Q^{\pi,h}(s,a)
=
\left\langle\phi(s,a),w^{\pi,h}\right\rangle
\qquad
\text{for every }(s,a)\in\mathcal{S}\times\mathcal{A}.
\end{equation}
The linear
representation is most useful when $d\ll|\mathcal{S}||\mathcal{A}|$; choosing
$\phi(s,a)$ as the standard basis of $\mathbb{R}^{|\mathcal{S}||\mathcal{A}|}$ recovers the tabular model. We parameterize the horizon-dependent policy by
$\theta=(\theta^1,\ldots,\theta^H)$, where $\theta^h\in\mathbb{R}^d$, as
\begin{equation}\label{eq:policy_parametrization}
\pi_\theta^h(a\mid s)
=
\frac{
\exp\!\left(\left\langle\phi(s,a),\theta^h\right\rangle\right)
}{
\sum_{a'\in\mathcal{A}}
\exp\!\left(\left\langle\phi(s,a'),\theta^h\right\rangle\right)
}.
\end{equation}
Writing $\pi_t:=\pi_{\theta_t}$, in parameter space, the Q-NPG update
\citep{agarwal2021theory} with constant step size $\eta>0$ is
\begin{equation}\label{eq:Q-NPG-update-oracle}
\theta_{t+1}^h
=
\theta_t^h+\eta\, w^{\pi_t,h},
\qquad h\in[H].
\end{equation}
Section~\ref{sec:linear} specifies an exact population-projection oracle under
which \eqref{eq:Q-NPG-update-oracle} induces precisely the tabular NPG update
\eqref{eq:NPG} in policy space.

The remainder of the paper is organized as follows : Section~\ref{sec:sublinear_convergence} establishes sublinear convergence of
NPG with a constant step size in the tabular (Section~\ref{sec:tabular}) and
linear (Section~\ref{sec:linear}) MDP settings.
Section~\ref{sec:linear_convergence} establishes geometric convergence under
increasing step sizes. Section~\ref{sec:Simulation} presents the simulation
results, and Section~\ref{sec:conclusion} concludes. The main text contains
the key lemmas, while detailed proofs and auxiliary results are deferred to
the Appendix.

\section{Sublinear Convergence}\label{sec:sublinear_convergence}
\subsection{Tabular MDP}\label{sec:tabular}
In this section, we first establish that NPG with a constant step size $\eta>0$ in
tabular MDPs achieves a finite-time global convergence rate of
$\mathcal{O}((H-h+1)^2/t)$ at any given horizon $h$ after $t$ iterations. The
following lemmas provide the main ingredients for proving this result.

\begin{lem}(Performance Difference Lemma)\label{lem:PDL}
 Consider a finite-horizon MDP and any pair of policies $\pi$ and $\pi'$. 
For every horizon $h \in [H]$ and state $s \in \mathcal{S}$, we have
\[
V^{\pi,h}(s)-V^{\pi',h}(s)
=
\sum_{i=h}^H
\sum_{\tilde{s}\in\mathcal{S},\,\tilde{a}\in\mathcal{A}}
d_s^{h\to i,\pi}(\tilde{s})\,
\pi^i(\tilde{a}\mid\tilde{s})\,
A^{\pi',i}(\tilde{s},\tilde{a}).
\]
\end{lem}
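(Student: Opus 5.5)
We prove the identity by a one-step decomposition followed by a telescoping sum (equivalently, backward induction on $h$). The base case is $h=H+1$: both sides vanish, since $V^{\pi,H+1}\equiv V^{\pi',H+1}\equiv 0$ and the sum is empty. Alternatively one can start at $h=H$ and check the identity directly from $V^{\pi,H}(s)=\sum_a\pi^H(a\mid s)R^H(s,a)$ and $Q^{\pi',H}=R^H$.

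The key one-step identity comes from the Bellman equation $V^{\pi,h}(s)=\sum_{a}\pi^h(a\mid s)Q^{\pi,h}(s,a)$, which follows from the definitions of $V^{\pi,h}$ and $Q^{\pi,h}$ by conditioning on the first action. Adding and subtracting $Q^{\pi',h}$ gives
\begin{align*}
V^{\pi,h}(s)-V^{\pi',h}(s)
&=\sum_a \pi^h(a\mid s)\bigl(Q^{\pi',h}(s,a)-V^{\pi',h}(s)\bigr)
+\sum_a\pi^h(a\mid s)\bigl(Q^{\pi,h}(s,a)-Q^{\pi',h}(s,a)\bigr)\\
&=\sum_a \pi^h(a\mid s)A^{\pi',h}(s,a)
+\sum_{a}\pi^h(a\mid s)\sum_{s'}P^h(s'\mid s,a)\bigl(V^{\pi,h+1}(s')-V^{\pi',h+1}(s')\bigr).
\end{align*}
The second line uses that the reward terms $R^h(s,a)$ cancel in $Q^{\pi,h}-Q^{\pi',h}$. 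The first term is exactly the $i=h$ summand, because $d_s^{h\to h,\pi}(\tilde s)=\mathbb{1}\{\tilde s=s\}$.

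For the inductive step, apply the induction hypothesis at horizon $h+1$ to $V^{\pi,h+1}(s')-V^{\pi',h+1}(s')$, starting from each $s'$. This produces
\[
\sum_{a,s'}\pi^h(a\mid s)P^h(s'\mid s,a)\sum_{i=h+1}^H\sum_{\tilde s,\tilde a} d^{h+1\to i,\pi}_{s'}(\tilde s)\,\pi^i(\tilde a\mid\tilde s)A^{\pi',i}(\tilde s,\tilde a).
\]
All sums are finite, so they can be exchanged freely. It then remains to use the Chapman--Kolmogorov (Markov) identity
\[
d_s^{h\to i,\pi}(\tilde s)=\sum_{a,s'}\pi^h(a\mid s)P^h(s'\mid s,a)\,d_{s'}^{h+1\to i,\pi}(\tilde s),\qquad i\ge h+1.
\]
This follows from definition \eqref{eq:SVD} by conditioning on $(A^h,S^{h+1})$ and invoking the Markov property of the process generated by $\pi$ and the kernels $P^h$. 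Substituting it collapses the display into the terms $i=h+1,\dots,H$ of the claimed sum, which completes the induction.

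The only step requiring any care is this last one. One must justify the Markov decomposition of $d^{h\to i,\pi}$ with horizon-dependent kernels and nonstationary policies, and track the indexing correctly so that $\pi^i$ and $P^i$ are applied at step $i$. Everything else is algebra with the Bellman equations.
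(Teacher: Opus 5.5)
Your proof is correct, but it takes a different route from the paper. The paper does not induct on $h$. Instead, it inserts the telescoping identity $\sum_{i=h}^H \bigl(V^{\pi',i+1}(S^{i+1})-V^{\pi',i}(S^i)\bigr) = -V^{\pi',h}(S^h)$ into the expectation over a whole trajectory generated by $\pi$. It then uses the tower property to replace $V^{\pi',i+1}(S^{i+1})$ by $\sum_{s'}P^i(s'\mid S^i,A^i)V^{\pi',i+1}(s')$, so that each summand becomes $A^{\pi',i}(S^i,A^i)$. Finally, it reads off $d^{h\to i,\pi}_s(\tilde s)\,\pi^i(\tilde a\mid\tilde s)$ directly as the marginal law of $(S^i,A^i)$. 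You instead work one step at a time with the Bellman equations for $V^{\pi}$ and $Q^{\pi'}$, and you recover the visitation weights through the Chapman--Kolmogorov identity for $d^{h\to i,\pi}$. The paper's version is a single global computation and never needs a composition rule for $d$, since the step-$i$ marginal appears directly. Your version stays at the level of value-function recursions and makes the horizon indexing of $\pi^i$ and $P^i$ explicit at every step. The cost is the extra Markov-decomposition identity, which you correctly flag as the only delicate point. Both arguments ultimately rest on the same ingredient: the Markov property of the process driven by $\pi$ and the kernels $P^h$. The paper uses it in its tower-property step, and you use it both in deriving $V^{\pi,h}=\sum_a\pi^h(a\mid\cdot)\,Q^{\pi,h}(\cdot,a)$ and in the Chapman--Kolmogorov identity.
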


The next lemma lower bounds the one-step improvement of policies through the NPG update rule.

\begin{lem}(Improvement Lower Bound for NPG) \label{lem:ILB}
Consider the NPG update \eqref{eq:NPG} with step size $\eta>0$, and $t\ge0$. Suppose that $\pi_t$ has full support, i.e.,
$\pi_t^i(a\mid s)>0$ for every $i\in[H]$, $s\in\mathcal{S}$, and
$a\in\mathcal{A}$. Then, for every $h\in[H]$ and $s\in\mathcal{S}$,
\[
V^{\pi_{t+1},h}(s)-V^{\pi_t,h}(s)
\ge
\frac{1}{\eta}\log Z_t^h(s)-V^{\pi_t,h}(s)
\ge0.
\]
\end{lem}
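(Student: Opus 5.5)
Apply the Performance Difference Lemma (Lemma~\ref{lem:PDL}) with $\pi=\pi_{t+1}$ and $\pi'=\pi_t$:
\[
V^{\pi_{t+1},h}(s)-V^{\pi_t,h}(s)=\sum_{i=h}^H\sum_{\tilde s}d_s^{h\to i,\pi_{t+1}}(\tilde s)\sum_{\tilde a}\pi_{t+1}^i(\tilde a\mid\tilde s)A^{\pi_t,i}(\tilde s,\tilde a).
\]
Everything then reduces to lower bounding the inner sum $\sum_a\pi_{t+1}^i(a\mid\tilde s)A^{\pi_t,i}(\tilde s,a)$ pointwise in $(i,\tilde s)$.

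From \eqref{eq:NPG}, and because $\pi_t$ has full support, the logarithm
\[
\log\frac{\pi_{t+1}^i(a\mid\tilde s)}{\pi_t^i(a\mid\tilde s)}=\eta Q^{\pi_t,i}(\tilde s,a)-\log Z_t^i(\tilde s)
\]
is well defined. Solving for $Q$ and averaging against $\pi_{t+1}^i(\cdot\mid\tilde s)$ gives
\[
\sum_a\pi_{t+1}^i(a\mid\tilde s)Q^{\pi_t,i}(\tilde s,a)=\tfrac1\eta D_{\mathrm{KL}}\bigl(\pi_{t+1}^i(\cdot\mid\tilde s)\,\|\,\pi_t^i(\cdot\mid\tilde s)\bigr)+\tfrac1\eta\log Z_t^i(\tilde s).
\]
Since KL is nonnegative, subtracting $V^{\pi_t,i}(\tilde s)$ yields
\[
\sum_a\pi_{t+1}^iA^{\pi_t,i}\ \ge\ \tfrac1\eta\log Z_t^i(\tilde s)-V^{\pi_t,i}(\tilde s).
\]
Call the right-hand side $g_t^i(\tilde s)$.

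Next I show $g_t^i(\tilde s)\ge0$. By Jensen's inequality applied to the concave $\log$,
\[
\log Z_t^i(\tilde s)=\log\sum_a\pi_t^i(a\mid\tilde s)e^{\eta Q^{\pi_t,i}(\tilde s,a)}\ \ge\ \eta\sum_a\pi_t^i(a\mid\tilde s)Q^{\pi_t,i}(\tilde s,a)=\eta V^{\pi_t,i}(\tilde s).
\]
Here I use the identity $V^{\pi,i}(\tilde s)=\sum_a\pi^i(a\mid\tilde s)Q^{\pi,i}(\tilde s,a)$, which follows directly from the definitions. Hence $g_t^i\ge0$ everywhere, and in particular the inner sum is nonnegative for every $(i,\tilde s)$.

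Finally, every term in the PDL sum is a nonnegative weight times a quantity bounded below by $g_t^i(\tilde s)\ge0$. I therefore drop all terms except $i=h$, where $d_s^{h\to h,\pi_{t+1}}=\delta_s$. This leaves
\[
V^{\pi_{t+1},h}(s)-V^{\pi_t,h}(s)\ \ge\ g_t^h(s)=\tfrac1\eta\log Z_t^h(s)-V^{\pi_t,h}(s)\ \ge\ 0,
\]
which is exactly the claim. The only subtle step is ensuring that the per-state lower bounds are themselves nonnegative, so that discarding the terms with $i>h$ is legitimate. The Jensen step above provides this, and full support is needed only to make the log-ratio well defined.
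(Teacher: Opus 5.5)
Your proposal is correct and follows the paper's proof essentially step for step. You apply the performance difference lemma with $(\pi_{t+1},\pi_t)$ and rewrite $Q^{\pi_t,i}$ via the log-ratio of the multiplicative update to obtain a nonnegative KL term plus $\frac{1}{\eta}\log Z_t^i-V^{\pi_t,i}$; you then use Jensen to show this residual is nonnegative (the paper's Lemma~\ref{lemma:logZi}) and discard the $i>h$ terms (the paper's Lemma~\ref{lem:L}).
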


Thus, the value functions are nondecreasing along the policy sequence. In
particular, $V^{\pi^\star,h}(s)-V^{\pi_t,h}(s)\ge
V^{\pi^\star,h}(s)-V^{\pi_T,h}(s)$ for every $t\le T$; see
Lemma~\ref{lemma:Vk} in the Appendix. We now state the first main result.

\begin{thm}(Global Convergence of NPG)\label{thm:GC}
Consider a finite-horizon MDP and the NPG update \eqref{eq:NPG} with constant
step size $\eta>0$ and uniform initialization
$\pi_0^h(\cdot\mid s)=\mathrm{Unif}(\mathcal{A})$\; for every $h\in[H]$ and
$s\in\mathcal{S}$. Then, for every integer $T\ge1$, horizon $h\in[H]$, and
state $s\in\mathcal{S}$,
\[
V^{\pi^\star,h}(s)-V^{\pi_T,h}(s)
\le
\frac{(H-h+1)\log|\mathcal{A}|}{\eta T}
+
\frac{(H-h+1)^2}{T}.
\]
Consequently, for any fixed $h\in[H]$, if
$\eta\ge\frac{\log|\mathcal{A}|}{H-h+1}$, then
$V^{\pi^\star,h}(s)-V^{\pi_T,h}(s)\le\frac{2(H-h+1)^2}{T}$ for every
$s\in\mathcal{S}$, and an optimality gap of at most $\varepsilon$ is attained
at horizon $h$ after $T\ge 2(H-h+1)^2/\varepsilon$ iterations.
\end{thm}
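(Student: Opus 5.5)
We follow the Agarwal et al. mirror-descent argument, adapted to the finite horizon by running the KL potential under the occupancy of $\pi^\star$ started from $(h,s)$. Fix $h$ and $s$, and write $d_i := d_s^{h\to i,\pi^\star}$ for $i=h,\dots,H$. Define the potential
\[
\Phi_t := \sum_{i=h}^H \sum_{\tilde s} d_i(\tilde s)\, D_{\mathrm{KL}}\!\left(\pi^{\star,i}(\cdot\mid\tilde s)\,\|\,\pi_t^i(\cdot\mid\tilde s)\right).
\]
With uniform initialization, $\Phi_0\le (H-h+1)\log|\mathcal{A}|$. By the multiplicative form \eqref{eq:NPG}, each $\pi_t$ has full support (induction from $\pi_0$), so Lemma~\ref{lem:ILB} applies at every iteration.

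\textbf{One-step identity.} From \eqref{eq:NPG} we have $\log\frac{\pi_{t+1}^i(a\mid\tilde s)}{\pi_t^i(a\mid\tilde s)} = \eta Q^{\pi_t,i}(\tilde s,a) - \log Z_t^i(\tilde s)$. Taking expectation over $a\sim\pi^{\star,i}(\cdot\mid\tilde s)$ and weighting by $d_i$ gives
\[
\Phi_t-\Phi_{t+1} = \sum_{i=h}^H\sum_{\tilde s} d_i(\tilde s)\Big[\eta\,\langle Q^{\pi_t,i}(\tilde s,\cdot),\pi^{\star,i}(\cdot\mid\tilde s)\rangle - \log Z_t^i(\tilde s)\Big].
\]
We rewrite the bracket as $\eta\,\langle A^{\pi_t,i}(\tilde s,\cdot),\pi^{\star,i}\rangle - \big(\log Z_t^i(\tilde s) - \eta V^{\pi_t,i}(\tilde s)\big)$.

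Applying Lemma~\ref{lem:PDL} with $\pi=\pi^\star$ and $\pi'=\pi_t$, the advantage sum equals $V^{\star,h}(s)-V^{\pi_t,h}(s)$. For the remaining term, Lemma~\ref{lem:ILB} gives
\[
\tfrac1\eta\log Z_t^i(\tilde s) - V^{\pi_t,i}(\tilde s) \le V^{\pi_{t+1},i}(\tilde s) - V^{\pi_t,i}(\tilde s).
\]
Hence
\[
V^{\star,h}(s)-V^{\pi_t,h}(s) \le \frac{\Phi_t-\Phi_{t+1}}{\eta} + \sum_{i=h}^H \sum_{\tilde s} d_i(\tilde s)\big(V^{\pi_{t+1},i}(\tilde s)-V^{\pi_t,i}(\tilde s)\big).
\]

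\textbf{Telescoping.} We sum over $t=0,\dots,T-1$. The KL terms telescope to at most $\Phi_0/\eta$. The value-difference terms telescope, for each $i$, to
\[
\sum_{\tilde s} d_i(\tilde s)\big(V^{\pi_T,i}(\tilde s)-V^{\pi_0,i}(\tilde s)\big) \le H-i+1,
\]
because values lie in $[0,H-i+1]$ and $d_i$ is a probability distribution. Summing over $i$ gives $\sum_{i=h}^H (H-i+1) \le (H-h+1)^2$.

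\textbf{Conclusion.} By monotonicity (Lemma~\ref{lemma:Vk}), the left side summed over $t$ is at least $T\big(V^{\star,h}(s)-V^{\pi_T,h}(s)\big)$. Dividing by $T$ yields the stated bound. The corollary follows by substituting $\eta\ge \log|\mathcal{A}|/(H-h+1)$, which makes the first term at most $(H-h+1)^2/T$, and then solving $2(H-h+1)^2/T\le\varepsilon$ for $T$.

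\textbf{Main obstacle.} The delicate step is the second one: matching the $\log Z$ term against the value improvement at the later horizons $i>h$, evaluated under $\pi^\star$'s visitation rather than $\pi_t$'s. Lemma~\ref{lem:ILB} holds pointwise at every $(i,\tilde s)$, so any nonnegative weighting is allowed. The cost is a sum over $H-h+1$ horizons of telescoped value ranges, and this sum is exactly the source of the $(H-h+1)^2$ factor.
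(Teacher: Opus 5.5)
Your proposal is correct and follows essentially the same route as the paper. Your potential $\Phi_t$ is the paper's $m_t^{h\to H}(s)$, and your telescoped value-difference term is its $n_t^{h\to H}(s)$; the steps match the paper's proof line for line: rewriting $Q^{\pi_t,i}$ as a log-ratio via \eqref{eq:NPG}, applying Lemma~\ref{lem:ILB} pointwise under $\pi^\star$'s visitation, telescoping with $\Phi_0\le(H-h+1)\log|\mathcal A|$ and $\sum_{i=h}^H(H-i+1)\le(H-h+1)^2$, and finishing with Lemma~\ref{lemma:Vk}.
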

The $H^2$ factor in Theorem~\ref{thm:GC} arises from
Lemma~\ref{lem:nk}, where the value potential is bounded by
$\sum_{i=H-j}^H(H-i+1)=\frac{(j+1)(j+2)}{2}\le(j+1)^2$. We do not know
whether this quadratic dependence is unavoidable for constant step size NPG
or is an artifact of our proof technique. In infinite-horizon discounted
MDPs, \citet{johnson2023optimal} proved matching upper and lower bounds for
exact PMD; a matching lower bound for the finite-horizon setting remains an
open problem.

Because the tabular model is known, its action-value functions can be
evaluated exactly. As $\eta\to\infty$, the NPG update approaches the greedy policy-improvement
step of exact policy iteration with respect to $Q^{\pi_t,h}$, whereas a finite
$\eta$ yields a smooth policy-improvement step. We treat the tabular setting as an exact
known-model baseline and next study the structured linear MDP setting as an
exact-oracle benchmark for future sample-based extensions.

\subsection{Linear MDP} \label{sec:linear}

We now identify an exact oracle regime in which the constant step size
guarantee of Theorem~\ref{thm:GC} carries over to the linear MDP setting.
Throughout this subsection, we work under Assumption~\ref{ass:LinMDP} and use
the linear representation \eqref{eq:Q_w}, the softmax parametrization
\eqref{eq:policy_parametrization}, and the Q-NPG update
\eqref{eq:Q-NPG-update-oracle}. Note that the softmax parametrization ensures
that $\pi_t$ has full support over actions, i.e., $\pi_t^i(a\mid s)>0$ for
every $i\in[H]$, $s\in\mathcal S$, and $a\in\mathcal A$, at every iteration
$t\ge0$.

To specify the oracle, fix a projection distribution $v$ over
$\mathcal S\times\mathcal A$ with full support:
\begin{equation}
\label{eq:v_full_support_linear}
    v(s,a)>0,
    \qquad
    \forall (s,a)\in\mathcal S\times\mathcal A .
\end{equation}
Here $v$ is a design choice rather than an environmental quantity; in the
finite state-action setting, \eqref{eq:v_full_support_linear} can always be
satisfied by taking, for example, $v=\mathrm{Unif}(\mathcal S\times\mathcal A)$. We adopt this projection formulation because its empirical counterpart arises
naturally in sample-based extensions, where coverage and estimation error must
be addressed. For fixed policy parameters $\theta$ and
horizon $h\in[H]$, define the population projection loss
\begin{equation}
\label{eq:linear_population_projection_loss}
    L_h(w;\theta,v)
    :=
    \mathbb E_{(s,a)\sim v}
    \left[
    \big(
    Q^{\pi_\theta,h}(s,a)-\langle w,\phi(s,a)\rangle
    \big)^2
    \right].
\end{equation}
By the linear realizability identity \eqref{eq:Q_w}, the minimum of
$L_h(\cdot;\theta,v)$ is attained and equals zero. Moreover, since $v$ has
full support, every exact minimizer represents the action-value function
pointwise: if $w\in\argmin_{u\in\mathbb R^d}L_h(u;\theta,v)$, then
\[
\sum_{s\in\mathcal S}\sum_{a\in\mathcal A}
v(s,a)
\big(
Q^{\pi_\theta,h}(s,a)-\langle w,\phi(s,a)\rangle
\big)^2
=0,
\]
and since each weight $v(s,a)$ is strictly positive and every summand is
nonnegative, every summand must vanish. At iteration $t\ge0$, the exact
population-projection oracle returns
\begin{equation}
\label{eq:linear_projection_oracle_choice}
    w^{\pi_t,h}
    \in
    \argmin_{w\in\mathbb R^d}L_h(w;\theta_t,v),
    \qquad h\in[H],
\end{equation}
so that
\begin{equation}
\label{eq:linear_oracle_exact_at_t}
    \langle w^{\pi_t,h},\phi(s,a)\rangle
    =
    Q^{\pi_t,h}(s,a),
    \qquad
    \forall (s,a)\in\mathcal S\times\mathcal A .
\end{equation}
The full support condition on $v$ is used only to turn zero projection error
into the pointwise identity \eqref{eq:linear_oracle_exact_at_t}, and
non-uniqueness of the minimizer is harmless: all exact minimizers induce the
same values $\langle w^{\pi_t,h},\phi(s,a)\rangle$ on every state-action pair,
and the policy update depends only on these values. full support is also not
the only sufficient condition for \eqref{eq:linear_oracle_exact_at_t}: if
$\Sigma_v:=\mathbb E_{(s,a)\sim v}[\phi(s,a)\phi(s,a)^\top]$ is positive
definite, then zero loss gives
$(w-w^{\pi_\theta,h})^\top\Sigma_v(w-w^{\pi_\theta,h})=0$, hence
$w=w^{\pi_\theta,h}$. We use the full support condition because it provides a direct finite-state
argument without requiring additional assumptions on the feature map.

Consequently, the Q-NPG parameter update \eqref{eq:Q-NPG-update-oracle}
induces the policy update
\begin{align}
\pi_{t+1}^h(a\mid s)
&=
\frac{
\pi_t^h(a\mid s)
\exp\!\left(\eta\langle w^{\pi_t,h},\phi(s,a)\rangle\right)
}{
\sum_{a'}\pi_t^h(a'\mid s)
\exp\!\left(\eta\langle w^{\pi_t,h},\phi(s,a')\rangle\right)
}
=
\frac{
\pi_t^h(a\mid s)
\exp\!\left(\eta Q^{\pi_t,h}(s,a)\right)
}{
\sum_{a'}\pi_t^h(a'\mid s)
\exp\!\left(\eta Q^{\pi_t,h}(s,a')\right)
},
\label{eq:linear_equals_tabular_update}
\end{align}
where the second equality uses \eqref{eq:linear_oracle_exact_at_t}. That is,
under the oracle \eqref{eq:linear_projection_oracle_choice}, Q-NPG induces
exactly the tabular NPG update \eqref{eq:NPG} in policy space, and the
finite-time guarantee follows by applying Theorem~\ref{thm:GC} to the induced
policy sequence.

\begin{prop} \label{prop:linMDP}
Consider the Q-NPG update \eqref{eq:Q-NPG-update-oracle} with constant step
size $\eta>0$ under Assumption~\ref{ass:LinMDP}. Suppose that $v$ satisfies
the full support condition \eqref{eq:v_full_support_linear} and that, at every
iteration $t\ge0$ and horizon $h\in[H]$, the oracle returns an exact minimizer
as in \eqref{eq:linear_projection_oracle_choice}. Let $\theta_0^h=\mathbf 0$
for every $h\in[H]$, so that $\pi_0^h(\cdot\mid s)=\mathrm{Unif}(\mathcal A)$
for every $h\in[H]$ and $s\in\mathcal S$. Then, for every integer $T\ge1$,
horizon $h\in[H]$, and state $s\in\mathcal S$,
\[
V^{\pi^\star,h}(s)-V^{\pi_T,h}(s)
\le
\frac{(H-h+1)\log|\mathcal A|}{\eta T}
+
\frac{(H-h+1)^2}{T}.
\]
\end{prop}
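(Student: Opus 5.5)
The plan is to reduce the statement to Theorem~\ref{thm:GC}. I will show that the parameter sequence $\{\theta_t\}$ generated by the Q-NPG update \eqref{eq:Q-NPG-update-oracle} under the exact oracle induces a policy sequence $\{\pi_t\}=\{\pi_{\theta_t}\}$ that coincides with the tabular NPG iterates \eqref{eq:NPG} started from the uniform policy. Once this is established, the bound follows verbatim from Theorem~\ref{thm:GC}, since that theorem depends only on the policy sequence and not on how it is parametrized.

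\textbf{Step 1 (initialization).} With $\theta_0^h=\mathbf 0$, the softmax \eqref{eq:policy_parametrization} gives $\pi_0^h(a\mid s)=1/|\mathcal A|$ for all $(h,s,a)$. This matches the uniform initialization required in Theorem~\ref{thm:GC}.

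\textbf{Step 2 (one-step correspondence).} Fix $t\ge0$ and $h$. The update \eqref{eq:Q-NPG-update-oracle} gives $\langle\phi(s,a),\theta_{t+1}^h\rangle=\langle\phi(s,a),\theta_t^h\rangle+\eta\langle\phi(s,a),w^{\pi_t,h}\rangle$. Substituting this into \eqref{eq:policy_parametrization}, the factor $\exp(\langle\phi(s,a),\theta_t^h\rangle)$ reproduces $\pi_t^h(a\mid s)$ up to the $s$-dependent normalizer, which cancels between numerator and denominator. This yields the first equality in \eqref{eq:linear_equals_tabular_update}.

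Next, I use the full support of $v$ together with realizability \eqref{eq:Q_w}. The minimum of $L_h(\cdot;\theta_t,v)$ is zero, so every summand vanishes and \eqref{eq:linear_oracle_exact_at_t} holds pointwise. Replacing $\langle w^{\pi_t,h},\phi(s,a)\rangle$ by $Q^{\pi_t,h}(s,a)$ gives exactly the tabular update \eqref{eq:NPG}.

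\textbf{Step 3 (induction).} By induction on $t$, the policies $\pi_{\theta_t}$ equal the tabular NPG iterates from uniform initialization for all $t\le T$. I will also note that the softmax keeps full support at every step. Theorem~\ref{thm:GC} then applies directly: it invokes Lemma~\ref{lem:ILB}, whose full-support hypothesis is satisfied. This gives the stated bound for every $T\ge1$, $h$, and $s$.

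\textbf{Expected obstacle.} The only delicate point is the possible non-uniqueness of the oracle's minimizer. I handle it as the text does: every minimizer yields the same values on all pairs $(s,a)$. Moreover, the parameters $\theta_t$ may depend on which minimizer is chosen, but the policies $\pi_{\theta_t}$ do not. Indeed, $\langle\phi(s,a),\theta_t^h\rangle$ is determined up to an $s$-dependent shift by the accumulated $Q$-values, and any such shift cancels in the softmax normalization. So the induction holds regardless of the oracle's choice.
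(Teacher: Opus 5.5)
Your proposal is correct and matches the paper's proof: both use full support of $v$ to obtain the pointwise identity $\langle w^{\pi_t,h},\phi(s,a)\rangle=Q^{\pi_t,h}(s,a)$, conclude that the Q-NPG iterates from $\theta_0=\mathbf 0$ coincide in policy space with tabular NPG from uniform initialization, and then apply Theorem~\ref{thm:GC} directly. On the non-uniqueness point, with $\theta_0=\mathbf 0$ the logits are in fact determined exactly, since $\langle\phi(s,a),\theta_t^h\rangle=\eta\sum_{k<t}Q^{\pi_k,h}(s,a)$, not merely up to an $s$-dependent shift.
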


\begin{rem}\label{rem:linMDP}
Proposition~\ref{prop:linMDP} identifies an exact oracle regime in which
linear Q-NPG induces the same policy-space dynamics and achieves the same
convergence order as tabular NPG. Within this regime, moving from the
tabular model to the linear setting changes only the representation of the
update direction. In sample-based extensions, $w^{\pi_t,h}$ is estimated from finite data, so
the pointwise identity \eqref{eq:linear_oracle_exact_at_t} may not hold
exactly. The analysis must therefore account for estimation error and the
coverage of $v$.

Across the $H$ horizons, storing the update directions
$\{w^{\pi_t,h}\}_{h=1}^H$ uses $\mathcal{O}(dH)$ memory, compared with
$\mathcal{O}(|\mathcal S||\mathcal A|H)$ for a full tabular action-value array.
This comparison concerns only the representation of the update directions.
Implementing exact action-value evaluation and the oracle
\eqref{eq:linear_projection_oracle_choice} may still require access to the
complete feature map, model dynamics, or state-action-level quantities.
Therefore, Proposition~\ref{prop:linMDP} does not establish an end-to-end
$\mathcal{O}(dH)$ memory or computational guarantee. Similarly, the absence of
$d$ from the bound follows from the exact-oracle assumption and does not imply
a dimension-independent implementation guarantee. Estimating $w^{\pi_t,h}$
from data would introduce dependence on the feature dimension and data
coverage, together with statistical error terms.
\end{rem}

\section{Geometric Convergence}\label{sec:linear_convergence}
In this section, we apply the PMD update \eqref{eq:PMD} with an
iteration-dependent step size $\eta_t>0$ in place of the constant step size
$\eta$, and fix an initial state distribution $\rho\in\Delta(\mathcal S)$.
For any horizon $h\in[H]$, define
\[
V^{\pi,h}(\rho) \;:=\; \mathbb E_{S^h\sim \rho}\!\big[V^{\pi,h}(S^h)\big],
\]
the expected return from horizon $h$ when $S^h\sim\rho$ and policy
$\pi=(\pi^1,\dots,\pi^H)$ is followed from horizon $h$ onward; similarly,
$d^{h\to i,\pi}_\rho(\tilde s):=\mathbb{E}_{S^h\sim\rho}[d^{h\to i,\pi}_{S^h}(\tilde s)]$.
We initialize the PMD iterates with any full support policy, i.e.,
$\pi_0^h(a\mid s)>0$ for every $h\in[H]$, $s\in\mathcal S$, and
$a\in\mathcal A$; the KL-based PMD update preserves full support for finite
step sizes, since it takes the multiplicative form \eqref{eq:NPG}.

A key technical difficulty in the finite-horizon setting is that, for $h>1$,
the horizon-$h$ visitation distribution induced by a policy $\pi$ generally
admits no policy-independent lower bound. To address this issue, we introduce an auxiliary objective that assigns
policy-independent baseline mass to every horizon, analogous to the
discounted infinite-horizon setting \citep{xiao2022convergence}.
\begin{defi}
\label{def:global_multistart_J}
Let $H_0\sim\mathrm{Unif}\{1,\dots,H\}$ be a random starting horizon, and let
$V^{\pi,H_0}(\rho)$ be the return from reinitializing the process at horizon
$H_0$ with $S^{H_0}\sim\rho$ and following policy $\pi$ thereafter. The global
multi-start objective is
\begin{equation}
\label{eq:global_multistart_J}
J(\pi)
\;:=\;
\mathbb E_{H_0\sim \mathrm{Unif}\{1,\dots,H\}}\!\big[V^{\pi,H_0}(\rho)\big]
\;=\;
\frac{1}{H}\sum_{h_0=1}^H V^{\pi,h_0}(\rho).
\end{equation}
\end{defi}
The optimality gap in the auxiliary objective $J(\pi)$ controls the
horizon-$1$ optimality gap of the original objective up to a factor of $H$.
\begin{lem}
\label{lem:standard_gap_vs_J_gap}
Let $\rho\in\Delta(\mathcal S)$ and let $\pi^\star$ be an optimal
nonstationary policy. Then for any policy $\pi$,
\begin{equation}
\label{eq:standard_gap_vs_J_gap}
V^{\pi^\star,1}(\rho)-V^{\pi,1}(\rho)
\;\le\;
H\big(J(\pi^\star)-J(\pi)\big).
\end{equation}
\end{lem}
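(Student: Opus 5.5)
The plan is to expand $J(\pi^\star)-J(\pi)$ by Definition~\ref{def:global_multistart_J} and use the pointwise optimality of $\pi^\star$ to discard every term except the one with $h_0=1$. By \eqref{eq:global_multistart_J},
\[
H\big(J(\pi^\star)-J(\pi)\big)
=\sum_{h_0=1}^H\big(V^{\pi^\star,h_0}(\rho)-V^{\pi,h_0}(\rho)\big).
\]
This is just linearity of the average over $h_0$.

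The key observation is that each summand is nonnegative. The optimal policy satisfies $V^{\pi^\star,h}(s)\ge V^{\pi,h}(s)$ for every horizon $h$, every state $s$, and every policy $\pi$. Taking expectation over $S^{h_0}\sim\rho$ preserves the inequality, so $V^{\pi^\star,h_0}(\rho)-V^{\pi,h_0}(\rho)\ge0$ for each $h_0\in[H]$. Dropping the terms $h_0=2,\dots,H$ therefore only decreases the sum. What remains is the $h_0=1$ term, $V^{\pi^\star,1}(\rho)-V^{\pi,1}(\rho)$, which yields \eqref{eq:standard_gap_vs_J_gap}.

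There is no real obstacle. The only point needing care is that optimality holds uniformly in the starting horizon and state. This is exactly how $\pi^\star$ was defined in the preliminaries: a single nonstationary policy that is optimal at every $(h,s)$, which exists by backward induction. Hence the same $\pi^\star$ dominates every term of $J$.
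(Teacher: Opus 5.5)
Your proof is correct and follows essentially the same route as the paper's proof. You write $H\big(J(\pi^\star)-J(\pi)\big)$ as the sum of per-horizon gaps $V^{\pi^\star,h_0}(\rho)-V^{\pi,h_0}(\rho)$, note that each is nonnegative by the horizon-wise optimality of $\pi^\star$, and keep only the $h_0=1$ term.
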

To analyze the multi-start objective, we average the visitation distributions
over all admissible starting horizons $h_0\in\{1,\dots,i\}$, which yields the
global multi-start horizon-$i$ visitation measure\footnote{For any nonnegative
finite measure $\mu$ on $\mathcal S$, we write
$\mathbb E_{S\sim\mu}[f(S)]:=\sum_{s\in\mathcal S}\mu(s)f(s)$, even when
$\mu(\mathcal S)\ne1$. In particular, $\bar d^{i,\pi}_\rho(\mathcal S)=i/H$.}
\begin{equation}
\label{eq:bar_d_global}
\bar d^{i,\pi}_\rho(\tilde{s})
\;:=\;
\frac{1}{H}\sum_{h_0=1}^i d^{h_0\to i,\pi}_\rho(\tilde{s}),
\qquad i\in[H],\ \tilde{s}\in\mathcal S.
\end{equation}
The next lemma verifies that this construction provides the required
policy-independent lower bound.
\begin{lem}
\label{lem:global_floor}
Fix $\rho\in\Delta(\mathcal S)$ and a policy $\pi$. Then for every $i\in[H]$
and $\tilde{s}\in\mathcal S$,
\begin{equation}
\label{eq:global_floor}
\bar d^{i,\pi}_\rho(\tilde{s})
\;\ge\;
\frac{1}{H}\rho(\tilde{s}).
\end{equation}
\end{lem}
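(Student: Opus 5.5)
The bound follows by keeping only the diagonal term $h_0=i$ in the sum \eqref{eq:bar_d_global} and discarding the others. Each summand is a nonnegative measure, because $d^{h_0\to i,\pi}_\rho(\tilde s)$ is a mixture over $S^{h_0}\sim\rho$ of the probabilities in \eqref{eq:SVD}. Hence
\[
\bar d^{i,\pi}_\rho(\tilde s)\;\ge\;\frac{1}{H}\,d^{i\to i,\pi}_\rho(\tilde s).
\]
This step uses that $h_0=i$ is an admissible starting horizon, since the sum ranges over $h_0\in\{1,\dots,i\}$.

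Next we identify the diagonal term. By \eqref{eq:SVD} with $h=i$, no transitions are taken: the conditioning list $j=h,\ldots,i-1$ is empty. So $d^{i\to i,\pi}_s(\tilde s)=\mathbb P(S^i=\tilde s\mid S^i=s)=\mathbbm{1}\{s=\tilde s\}$. Averaging over $S^i\sim\rho$ then gives
\[
d^{i\to i,\pi}_\rho(\tilde s)=\sum_{s}\rho(s)\,\mathbbm{1}\{s=\tilde s\}=\rho(\tilde s).
\]
Combining the two displays yields \eqref{eq:global_floor}, and the bound is independent of $\pi$, as claimed.

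No step is a genuine obstacle. The only point needing care is the convention that the $h\to h$ visitation distribution is the point mass at the starting state, which follows directly from the empty-product reading of \eqref{eq:SVD}. This convention is the same one behind the footnoted identity $\bar d^{i,\pi}_\rho(\mathcal S)=i/H$: there are $i$ summands, each of total mass $1$.
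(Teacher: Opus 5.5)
Your proposal is correct and follows the paper's proof essentially verbatim. Both isolate the $h_0=i$ summand of \eqref{eq:bar_d_global}, identify it as $\rho(\tilde s)$ because no transitions occur, and drop the remaining nonnegative summands.
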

We next derive the performance-difference identity for $J$, the multi-start
analogue of Lemma~\ref{lem:PDL}.
\begin{lem}{(Performance difference for the global multi-start objective)}
\label{lem:PDLJ}
Let $\pi$ and $\pi'$ be nonstationary policies and fix
$\rho\in\Delta(\mathcal S)$. Then
\begin{equation}\label{eq:PDLJ}
J(\pi')-J(\pi)
=
\sum_{i=1}^H
\mathbb E_{S\sim \bar d^{i,{\pi'}}_\rho}\!\left[
\Big\langle Q^{\pi,i}(S,\cdot),\ \pi'^i(\cdot\mid S)-\pi^i(\cdot\mid S)\Big\rangle
\right].
\end{equation}
\end{lem}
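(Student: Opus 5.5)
The identity follows by applying Lemma~\ref{lem:PDL} once for each starting horizon $h_0$, rewriting advantages as inner products with $Q$, and then averaging over $h_0$ and exchanging the order of summation. Averaging the per-horizon identities is exactly what produces the multi-start measure $\bar d^{i,\pi'}_\rho$ of \eqref{eq:bar_d_global}.

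\textbf{Step 1: apply Lemma~\ref{lem:PDL} at each starting horizon.} Fix $h_0\in[H]$ and $s\in\mathcal S$. We use Lemma~\ref{lem:PDL} with the roles of the policies chosen so that the visitation is taken under $\pi'$ and the advantage under $\pi$:
\[
V^{\pi',h_0}(s)-V^{\pi,h_0}(s)=\sum_{i=h_0}^H\sum_{\tilde s,\tilde a} d^{h_0\to i,\pi'}_s(\tilde s)\,\pi'^i(\tilde a\mid\tilde s)\,A^{\pi,i}(\tilde s,\tilde a).
\]

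\textbf{Step 2: rewrite the advantage term as an inner product.} The definition of $V^{\pi,i}$ gives the Bellman consistency $V^{\pi,i}(\tilde s)=\sum_a\pi^i(a\mid\tilde s)Q^{\pi,i}(\tilde s,a)$. This holds because conditioning on $A^i$ reproduces $R^i+\sum_{s'}P^i V^{\pi,i+1}$. Since $\pi'^i(\cdot\mid\tilde s)$ sums to one, it follows that
\[
\sum_{\tilde a}\pi'^i(\tilde a\mid\tilde s)A^{\pi,i}(\tilde s,\tilde a)=\big\langle Q^{\pi,i}(\tilde s,\cdot),\pi'^i(\cdot\mid\tilde s)-\pi^i(\cdot\mid\tilde s)\big\rangle.
\]

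\textbf{Step 3: average over the start state and the starting horizon.} Take the expectation over $s\sim\rho$. By the definition of $d^{h_0\to i,\pi'}_\rho$ and linearity, each $d^{h_0\to i,\pi'}_s$ is replaced by $d^{h_0\to i,\pi'}_\rho$. Then average over $h_0$ with weight $1/H$; by \eqref{eq:global_multistart_J} the left-hand side becomes $J(\pi')-J(\pi)$.

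\textbf{Step 4: exchange the order of summation.} All sums are finite, so we may swap $\sum_{h_0=1}^H\sum_{i=h_0}^H$ into $\sum_{i=1}^H\sum_{h_0=1}^i$. For each fixed $i$, the inner sum $\frac1H\sum_{h_0=1}^i d^{h_0\to i,\pi'}_\rho(\tilde s)$ is exactly $\bar d^{i,\pi'}_\rho(\tilde s)$ from \eqref{eq:bar_d_global}. By the footnote convention for nonnegative finite measures, the resulting sum over $\tilde s$ is precisely $\mathbb E_{S\sim\bar d^{i,\pi'}_\rho}[\cdot]$, which gives \eqref{eq:PDLJ}.

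There is no serious obstacle. The point requiring care is the orientation in Step~1: Lemma~\ref{lem:PDL} must be applied so that the visitation is under $\pi'$ and the advantage under $\pi$, matching \eqref{eq:PDLJ}. The other point to check is the index bookkeeping in the summation swap of Step~4, which is what generates the unnormalized measure $\bar d^{i,\pi'}_\rho$ of total mass $i/H$.
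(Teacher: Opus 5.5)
Your proposal is correct and follows essentially the same route as the paper. Both apply Lemma~\ref{lem:PDL} at each start horizon $h_0$ with visitation under $\pi'$ and advantage under $\pi$, average over $\rho$ and $h_0$, and swap $\sum_{h_0=1}^H\sum_{i=h_0}^H$ into $\sum_{i=1}^H\sum_{h_0=1}^i$ to produce $\bar d^{i,\pi'}_\rho$; your explicit use of $V^{\pi,i}(\tilde s)=\sum_a\pi^i(a\mid\tilde s)Q^{\pi,i}(\tilde s,a)$ to rewrite the advantage as an inner product fills in a step the paper leaves implicit.
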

Lemma~\ref{lem:PDLJ} expresses the multi-start objective difference in a form
directly compatible with the KL-based PMD update; as a consequence, $J$ is
nondecreasing along the iterates.
\begin{lem}\label{lem:global_multistart_monotone}
Let $\{\pi_t\}_{t\ge 0}$ be generated by the KL-based PMD update
\eqref{eq:PMD} with step sizes $\eta_t>0$. Then, for all $t\ge 0$,
\[
J(\pi_{t+1})-J(\pi_t)
=
\frac{1}{H}\sum_{h_0=1}^H\Big(V^{\pi_{t+1},h_0}(\rho)-V^{\pi_t,h_0}(\rho)\Big)
\;\ge\;0.
\]
\end{lem}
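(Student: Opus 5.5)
The equality is just Definition~\ref{def:global_multistart_J} applied to $\pi_{t+1}$ and $\pi_t$ and subtracted. So the real content is the nonnegativity. I would get it from Lemma~\ref{lem:PDLJ} with $\pi'=\pi_{t+1}$ and $\pi=\pi_t$:
\[
J(\pi_{t+1})-J(\pi_t)=\sum_{i=1}^H \mathbb E_{S\sim\bar d^{i,\pi_{t+1}}_\rho}\Big[\big\langle Q^{\pi_t,i}(S,\cdot),\,\pi_{t+1}^i(\cdot\mid S)-\pi_t^i(\cdot\mid S)\big\rangle\Big].
\]
Since each $\bar d^{i,\pi_{t+1}}_\rho$ is a nonnegative measure, it suffices to prove the pointwise statement: for every $i\in[H]$ and $s\in\mathcal S$,
\[
\langle Q^{\pi_t,i}(s,\cdot),\pi_{t+1}^i(\cdot\mid s)-\pi_t^i(\cdot\mid s)\rangle\ge 0.
\]

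To prove the pointwise inequality, I would use the variational form \eqref{eq:PMD} with step size $\eta_t$. The maximizer $p=\pi_{t+1}^i(\cdot\mid s)$ is at least as good as the feasible candidate $p=\pi_t^i(\cdot\mid s)$, whose KL term is zero. This gives
\[
\eta_t\langle Q^{\pi_t,i}(s,\cdot),\pi_{t+1}^i(\cdot\mid s)\rangle-D_{\mathrm{KL}}\big(\pi_{t+1}^i(\cdot\mid s)\,\|\,\pi_t^i(\cdot\mid s)\big)\ge \eta_t\langle Q^{\pi_t,i}(s,\cdot),\pi_t^i(\cdot\mid s)\rangle.
\]
Rearranging, and dividing by $\eta_t>0$, yields
\[
\langle Q^{\pi_t,i}(s,\cdot),\pi_{t+1}^i(\cdot\mid s)-\pi_t^i(\cdot\mid s)\rangle\ge \frac{1}{\eta_t}D_{\mathrm{KL}}\big(\pi_{t+1}^i(\cdot\mid s)\,\|\,\pi_t^i(\cdot\mid s)\big)\ge0.
\]

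Summing these nonnegative terms against the nonnegative weights $\bar d^{i,\pi_{t+1}}_\rho(s)$ over $s$ and $i$ then gives $J(\pi_{t+1})-J(\pi_t)\ge0$.

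The only step needing care is that the KL term is finite, so that the comparison is meaningful. This holds because $\pi_0$ has full support, and the multiplicative form \eqref{eq:NPG} with finite $\eta_t$ preserves full support at every iteration. Hence $\pi_{t+1}^i(\cdot\mid s)\ll\pi_t^i(\cdot\mid s)$ and the KL divergence is finite.

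An alternative route uses Lemma~\ref{lem:ILB} per starting horizon, with $\eta$ replaced by $\eta_t$ since that lemma is per-iteration. This gives $V^{\pi_{t+1},h_0}(s)\ge V^{\pi_t,h_0}(s)$ for every $h_0$ and $s$. Averaging over $s\sim\rho$ and over $h_0$ then gives the claim directly. Either route works; I would present the PMD/PDL route, since it is the one the geometric analysis later reuses.
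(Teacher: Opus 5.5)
Your proof is correct and follows essentially the paper's route: prove the pointwise inequality $\langle Q^{\pi_t,i}(s,\cdot),\pi_{t+1}^i(\cdot\mid s)-\pi_t^i(\cdot\mid s)\rangle\ge0$ from the PMD step with $p=\pi_t^i(\cdot\mid s)$, then sum it against nonnegative visitation weights through a performance-difference identity. The differences are cosmetic. The paper gets the pointwise bound from the KL three-point inequality, which carries an extra $D_{\mathrm{KL}}(\pi_t^i(\cdot\mid s)\mid\mid\pi_{t+1}^i(\cdot\mid s))$ term you do not need. It also applies Lemma~\ref{lem:PDL} separately for each starting horizon $h_0$ before averaging, which yields $V^{\pi_{t+1},h_0}(\rho)\ge V^{\pi_t,h_0}(\rho)$ for every $h_0$ as a byproduct, whereas you use the already-averaged Lemma~\ref{lem:PDLJ}.
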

For an optimal policy $\pi^\star$, define the multi-start optimality gap
\begin{equation} \label{eq:delta_with_J}
    \delta_t:=J(\pi^\star)-J(\pi_t);
\end{equation}
Following the monotonicity Lemma~\ref{lem:global_multistart_monotone}, $\delta_{t+1}\le \delta_t$ for
all $t\ge 0$.
\begin{lem}
\label{lem:onestep_multistart_potential}
Fix $t\ge 0$, and let $\pi_{t+1}$ be generated from $\pi_t$ by the KL-based
PMD update \eqref{eq:PMD} with step size $\eta_t>0$. Suppose that
$\pi_t^i(a\mid s)>0$ for every $i\in[H]$, $s\in\mathcal S$, and
$a\in\mathcal A$, so that all KL terms below are finite. Define the
multi-start KL potential
\begin{equation}
\label{eq:Dtstar_global}
D_t^\star
\;:=\;
\sum_{i=1}^H
\mathbb E_{S\sim \bar d^{i,\pi^\star}_\rho}
\Big[
D_{\mathrm{KL}}\!\big(\pi^{\star,i}(\cdot\mid S)\mid\mid\pi_t^i(\cdot\mid S)\big)
\Big].
\end{equation}
Then
\begin{equation}
\label{eq:onestep_multistart_potential}
\sum_{i=1}^H
\mathbb E_{S\sim \bar d^{i,\pi^\star}_\rho}
\Big[
\big\langle Q^{\pi_t,i}(S,\cdot),\ \pi^{\star,i}(\cdot\mid S)-\pi_{t+1}^i(\cdot\mid S)\big\rangle
\Big]
\;\le\;
\frac{1}{\eta_t}\big(D_t^\star-D_{t+1}^\star\big).
\end{equation}
\end{lem}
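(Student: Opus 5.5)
This is the standard three-point lemma for KL mirror descent, applied pointwise in $(i,S)$ and then integrated against the fixed, nonnegative weights $\bar d^{i,\pi^\star}_\rho(S)$.

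\textbf{Per-state inequality.} Fix $i\in[H]$ and $s\in\mathcal S$. Write $p_t:=\pi_t^i(\cdot\mid s)$, $p_{t+1}:=\pi_{t+1}^i(\cdot\mid s)$, $p^\star:=\pi^{\star,i}(\cdot\mid s)$ and $q:=Q^{\pi_t,i}(s,\cdot)$. The goal is
\[
\eta_t\langle q,\,p^\star-p_{t+1}\rangle
\le
D_{\mathrm{KL}}(p^\star\mid\mid p_t)
-D_{\mathrm{KL}}(p^\star\mid\mid p_{t+1})
-D_{\mathrm{KL}}(p_{t+1}\mid\mid p_t).
\]
I would prove this directly from the closed form \eqref{eq:NPG} with $\eta_t$, which is available because $p_t$ has full support:
\[
\log p_{t+1}(a)=\log p_t(a)+\eta_t q(a)-\log Z_t^i(s).
\]
Two computations follow.
\begin{itemize}
\item Take the inner product of this identity with $p^\star$. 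Using $\sum_a p^\star(a)=1$, this gives
\[
D_{\mathrm{KL}}(p^\star\mid\mid p_t)-D_{\mathrm{KL}}(p^\star\mid\mid p_{t+1})=\eta_t\langle q,p^\star\rangle-\log Z_t^i(s).
\]
Supports cause no trouble: $p_t$ and $p_{t+1}$ both have full support, so both KL terms are finite. Terms with $p^\star(a)=0$ vanish.
\item Take the inner product with $p_{t+1}$. This gives
\[
D_{\mathrm{KL}}(p_{t+1}\mid\mid p_t)=\eta_t\langle q,p_{t+1}\rangle-\log Z_t^i(s).
\]
\end{itemize}
Subtracting the second identity from the first yields the displayed inequality, in fact with equality. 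Since $D_{\mathrm{KL}}(p_{t+1}\mid\mid p_t)\ge0$, we may drop that term to get
\[
\langle q,\,p^\star-p_{t+1}\rangle\le\frac{1}{\eta_t}\bigl(D_{\mathrm{KL}}(p^\star\mid\mid p_t)-D_{\mathrm{KL}}(p^\star\mid\mid p_{t+1})\bigr).
\]
Alternatively, this follows from the variational form \eqref{eq:PMD} via the first-order optimality condition and the three-point identity for Bregman divergences. The explicit computation avoids boundary issues, so I would use it.

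\textbf{Aggregation.} Multiply the per-state inequality by $\bar d^{i,\pi^\star}_\rho(s)\ge0$, sum over $s\in\mathcal S$, then sum over $i\in[H]$. The key point is that the weighting measure $\bar d^{i,\pi^\star}_\rho$ depends only on $\pi^\star$, $\rho$ and the dynamics, not on $t$. Therefore the same measure appears in $D_t^\star$ and in $D_{t+1}^\star$ as defined in \eqref{eq:Dtstar_global}. The right-hand side therefore sums exactly to $\frac1{\eta_t}(D_t^\star-D_{t+1}^\star)$, and the left-hand side is exactly the left side of \eqref{eq:onestep_multistart_potential}. All sums are finite because $\mathcal S$ and $\mathcal A$ are finite and every KL term is finite. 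The footnote convention for nonnegative measures of total mass $i/H$ causes no issue, since we only use linearity and nonnegativity of the weights.

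\textbf{Main obstacle.} The only delicate point is the bookkeeping in the per-state step. The cross terms $\langle p^\star,\log p_t\rangle$ must cancel so that the $\log Z$ terms match and drop out exactly, and one must justify finiteness of $D_{\mathrm{KL}}(p^\star\mid\mid p_{t+1})$. Full support of $p_{t+1}$, which is inherited from $p_t$ through the multiplicative form, handles the finiteness. Once these are in place the proof is routine.
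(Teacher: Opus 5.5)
Your proposal is correct and follows the paper's proof. Both arguments apply the one-step KL mirror-descent inequality at each coordinate $(i,s)$ with comparator $\pi^{\star,i}(\cdot\mid s)$, weight by the $t$-independent measure $\bar d^{i,\pi^\star}_\rho$, and sum over $i$ to obtain $\frac{1}{\eta_t}(D_t^\star-D_{t+1}^\star)$. The only difference is that the paper cites this per-state inequality from \citet{lan2023policy,xiao2022convergence}, whereas you derive it directly from the multiplicative form, where it holds as an exact identity with the extra $-D_{\mathrm{KL}}(\pi_{t+1}^i(\cdot\mid s)\mid\mid\pi_t^i(\cdot\mid s))$ term; this makes your argument self-contained.
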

\begin{defi}
\label{def:vartheta_k_multistart}
For each $t\ge 0$, the per-iteration multi-start distribution mismatch
coefficient is
\begin{equation}
\label{eq:vartheta_k_multistart}
\vartheta_t
\;:=\;
\max_{i\in[H]}
\max_{\tilde{s}:\,\bar d^{i,\pi^\star}_\rho(\tilde{s})>0}
\frac{\bar d^{i,\pi^\star}_\rho(\tilde{s})}{\bar d^{i,\pi_{t+1}}_\rho(\tilde{s})},
\end{equation}
with the convention $\vartheta_t=+\infty$ if
$\bar d^{i,\pi^\star}_\rho(\tilde{s})>0$ but
$\bar d^{i,\pi_{t+1}}_\rho(\tilde{s})=0$ for some pair $(i,\tilde{s})$.
\end{defi}
By Lemma~\ref{lem:global_floor}, if $\rho(\tilde{s})>0$ then
$\bar d^{i,\pi_{t+1}}_\rho(\tilde{s})\ge \rho(\tilde{s})/H$ for all $i$, so
$\vartheta_t$ is finite whenever $\bar d^{i,\pi^\star}_\rho(\tilde{s})>0$
implies $\rho(\tilde{s})>0$.
\begin{prop}
\label{prop:key_multistart}
Fix $\rho\in\Delta(\mathcal S)$ and an optimal nonstationary policy
$\pi^\star$. Let $\{\pi_t\}_{t\ge0}$ be generated from a full support initial
policy by the KL-based PMD update \eqref{eq:PMD}, with step sizes satisfying
$0<\eta_t<\infty$ for every $t\ge0$. Then, for every $t\ge0$ such that
$\vartheta_t<\infty$,
\begin{equation}
\label{eq:key_multistart}
\vartheta_t(\delta_{t+1}-\delta_t)+\delta_t
\le
\frac{1}{\eta_t}
\bigl(D_t^\star-D_{t+1}^\star\bigr).
\end{equation}
\end{prop}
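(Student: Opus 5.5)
Following \citet{xiao2022convergence}, we start from Lemma~\ref{lem:onestep_multistart_potential} and split its left-hand side. Write
\[
\langle Q^{\pi_t,i},\pi^{\star,i}-\pi_{t+1}^i\rangle
=\langle Q^{\pi_t,i},\pi^{\star,i}-\pi_t^i\rangle
-\langle Q^{\pi_t,i},\pi_{t+1}^i-\pi_t^i\rangle .
\]
We sum both pieces over $i$ with weights $\bar d^{i,\pi^\star}_\rho$.

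\textbf{First piece.} By Lemma~\ref{lem:PDLJ} with $\pi'=\pi^\star$ and $\pi=\pi_t$, this sum equals $J(\pi^\star)-J(\pi_t)=\delta_t$.

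\textbf{Second piece.} The key pointwise fact is
\[
g_t^i(s):=\langle Q^{\pi_t,i}(s,\cdot),\pi_{t+1}^i(\cdot\mid s)-\pi_t^i(\cdot\mid s)\rangle\ge 0
\quad\text{for every }(i,s).
\]
To see this, compare the objective in \eqref{eq:PMD} at its maximizer $p=\pi_{t+1}^i(\cdot\mid s)$ with its value at $p=\pi_t^i(\cdot\mid s)$:
\[
\eta_t\langle Q,\pi_{t+1}^i\rangle-D_{\mathrm{KL}}(\pi_{t+1}^i\,\|\,\pi_t^i)\ge\eta_t\langle Q,\pi_t^i\rangle ,
\]
so $g_t^i(s)\ge D_{\mathrm{KL}}/\eta_t\ge0$. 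Nonnegativity lets us change measure from $\bar d^{i,\pi^\star}_\rho$ to $\bar d^{i,\pi_{t+1}}_\rho$ at a cost of at most $\vartheta_t$:
\[
\sum_i\mathbb E_{\bar d^{i,\pi^\star}_\rho}[g_t^i]\le\vartheta_t\sum_i\mathbb E_{\bar d^{i,\pi_{t+1}}_\rho}[g_t^i].
\]
Here the ratio bound is used only on states with $\bar d^{i,\pi^\star}_\rho(\tilde s)>0$; other states contribute nothing on the left, and their right-hand terms are nonnegative. Applying Lemma~\ref{lem:PDLJ} again, now with $\pi'=\pi_{t+1}$ and $\pi=\pi_t$, the right-hand sum equals $J(\pi_{t+1})-J(\pi_t)=\delta_t-\delta_{t+1}$.

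\textbf{Combining.} The left-hand side of \eqref{eq:onestep_multistart_potential} is then at least $\delta_t-\vartheta_t(\delta_t-\delta_{t+1})=\vartheta_t(\delta_{t+1}-\delta_t)+\delta_t$. Together with the upper bound $\frac{1}{\eta_t}(D_t^\star-D_{t+1}^\star)$ from Lemma~\ref{lem:onestep_multistart_potential}, this gives \eqref{eq:key_multistart}.

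\textbf{Remaining checks and main obstacle.} Full support of $\pi_t$ is preserved by the multiplicative form, so the KL terms are finite, and finite $\vartheta_t$ is assumed. The main obstacle is the change of measure. It works only because the pointwise improvement $g_t^i(s)$ is nonnegative state by state; mere nonnegativity of its aggregate would not suffice. It also requires that $\vartheta_t$ is defined with respect to $\pi_{t+1}$, matching the measure produced by the second application of Lemma~\ref{lem:PDLJ}. I would verify that the minus signs line up so that the correction term appears exactly as $\vartheta_t(\delta_{t+1}-\delta_t)$.
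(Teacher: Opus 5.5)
Your proposal is correct and follows the paper's proof essentially step for step: the same split of the left-hand side of Lemma~\ref{lem:onestep_multistart_potential}, Lemma~\ref{lem:PDLJ} applied once with $(\pi,\pi')=(\pi_t,\pi^\star)$ and once with $(\pi,\pi')=(\pi_t,\pi_{t+1})$, and the change of measure via $\vartheta_t$ justified by pointwise nonnegativity of the improvement term. The only minor difference is how you get that nonnegativity. You compare the PMD objective at its maximizer with its value at $\pi_t^i(\cdot\mid s)$, which is slightly more elementary than the paper's appeal to the three-point inequality and is fully sufficient here.
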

We further define the multi-start distribution mismatch coefficient
\begin{equation}
\label{eq:vartheta_def_repeat}
\vartheta_\rho
:=
\begin{cases}
\displaystyle
H
\max_{i\in[H]}
\max_{s:\rho(s)>0}
\frac{\bar d^{i,\pi^\star}_\rho(s)}{\rho(s)},
&
\begin{array}{l}
\text{if }
\bar d^{i,\pi^\star}_\rho(s)>0
\Longrightarrow
\rho(s)>0\\[-1pt]
\text{for every }(i,s)\in[H]\times\mathcal S,
\end{array}
\\[1.4em]
+\infty,
&
\text{otherwise}.
\end{cases}
\end{equation}
Thus, $\vartheta_\rho$ is finite if and only if, for every $i\in[H]$, the
support of $\bar d^{i,\pi^\star}_\rho$ is contained in the support of $\rho$;
in particular, $\vartheta_\rho$ is finite whenever $\rho$ has full support on
$\mathcal S$.
\begin{lem}
\label{lem:vartheta_t_le_vartheta_rho}
Fix $\rho\in\Delta(\mathcal S)$ and let $\vartheta_\rho$ be as defined in \eqref{eq:vartheta_def_repeat}. Then for every $t\ge 0$,
\begin{equation}
\label{eq:vartheta_t_le_vartheta_rho}
\vartheta_t \;\le\; \vartheta_\rho.
\end{equation}
\end{lem}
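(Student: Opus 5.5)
The plan is to bound each ratio in \eqref{eq:vartheta_k_multistart} directly, using the policy-independent floor of Lemma~\ref{lem:global_floor} applied to the policy $\pi_{t+1}$. The case $\vartheta_\rho=+\infty$ is trivial, so we assume $\vartheta_\rho<\infty$. By \eqref{eq:vartheta_def_repeat}, this means that for every $(i,\tilde s)$ with $\bar d^{i,\pi^\star}_\rho(\tilde s)>0$ we have $\rho(\tilde s)>0$.

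Fix such a pair $(i,\tilde s)$, which is exactly the index set of the inner maximum in the definition of $\vartheta_t$. Lemma~\ref{lem:global_floor} holds for an arbitrary policy, so it applies to $\pi_{t+1}$ and gives
\[
\bar d^{i,\pi_{t+1}}_\rho(\tilde s)\ge \frac{1}{H}\rho(\tilde s)>0 .
\]
In particular, the convention $\vartheta_t=+\infty$ is never triggered. The ratio then satisfies
\[
\frac{\bar d^{i,\pi^\star}_\rho(\tilde s)}{\bar d^{i,\pi_{t+1}}_\rho(\tilde s)}
\le H\,\frac{\bar d^{i,\pi^\star}_\rho(\tilde s)}{\rho(\tilde s)}
\le H\max_{j\in[H]}\max_{s:\rho(s)>0}\frac{\bar d^{j,\pi^\star}_\rho(s)}{\rho(s)}=\vartheta_\rho .
\]
Taking the maximum over $i$ and over admissible $\tilde s$ yields $\vartheta_t\le\vartheta_\rho$.

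There is no real obstacle here. The only point needing care is matching supports: the maximum defining $\vartheta_t$ ranges over states with $\bar d^{i,\pi^\star}_\rho(\tilde s)>0$, while the maximum defining $\vartheta_\rho$ ranges over states with $\rho(s)>0$. The finiteness condition in \eqref{eq:vartheta_def_repeat} places the former set inside the latter, and this is what lets each term of $\vartheta_t$ be compared with a term of $\vartheta_\rho$.
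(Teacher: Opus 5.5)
Your proposal is correct and follows the paper's proof essentially verbatim: the trivial case $\vartheta_\rho=+\infty$, the support condition from finiteness of $\vartheta_\rho$, Lemma~\ref{lem:global_floor} applied to $\pi_{t+1}$, and the ratio bound $H\,\bar d^{i,\pi^\star}_\rho(\tilde s)/\rho(\tilde s)\le\vartheta_\rho$. Your extra remark that the floor also rules out the $\vartheta_t=+\infty$ convention is a correct and harmless addition.
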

\begin{thm}
\label{thm:geo_multistart}
Fix $\rho\in\Delta(\mathcal S)$ and consider the global multi-start objective
$J(\pi)$ in \eqref{eq:global_multistart_J}. Let $\{\pi_t\}_{t\ge0}$ be generated from a full support initial policy
$\pi_0$ by the KL-based PMD update~\eqref{eq:PMD} with step sizes
$\{\eta_t\}_{t\ge 0}$, and let $\pi^\star$ be an optimal nonstationary policy.
Suppose $\vartheta_\rho\in(1,\infty)$ and the step sizes satisfy $\eta_0>0$
and
\begin{equation}
\label{eq:eta_growth_thm}
\eta_{t+1}\ \ge\ \frac{\vartheta_\rho}{\vartheta_\rho-1}\,\eta_t,
\qquad t=0,1,2,\dots
\end{equation}
Then,
\begin{equation}
\label{eq:linear_rate_J_multistart}
V^{\pi^\star,1}(\rho)-V^{\pi_t,1}(\rho)
\ \le\
H\Big(1-\frac{1}{\vartheta_\rho}\Big)^t
\left(
\delta_0+\frac{1}{\eta_0(\vartheta_\rho-1)}D_0^\star
\right),
\qquad \forall t\ge 0.
\end{equation}
\end{thm}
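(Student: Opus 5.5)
The plan is to run the standard Xiao-style argument on the multi-start gap $\delta_t$, with Proposition~\ref{prop:key_multistart} as the one-step inequality. At the end, Lemma~\ref{lem:standard_gap_vs_J_gap} converts the bound on $J$ into a bound on the horizon-$1$ gap.

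\textbf{Step 1: a uniform coefficient.} By Lemma~\ref{lem:vartheta_t_le_vartheta_rho}, $\vartheta_t\le\vartheta_\rho<\infty$, so Proposition~\ref{prop:key_multistart} applies at every $t$. We want $\vartheta_\rho$ in place of $\vartheta_t$. Monotonicity (Lemma~\ref{lem:global_multistart_monotone}) gives $\delta_{t+1}-\delta_t\le 0$, so $\vartheta_\rho(\delta_{t+1}-\delta_t)\le\vartheta_t(\delta_{t+1}-\delta_t)$. Hence
\[
\vartheta_\rho(\delta_{t+1}-\delta_t)+\delta_t\le \frac{1}{\eta_t}\bigl(D_t^\star-D_{t+1}^\star\bigr).
\]

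\textbf{Step 2: a contracting potential.} Dividing by $\vartheta_\rho$ and rearranging gives
\[
\delta_{t+1}+\frac{D_{t+1}^\star}{\eta_t\vartheta_\rho}\le\Bigl(1-\frac{1}{\vartheta_\rho}\Bigr)\delta_t+\frac{D_t^\star}{\eta_t\vartheta_\rho}.
\]
Write $\frac{1}{\eta_t\vartheta_\rho}=\bigl(1-\frac1{\vartheta_\rho}\bigr)\frac{1}{\eta_t(\vartheta_\rho-1)}$. Then the right-hand side equals $\bigl(1-\frac1{\vartheta_\rho}\bigr)\Phi_t$, where
\[
\Phi_t:=\delta_t+\frac{D_t^\star}{\eta_t(\vartheta_\rho-1)}.
\]
On the left, $D_{t+1}^\star\ge0$ and, by \eqref{eq:eta_growth_thm}, $\eta_{t+1}(\vartheta_\rho-1)\ge\vartheta_\rho\eta_t$. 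Therefore $\frac{D_{t+1}^\star}{\eta_{t+1}(\vartheta_\rho-1)}\le\frac{D_{t+1}^\star}{\eta_t\vartheta_\rho}$, so the left-hand side is at least $\Phi_{t+1}$. Together these give $\Phi_{t+1}\le(1-1/\vartheta_\rho)\Phi_t$. Induction then yields $\Phi_t\le(1-1/\vartheta_\rho)^t\Phi_0$.

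\textbf{Step 3: conclude.} Since $D_t^\star\ge0$, we have $\delta_t\le\Phi_t$. Multiplying by $H$ and applying Lemma~\ref{lem:standard_gap_vs_J_gap} gives \eqref{eq:linear_rate_J_multistart}.

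The main obstacle is bookkeeping rather than a new idea. The sign argument in Step~1 must replace $\vartheta_t$ by $\vartheta_\rho$ in the right direction. The growth condition must match exactly the coefficient that makes the potential telescope. Finiteness of all $D_t^\star$ must also be checked; it follows because the KL-PMD update preserves full support from the full-support $\pi_0$.
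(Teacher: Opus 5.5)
Your proposal is correct and follows the paper's proof essentially step for step. You replace $\vartheta_t$ by $\vartheta_\rho$ using the monotonicity of $\delta_t$ and Lemma~\ref{lem:vartheta_t_le_vartheta_rho}, rearrange Proposition~\ref{prop:key_multistart} into a contraction of the potential $\delta_t+D_t^\star/(\eta_t(\vartheta_\rho-1))$ via the growth condition, unroll the recursion, and convert to the horizon-$1$ gap with Lemma~\ref{lem:standard_gap_vs_J_gap}. You also verify explicitly that $D_{t+1}^\star/(\eta_{t+1}(\vartheta_\rho-1))\le D_{t+1}^\star/(\eta_t\vartheta_\rho)$ needs $D_{t+1}^\star\ge 0$, a detail the paper leaves implicit.
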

Note that, unlike Theorem~\ref{thm:GC}, which holds for all horizons
$h\in[H]$, the bound \eqref{eq:linear_rate_J_multistart} controls the
suboptimality at the initial horizon $h=1$ only. This is a consequence of the
multi-start construction, which is essential for the geometric convergence
argument; the reduction to the horizon-$1$ gap is carried out through
Lemma~\ref{lem:standard_gap_vs_J_gap}.

\paragraph{Discussion:}
The role of the random start horizon $H_0$ is to create a policy-independent
baseline mass at every horizon, and its uniform distribution is not arbitrary.
To examine this, consider the weighted generalization $\mathbb P(H_0=h)=m_h$ with
$m_h>0$ and $\sum_{h=1}^H m_h=1$, the weighted objective
$J_m(\pi):=\sum_{h=1}^H m_h\,V^{\pi,h}(\rho)$, and the weighted multi-start
visitation measure
\[
\bar d^{\,i,\pi}_{\rho,m}(s)\;:=\;\sum_{h_0=1}^{i} m_{h_0}\,
d^{h_0\to i,\pi}_\rho(s),
\qquad i\in[H],\ s\in\mathcal S.
\]
The summand $h_0=i$ equals $m_i\,\rho(s)$ for every policy $\pi$, since
initializing at horizon $i$ involves no transitions; as the remaining
summands are nonnegative, we obtain the policy-independent floor
\[
\bar d^{\,i,\pi}_{\rho,m}(s)\;\ge\; m_i\,\rho(s),
\qquad \forall i\in[H],\ \forall s\in\mathcal S,
\]
which is the only point in the analysis where the distribution of $H_0$
provides policy-independent control of the mismatch. Consequently, the
floor-based mismatch bound degrades with $1/m_{\min}$, where
$m_{\min}:=\min_{i\in[H]}m_i$. On the other hand, by horizon-wise optimality
of $\pi^\star$,
\[
J_m(\pi^\star)-J_m(\pi)
\;\ge\;
m_1\big(V^{\pi^\star,1}(\rho)-V^{\pi,1}(\rho)\big),
\]
so the reduction to the horizon-$1$ suboptimality gap incurs the factor
$1/m_1$. Non-uniform weighting therefore trades these two worst-case bounds against
each other: increasing $m_1$ beyond $1/H$ tightens the horizon-$1$ reduction,
but since $\sum_{i=1}^H m_i=1$, it forces some other weight below $1/H$ and
thereby loosens the floor at that horizon. The uniform choice $m_i=1/H$,
which recovers $J$, $\bar d^{\,i,\pi}_\rho$, and $\vartheta_\rho$, is the
unique maximizer of $m_{\min}$ over the probability simplex; it treats all
starting horizons symmetrically and prevents any single horizon from becoming
a bottleneck through an arbitrarily small weight. We emphasize that this is a
worst-case robustness rationale: it does not assert that uniform weighting
minimizes the realized mismatch ratios for every MDP.

\subsection{Horizon-only Robust Step Size Schedule} \label{subsec:robust_increasing}

Throughout this subsection, we assume $H\ge2$. The case $H=1$ is discussed at the end.

\begin{lem}(Horizon lower bound on $\vartheta_\rho$)
\label{lem:vartheta_rho_ge_H}
Suppose that $\vartheta_\rho<\infty$, where $\vartheta_\rho$ is defined in
\eqref{eq:vartheta_def_repeat}. Then $\vartheta_\rho\ge H$. Consequently,
\begin{equation}
\label{eq:ratio_vartheta_by_H}
\frac{\vartheta_\rho}{\vartheta_\rho-1}
\le
\frac{H}{H-1}.
\end{equation}
\end{lem}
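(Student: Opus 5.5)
We need to show $\vartheta_\rho \ge H$. Take $i=H$. Since $\bar d^{H,\pi^\star}_\rho(\mathcal S)=H/H=1$ (by the footnote, $\bar d^{i,\pi}_\rho(\mathcal S)=i/H$), we have
\[
\vartheta_\rho \;\ge\; H\max_{s:\rho(s)>0}\frac{\bar d^{H,\pi^\star}_\rho(s)}{\rho(s)}.
\]
Finiteness of $\vartheta_\rho$ gives support inclusion: $\bar d^{H,\pi^\star}_\rho(s)>0$ implies $\rho(s)>0$. Hence
\[
1=\sum_{s}\bar d^{H,\pi^\star}_\rho(s)=\sum_{s:\rho(s)>0}\frac{\bar d^{H,\pi^\star}_\rho(s)}{\rho(s)}\,\rho(s)\le \max_{s:\rho(s)>0}\frac{\bar d^{H,\pi^\star}_\rho(s)}{\rho(s)}\cdot\sum_s\rho(s),
\]
and the last sum is $1$. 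So the maximum ratio is at least $1$, and therefore $\vartheta_\rho\ge H$.

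To justify the total mass claim, I would verify $\bar d^{H,\pi^\star}_\rho(\mathcal S)=1$ directly from \eqref{eq:bar_d_global}: each $d^{h_0\to H,\pi}_\rho$ is a probability distribution, so
\[
\bar d^{H,\pi^\star}_\rho(\mathcal S)=\frac1H\sum_{h_0=1}^H 1=1.
\]
An alternative route for the lower bound uses only the $h_0=H$ summand, which gives ratio at least $1/H$ and hence only $\vartheta_\rho\ge 1$. That is too weak, so the averaging argument over the full mass at $i=H$ is the key step.

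For \eqref{eq:ratio_vartheta_by_H}, note that $x\mapsto x/(x-1)=1+1/(x-1)$ is decreasing on $(1,\infty)$. Since $\vartheta_\rho\ge H\ge 2>1$, this gives $\vartheta_\rho/(\vartheta_\rho-1)\le H/(H-1)$.

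The only subtle point, and the one I expect to need care, is the use of support inclusion from finiteness of $\vartheta_\rho$. Without it, mass of $\bar d^{H,\pi^\star}_\rho$ outside $\operatorname{supp}\rho$ would escape the weighted average and break the inequality.
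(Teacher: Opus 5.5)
Your proof is correct and is essentially the paper's argument. The paper uses the support inclusion implied by $\vartheta_\rho<\infty$ to show that the $\rho$-average of $\bar d^{i,\pi^\star}_\rho/\rho$ equals $i/H$ for every $i$, then maximizes over $i$; your direct specialization to $i=H$ is that same step, and the concluding use of the monotonicity of $x\mapsto x/(x-1)$ on $(1,\infty)$ with $H\ge 2$ is identical.
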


\begin{rem}(Upper bounds on $\vartheta_\rho$)
\label{rem:vartheta_upper_bound}
Lemma~\ref{lem:vartheta_rho_ge_H} gives a lower bound on
$\vartheta_\rho$ (see Appendix~\ref{app:linrobust}). For an upper bound, observe that
\[
\bar d^{i,\pi^\star}_\rho(s)
\le
\sum_{\tilde s\in\mathcal S}
\bar d^{i,\pi^\star}_\rho(\tilde s)
=
\frac{i}{H}
\le 1.
\]
Definition~\eqref{eq:vartheta_def_repeat} therefore gives
\[
\vartheta_\rho
\le
\frac{H}{\min_{s:\,\rho(s)>0}\rho(s)}.
\]

Moreover, $\rho$ is used only in the analysis and is not an input to the PMD
update \eqref{eq:PMD}, which is applied independently to every $(h,s)$. Thus,
the same policy sequence can be analyzed using
$\rho=\mathrm{Unif}(\mathcal S)$. For this choice, the support condition in
\eqref{eq:vartheta_def_repeat} holds automatically and
$\vartheta_\rho\le H|\mathcal S|$. A bound under any initial distribution
$\mu$ then follows from the uniform-$\rho$ bound with an additional factor
\[
\max_{s\in\mathcal S}\frac{\mu(s)}{\rho(s)}
\le
|\mathcal S|,
\]
because the per-state gaps
$V^{\pi^\star,1}(s)-V^{\pi_t,1}(s)$ are nonnegative. Thus, $\vartheta_\rho$ has a polynomial upper bound in $H$ and
$|\mathcal S|$. Under the uniform choice of $\rho$, the resulting worst-case
contraction bound depends on $|\mathcal S|$. Obtaining sharper problem-dependent bounds on
$\vartheta_\rho$ remains open.
\end{rem}

\begin{cor}(Horizon-only robust step size schedule)
\label{cor:horizon_only_stepsize}
Let $\{\pi_t\}_{t\ge0}$ be generated from a full support initial policy by the
KL-based PMD update \eqref{eq:PMD}, and let $\pi^\star$ be an optimal
nonstationary policy. Let $\delta_0$ and $D_0^\star$ be defined as in
\eqref{eq:delta_with_J} and \eqref{eq:Dtstar_global}, respectively. Suppose
that $\vartheta_\rho<\infty$. For any $\eta_0>0$, the horizon-only step size
schedule
\begin{equation}
\label{eq:horizon_only_explicit_schedule}
\eta_t
=
\eta_0\left(\frac{H}{H-1}\right)^t,
\qquad t=0,1,2,\ldots,
\end{equation}
satisfies the growth condition \eqref{eq:eta_growth_thm}. Therefore, the
geometric bound \eqref{eq:linear_rate_J_multistart} in
Theorem~\ref{thm:geo_multistart} holds for every $t\ge0$. In particular, if
$\vartheta_\rho=H$, the smallest value permitted by
Lemma~\ref{lem:vartheta_rho_ge_H}, then
\begin{equation}
\label{eq:horizon_only_best_case_bound}
V^{\pi^\star,1}(\rho)-V^{\pi_t,1}(\rho)
\le
H\left(1-\frac{1}{H}\right)^t
\left(
\delta_0+
\frac{D_0^\star}{\eta_0(H-1)}
\right),
\qquad t\ge0.
\end{equation}
\end{cor}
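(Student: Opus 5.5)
The corollary reduces directly to Theorem~\ref{thm:geo_multistart}. The only hypothesis that needs checking is the growth condition \eqref{eq:eta_growth_thm}; the rest is substitution. The plan has three steps, carried out in order.

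\textbf{Step 1: check the hypotheses of Theorem~\ref{thm:geo_multistart}.} We assume $\vartheta_\rho<\infty$ and $H\ge2$. Lemma~\ref{lem:vartheta_rho_ge_H} then gives $\vartheta_\rho\ge H\ge 2>1$, so $\vartheta_\rho\in(1,\infty)$ as the theorem requires. The initial step size $\eta_0>0$ is given. Every $\eta_t=\eta_0(H/(H-1))^t$ is finite and positive. The full-support initialization is part of the corollary's assumptions.

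\textbf{Step 2: verify the growth condition.} The schedule \eqref{eq:horizon_only_explicit_schedule} satisfies
\[
\eta_{t+1}=\frac{H}{H-1}\,\eta_t .
\]
Inequality \eqref{eq:ratio_vartheta_by_H} in Lemma~\ref{lem:vartheta_rho_ge_H} gives $\frac{H}{H-1}\ge \frac{\vartheta_\rho}{\vartheta_\rho-1}$. Together with $\eta_t>0$, this yields $\eta_{t+1}\ge \frac{\vartheta_\rho}{\vartheta_\rho-1}\eta_t$, which is \eqref{eq:eta_growth_thm}. If one wants to spell out the ratio inequality, it follows because $x\mapsto x/(x-1)$ is decreasing on $(1,\infty)$ and $\vartheta_\rho\ge H$.

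\textbf{Step 3: apply the theorem and specialize.} Theorem~\ref{thm:geo_multistart} now applies and yields \eqref{eq:linear_rate_J_multistart} for all $t\ge0$. In the best case $\vartheta_\rho=H$, we substitute this value into \eqref{eq:linear_rate_J_multistart}. The contraction factor becomes $(1-1/H)^t$ and the coefficient becomes $1/(\eta_0(H-1))$, which gives exactly \eqref{eq:horizon_only_best_case_bound}.

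There is no genuine obstacle here. The only point needing care is the monotonicity of $x/(x-1)$ behind the ratio inequality, which Lemma~\ref{lem:vartheta_rho_ge_H} already supplies. One should also confirm $\vartheta_\rho>1$, so that the theorem's open-interval hypothesis holds; this is what uses $H\ge2$.
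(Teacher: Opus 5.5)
Your proposal is correct and follows essentially the same route as the paper, which handles this corollary through Lemma~\ref{lem:horizon_only_growth_sufficient}: it uses Lemma~\ref{lem:vartheta_rho_ge_H} to obtain $\vartheta_\rho\ge H\ge 2$ and $\vartheta_\rho/(\vartheta_\rho-1)\le H/(H-1)$, deduces the growth condition \eqref{eq:eta_growth_thm} and the remaining hypotheses of Theorem~\ref{thm:geo_multistart}, and then substitutes $\vartheta_\rho=H$. Your explicit check that $\vartheta_\rho\in(1,\infty)$ matches the paper's verification.
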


The growth condition \eqref{eq:eta_growth_thm} depends on the generally
unknown coefficient $\vartheta_\rho$ and therefore cannot be implemented
directly. Corollary~\ref{cor:horizon_only_stepsize} removes this dependence.
By Lemma~\ref{lem:vartheta_rho_ge_H}, the multiplier $H/(H-1)$ upper-bounds
$\vartheta_\rho/(\vartheta_\rho-1)$ whenever $\vartheta_\rho<\infty$.
Therefore, the schedule \eqref{eq:horizon_only_explicit_schedule} depends only
on the known horizon length and satisfies the required growth condition
without problem-dependent tuning.

We next compare this result with classical dynamic programming. When the
dynamics are known, backward induction computes $\pi^\star$ exactly in one
pass over the horizons. Thus, Theorem~\ref{thm:geo_multistart}, whose
iteration complexity is of order
$\vartheta_\rho\log(1/\varepsilon)\ge H\log(1/\varepsilon)$ $(\text{where}\; 0<\epsilon<1)$, does not provide
a computational advantage over backward induction. Indeed, since
$\eta_t\to\infty$ under \eqref{eq:horizon_only_explicit_schedule}, each PMD
update tends toward greedy policy improvement with respect to the current
action-values.

Our analysis has a different purpose. NPG/PMD replaces the greedy
maximization in dynamic programming with the smooth multiplicative update
\eqref{eq:NPG}. For a finite step size, this update changes continuously with
$Q^{\pi_t,h}$, while a small change in the action-values can change the greedy
policy abruptly. This smooth policy update motivates TRPO/PPO-style methods
and can be implemented using samples. In infinite-horizon discounted MDPs,
inexact PMD analyses show how errors in estimating the action-values enter
the convergence bound \citep{lan2023policy,xiao2022convergence}. We analyze
the iteration complexity of the exact update in finite-horizon MDPs and leave
the corresponding inexact analysis for future work.

The assumption $H\ge2$ is required only for the mismatch-based growth
condition. When $H=1$, the multi-start objective reduces to the original
objective, $\bar d^{1,\pi}_\rho=\rho$, and $\vartheta_\rho=1$, so the ratio
$\vartheta_\rho/(\vartheta_\rho-1)$ is undefined: in this single-step setting
there is no coupling across horizons; the distribution-mismatch mechanism is
absent, and an increasing step size regime is not the appropriate tool.

\section{Simulation} \label{sec:Simulation}
In this section, we perform simulations to illustrate the sublinear and linear convergence behavior of the NPG algorithm under constant and increasing step sizes, and to validate the predicted scaling behavior of our theoretical bounds. For any horizon $h\in[H]$ and iteration $t$, we measure the suboptimality gap as
\[
\text{Error}(t,h) \;=\; \frac{1}{|\mathcal S|}\sum_{s \in \mathcal{S}} \left( V^{\pi^\star,h}(s) - V^{\pi_t,h}(s) \right).
\]
\subsection{Sublinear convergence with constant step size} \label{subsec:constant}
We construct a randomly generated finite-horizon MDP with $|\mathcal S|=15$, $|\mathcal A|=4$, and $H=7$. 
\begin{figure}[!htb]
\centering

\begin{subfigure}[t]{0.49\columnwidth}
\centering
\begin{tikzpicture}
\begin{semilogyaxis}[
    width=1.00\linewidth,
    height=0.72\linewidth,
    xlabel={Iterations ($t$)},
    ylabel={Error($t,1$)},
    grid=both,
    grid style={dotted},
    legend pos=north east,
    legend cell align={left},
    legend style={font=\scriptsize},
    tick label style={font=\scriptsize},
    label style={font=\scriptsize},
    line join=round,
    line cap=round
]
\addplot+[line width=0.80pt, no marks] table[col sep=comma, x=t, y=actual_sum_gap] {constant_h=1.csv};
\addlegendentry{Error($t,1$)}
\addplot+[line width=0.80pt, dashed, no marks] table[col sep=comma, x=t, y=theory_fixed_state_bound] {constant_h=1.csv};
\addlegendentry{$2H^2/t$}
\end{semilogyaxis}
\end{tikzpicture}
\caption{NPG sublinear convergence at $h=1$.}
\label{fig:constant_h=1}
\end{subfigure}
\hfill
\begin{subfigure}[t]{0.49\linewidth}
\centering
\begin{tikzpicture}
\begin{semilogyaxis}[
    width=\linewidth,
    height=0.72\linewidth,
    xlabel={Horizon ($h$)},
    ylabel={Error$(T,h)$},
    grid=both,
    grid style={dotted},
    legend pos=south west,
    legend cell align={left},
    legend style={font=\scriptsize},
    tick label style={font=\scriptsize},
    label style={font=\scriptsize},
    line join=round,
    line cap=round
]
\addplot+[blue, line width=0.80pt, no marks] 
table[col sep=comma, x=h, y=actual_sum_gap_T_h]
{constant_all_h.csv};
\addlegendentry{Error$(T,h)$}

\addplot+[red, dashed, line width=0.80pt, no marks] 
table[col sep=comma, x=h, y=theory_bound_T_h]
{constant_all_h.csv};
\addlegendentry{$\frac{2(H-h+1)^2}{T}$}

\end{semilogyaxis}
\end{tikzpicture}
\caption{Horizon-wise error at fixed $T=100$ (all $h$).}
\label{fig:tabular_all_h}
\end{subfigure}
\caption{Constant step size NPG on a randomly generated finite-horizon tabular MDP: (a) error vs.\ iteration at $h=1$; (b) error vs.\ all horizons at fixed iteration $T=100$, where $x$-axis is on linear scale and $y$-axis is on logarithmic scale. In the plots, the solid (blue) curve shows the empirical error, and the dashed (red) curve shows the theoretical upper bound.}
\label{fig:tabular-S15A4H7-subfig}
\end{figure}

In Figure~\ref{fig:constant_h=1}, we fix $h=1$, initialize $\pi_0$ uniformly, and run NPG for $T=500$ iterations with constant step size $\eta=\log(|\mathcal A|)/H$. We plot the empirical gap together with the theoretical upper bound $2H^2/t$ from Theorem~\ref{thm:GC}. The empirical error decreases monotonically and remains below the theoretical curve, exhibiting the predicted $\mathcal{O}(1/t)$ scaling; the gap between the two is expected, as the bound is a worst-case guarantee.

In Figure~\ref{fig:tabular_all_h}, we instead fix the iteration to $T=100$ and evaluate horizon-wise performance. Starting from the same uniform initialization, we run NPG with the per-horizon step sizes $\eta^h=\log(|\mathcal A|)/(H-h+1)$ of Theorem~\ref{thm:GC} and plot $\text{Error}(T,h)$ against $h$, together with the corresponding bound $2(H-h+1)^2/T$. As $h$ increases, both the empirical gap and the bound shrink, since fewer future rewards remain to be optimized and the bound scales with $(H-h+1)^2$ at fixed $T$.

\subsection{Geometric convergence with increasing step size} \label{subsec:increasing}
We next examine the geometric convergence guarantee of
Theorem~\ref{thm:geo_multistart}.  To obtain an instance with a mismatch
coefficient known in closed form, we consider an MDP with
$|\mathcal{S}|=15$, $|\mathcal{A}|=5$, and $H=7$.
The rewards are drawn independently from the uniform distribution on
$[0,1]$. Across all horizons, we use the same action-independent transition
kernel,
\[
P^h(s' \mid s, a)
\;=\;
0.7 \cdot \mathbf{1}\{s' = s\}
\;+\;
0.3 \cdot \mathbf{1}\{s' = s + 1 \ (\mathrm{mod}\ |\mathcal{S}|)\},
\]
\FloatBarrier
\begin{figure}[!htb]
\centering

    \begin{subfigure}[t]{0.48\linewidth}
        \centering
        \begin{tikzpicture}
            \begin{semilogyaxis}[
                width=\linewidth,
                height=0.70\linewidth,
                xlabel={Iterations $t$},
                ylabel={Error$(t,1)$},
                xmin=0, xmax=50,
                xtick={0,10,20,30,40,50},
                ymin=1e-6, ymax=50,
                ytick={1e-6,1e-4,1e-2,1,10},
                grid=both,
                  grid style={dotted},
    legend pos=south west,
    legend cell align={left},
    legend style={font=\scriptsize},
    tick label style={font=\scriptsize},
    label style={font=\scriptsize},
    line join=round,
    line cap=round
            ]
            \addplot+[blue, line width=0.85pt, no marks]
                table[col sep=comma, x=t, y=horizon1_distributional_gap]
                {increasing_h1_vartheta_H.csv};
            \addlegendentry{Error$(t,1)$}

            \addplot+[red, dashed, line width=0.85pt, no marks]
                table[col sep=comma, x=t, y=best_case_theorem_bound]
                {increasing_h1_vartheta_H.csv};
            \addlegendentry{$(1-\frac{1}{H})^t$}
            \end{semilogyaxis}
        \end{tikzpicture}
        \caption{NPG linear convergence at $h=1$ $(\vartheta_\rho=H)$.}
        \label{fig:increasing_h=1}
    \end{subfigure}
\hspace{0.02\linewidth}%
    \begin{subfigure}[t]{0.48\linewidth}
        \centering
        \begin{tikzpicture}
            \begin{semilogyaxis}[
                width=\linewidth,
                height=0.70\linewidth,
                xlabel={Horizon $(h)$},
                ylabel={Error$(T,h)$},
                xmin=1, xmax=6,
                xtick={1,2,3,4,5,6},
                ymin=1e-16, ymax=1e2,
                ytick={1e-16,1e-12,1e-8,1e-4,1,1e2},
                grid=both,
               grid style={dotted},
    legend pos=south west,
    legend cell align={left},
    legend style={font=\scriptsize},
    tick label style={font=\scriptsize},
    label style={font=\scriptsize},
    line join=round,
    line cap=round
            ]
            \addplot+[blue, line width=0.85pt, no marks]
                table[col sep=comma, x=h, y=actual_error_sum_T_h_plot]
                {increasing_horizonwise_vartheta_m_h_T20.csv};
            \addlegendentry{Error$(T,h)$}

            \addplot+[red, dashed, line width=0.85pt, no marks]
                table[col sep=comma, x=h, y=theory_bound_T_h]
                {increasing_horizonwise_vartheta_m_h_T20.csv};
            \addlegendentry{$(\frac{H-h}{H-h+1})^t$}
            \end{semilogyaxis}
        \end{tikzpicture}
        \caption{Horizon-wise error at fixed $T=20$ $(\text{all}\; h)$.}
        \label{fig:inc_horizonwise_vartheta_m_h_T20}
    \end{subfigure}

\caption{Increasing step size NPG on the structured instance with
$\vartheta_\rho = H$, under the horizon-only robust schedule of
Corollary~\ref{cor:horizon_only_stepsize}: (a) error vs.\ iteration at $h=1$; (b) error vs.\ all horizons at fixed iteration $T=20$. Both plots use linear $x$-axis and log-scale $y$-axis. The solid (blue) curve shows the empirical error, and the dashed (red) curve shows the corresponding geometric reference curve, which is undefined at $h=H$ where $\vartheta_\rho=1$.}
\label{fig:incsteps_row}
\end{figure}
This kernel is doubly stochastic, and hence the uniform initial-state
distribution $\rho$ is invariant under every policy. Consequently, the
optimal multi-start visitation measures satisfy $\bar d^{\,h,\pi^\star}_\rho
=
\frac{h}{H}\rho, h\in[H]$ which implies $\vartheta_\rho=H$. Thus, this instance attains the lower
bound in Lemma~\ref{lem:vartheta_rho_ge_H}.

In Figure~\ref{fig:increasing_h=1}, we consider the objective starting at
$h=1$. We initialize $\pi_0$ uniformly and run NPG for $T=50$ iterations
using the horizon-only schedule from
Corollary~\ref{cor:horizon_only_stepsize}. Because $\vartheta_\rho=H$, the contraction factor in
Theorem~\ref{thm:geo_multistart} becomes $1-1/H$. The empirical error
decreases geometrically and remains below the theoretical bound
evaluated in this instance, consistent with the geometric convergence
guaranteed by the theorem.

In Figure~\ref{fig:inc_horizonwise_vartheta_m_h_T20}, we fix $T=20$ and compare $\text{Error}(T,h)$ across all horizons $h\in[H-1]$. For each starting horizon
$h$, we regard the tail beginning at $h$ as a separate finite-horizon MDP
with effective horizon $H-h+1$. We initialize a separate uniform policy on each tail and use the
corresponding horizon-only robust schedule $\eta^{h}_{t+1}=\frac{H-h+1}{H-h}\,\eta^{h}_t$ of Corollary~\ref{cor:horizon_only_stepsize}. Because every tail retains the same uniform-invariant,
action-independent transition kernel, its multi-start visitation measures
satisfy $\bar d^{\,h,\pi^\star}_{\rho}=\frac{h}{H-h+1}\rho, h\in[H-h+1]$, and its mismatch coefficient is therefore exactly
$\vartheta_{\rho}=H-h+1$. Each data point is directly covered by the best case of Theorem~\ref{thm:geo_multistart} applied
to the corresponding tail MDP. The empirical errors and their theoretical upper bounds become
smaller for later starting horizons, reflecting the shorter remaining
horizon. We omit $h=H$ because the corresponding tail has effective horizon $1$. In this case,
$\vartheta_{\rho}=1$, and the growth factor is undefined;
thus, the increasing step size regime does not apply (see Section~\ref{subsec:robust_increasing}). We repeat both experiments on a
randomly generated MDP, where $\vartheta_\rho$ is evaluated directly from
the multi-start visitation measures and
exceeds its horizon-based lower bound in the generated instance, confirming
that the guarantee is not an artifact of the structured instance (see Appendix~\ref{app:more_simulations}).

\section{Conclusion}\label{sec:conclusion}
In this paper, we have presented the finite-time analysis for exact NPG in
finite-horizon MDPs. For tabular MDPs, we have proved that NPG with a constant step size
achieves the sublinear rate $\mathcal{O}(H^2/t)$. The same rate holds for
linear MDPs under an exact population-projection oracle with a full support
projection distribution, where linear Q-NPG induces the same policy-space
update as tabular NPG. For tabular MDPs, we have established geometric convergence of the horizon-$1$
optimality gap under increasing step sizes, for both the problem-dependent
schedule and the horizon-only robust schedule. 

Regarding future directions, an important next step is to develop sample-based
finite-time guarantees for these results. Another direction is to establish lower bounds that
determine whether the $H^2/t$ rate in Theorem~\ref{thm:GC} and the $H$-factor losses in the multi-start reduction are necessary.

\bibliography{main}
\bibliographystyle{plainnat}

\clearpage
\appendix

\begin{center}
    {\LARGE\bfseries Appendix}
\end{center}

We recall that for a finite-horizon MDP, the state visitation distribution captures the probability of visiting state $\tilde{s}$ at horizon $i$ given that the process starts from $s$ at horizon $h$ and follows policy $\pi$ thereafter in Eq.~\eqref{eq:SVD}. This distribution satisfies the following basic properties:
\begin{itemize}
     \item Initialization: Since the process must be in the state $s$ at the horizon $h$, 
    \[
    d^{h \to h, \pi}_s(\tilde{s}) = \mathbbm{1}\{s=\tilde{s}\}.
    \]
    \item Normalization: At every horizon $i \in [h,H]$,
    \[
    \sum_{\tilde{s}} d^{h \to i, \pi}_s(\tilde{s}) \;=\; 1, \qquad \forall i \ge h.
    \]
\end{itemize}
\section{Sublinear Convergence: Supporting Lemmas and Proofs} \label{app:sublin}
\subsection{MDP Setting Proofs} \label{app:Tabularlem}
In this appendix, we provide detailed algebraic derivations and proofs that support Theorem~\ref{thm:GC}. First, we give the detailed proof of Performance Difference Lemma~\ref{lem:PDL}. 

\begin{proof}[Proof of Lemma \ref{lem:PDL}]
Using the telescoping argument together with the tower property of conditional expectations, we derive
\begin{align*}
V^{\pi,h}(s) & - V^{\pi',h}(s)
=
\E\!\left[\sum_{i=h}^{H} R^i(S^i,A^i)\mid
S^h=s,\ A^j\sim \pi^j(\cdot\mid S^j),\ \forall j=h,\ldots,H
\right]
- V^{\pi',h}(s) \\
&=
\E\!\left[\sum_{i=h}^{H} R^i(S^i,A^i) - V^{\pi',h}(S^h)\mid
S^h=s,\ A^j\sim \pi^j(\cdot\mid S^j)
\right] \\
&\overset{(a)}{=}
\E\!\left[\sum_{i=h}^{H}
\Big(
R^i(S^i,A^i) + V^{\pi',i+1}(S^{i+1}) - V^{\pi',i}(S^i)
\Big)
\mid
S^h=s,\ A^j\sim \pi^j(\cdot\mid S^j)
\right] \\
&\overset{(b)}{=}
\E\!\left[\sum_{i=h}^{H}
\Big(
R^i(S^i,A^i) + \E\big[V^{\pi',i+1}(S^{i+1})\mid S^i,A^i\big] - V^{\pi',i}(S^i)
\Big)
\mid
S^h=s,\ A^j\sim \pi^j(\cdot\mid S^j)
\right] \\
&\overset{(c)}{=}
\E\!\left[\sum_{i=h}^{H} A^{\pi',i}(S^i,A^i)\mid
S^h=s,\ A^j\sim \pi^j(\cdot\mid S^j)
\right] \\
&=
\sum_{i=h}^{H}\sum_{\tilde s,\tilde a}
\mathbb{P}\left(S^i=\tilde s,\, A^i=\tilde a \mid
S^h=s,\ A^j\sim \pi^j(\cdot\mid S^j),\ \forall j=h,\ldots,i
\right)
A^{\pi',i}(\tilde s,\tilde a) \\
&\overset{(d)}{=}
\sum_{i=h}^{H}\sum_{\tilde s,\tilde a}
\mathbb{P}\left(S^i=\tilde s \mid
S^h=s,\ A^j\sim \pi^j(\cdot\mid S^j),\ \forall j=h,\ldots,i-1
\right)
\pi^i(\tilde a\mid \tilde s)\,
A^{\pi',i}(\tilde s,\tilde a) \\
&\overset{(e)}{=}
\sum_{i=h}^{H}\sum_{\tilde s,\tilde a}
d^{h\to i,\pi}_s(\tilde s)\,\pi^i(\tilde a\mid \tilde s)\,A^{\pi',i}(\tilde s,\tilde a).
\end{align*}
Here (a) uses the telescoping identity
$\sum_{i=h}^{H}\big(V^{\pi',i+1}(S^{i+1})-V^{\pi',i}(S^i)\big)
= -V^{\pi',h}(S^h)+V^{\pi',H+1}(S^{H+1})$
together with the convention $V^{\pi',H+1}\equiv 0$,
(b) applies the tower property of conditional expectations,
(c) uses
$A^{\pi',i}(s,a)=Q^{\pi',i}(s,a)-V^{\pi',i}(s)$ with
$Q^{\pi',i}(s,a)=R^i(s,a)+\E\big[V^{\pi',i+1}(S^{i+1})\mid S^i=s,A^i=a\big]$,
(d) factorizes the joint law of $(S^i,A^i)$ as
$\mathbb{P}(S^i=\tilde s)\,\pi^i(\tilde a\mid\tilde s)$, and
(e) substitutes the definition of the state visitation distribution in Eq.~\eqref{eq:SVD}.
\end{proof}

\begin{lem}\label{lemma:Vk}
Fix a horizon $h\in[H]$. Let $\{\pi_t\}_{t=0}^{T}$ be a sequence of policies such that
\[
V^{\pi_{t+1},h}(s)\;\ge\;V^{\pi_t,h}(s)\quad\text{for all }s\in\mathcal S\text{ and }t=0,\dots,T-1,
\]
and let $\pi^\star$ be an optimal policy, i.e., $V^{\pi^\star,h}(s)\ge V^{\pi,h}(s)$ for all $s\in\mathcal S$ and all policies $\pi$.
Then, for every $t\le T$ and every $s\in\mathcal S$,
\[
V^{\pi^\star,h}(s)-V^{\pi_t,h}(s)\;\ge\;V^{\pi^\star,h}(s)-V^{\pi_T,h}(s).
\]
\end{lem}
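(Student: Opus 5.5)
The claim follows from monotonicity by chaining. I will prove the stronger intermediate fact that $V^{\pi_t,h}(s)\le V^{\pi_T,h}(s)$ for every $t\le T$ and every $s\in\mathcal S$. Subtracting both sides from $V^{\pi^\star,h}(s)$ then reverses the inequality and yields the statement directly.

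For the intermediate fact, fix $s$ and $t\le T$. If $t=T$ there is nothing to prove. Otherwise, I will use the telescoping identity
\[
V^{\pi_T,h}(s)-V^{\pi_t,h}(s)=\sum_{k=t}^{T-1}\big(V^{\pi_{k+1},h}(s)-V^{\pi_k,h}(s)\big).
\]
Each summand is nonnegative by hypothesis, since $k$ ranges over $\{t,\dots,T-1\}\subseteq\{0,\dots,T-1\}$. Hence the sum is nonnegative. Equivalently, one can run induction on $T-t$, using the single-step hypothesis in the inductive step.

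Finally, I will write
\[
V^{\pi^\star,h}(s)-V^{\pi_t,h}(s)=\big(V^{\pi^\star,h}(s)-V^{\pi_T,h}(s)\big)+\big(V^{\pi_T,h}(s)-V^{\pi_t,h}(s)\big).
\]
The second bracket is nonnegative by the previous step, which gives the claim.

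The optimality of $\pi^\star$ is not needed for this inequality. It only serves to make both sides nonnegative gaps, which is how the lemma is used together with Lemma~\ref{lem:ILB} in Theorem~\ref{thm:GC}. There is no real obstacle here. The only care needed is to index the telescoping range correctly so that the hypothesis is applied only for $k\le T-1$.
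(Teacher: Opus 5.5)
Your proof is correct and matches the paper's argument: the paper also chains the one-step monotonicity along $t,t+1,\dots,T$ to get $V^{\pi_T,h}(s)\ge V^{\pi_t,h}(s)$, then negates it and adds $V^{\pi^\star,h}(s)$, and your telescoping sum is just the explicit form of that chaining. Your remark that the optimality of $\pi^\star$ plays no role in the inequality itself is also accurate.
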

\begin{proof}
Applying the monotonicity assumption repeatedly along $t,t+1,\dots,T$ gives $V^{\pi_T,h}(s)\ge V^{\pi_t,h}(s)$. Multiplying $(-1)$ in both sides of the inequality and adding $V^{\pi^\star,h}(s)$ to both sides yields
\[
V^{\pi^\star,h}(s)-V^{\pi_t,h}(s)\;\ge\;V^{\pi^\star,h}(s)-V^{\pi_T,h}(s);
\]
that is, the suboptimality gap is nonincreasing along the sequence $\{\pi_t\}_{t=0}^{T}$, which proves the claim.
\end{proof}

\begin{lem}{(Nonnegativity of the KL divergence)}\label{lem:KL}
For any distributions $\pi(\cdot\mid s)$ and $\pi'(\cdot\mid s)$ over a finite action set
$\mathcal A$, the KL divergence satisfies
\[
D_{\mathrm{KL}}\big(\pi(\cdot\mid s)\mid\mid\pi'(\cdot\mid s)\big)\ge 0.
\]
If $\pi(\cdot\mid s)$ is not absolutely continuous with respect to $\pi'(\cdot\mid s)$, then
the KL divergence is defined as $+\infty$, and the inequality is immediate.
\end{lem}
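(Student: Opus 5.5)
The plan is to prove the statement directly from the definition of the KL divergence by Jensen's inequality applied to the concave function $\log$, i.e., the Gibbs inequality. The first step disposes of the degenerate case. If $\pi(\cdot\mid s)$ is not absolutely continuous with respect to $\pi'(\cdot\mid s)$, then there is some $a$ with $\pi(a\mid s)>0$ and $\pi'(a\mid s)=0$. By the stated convention the divergence equals $+\infty\ge 0$, so nothing remains to show.

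In the main case, absolute continuity holds. Let $\mathcal A_+:=\{a\in\mathcal A:\pi(a\mid s)>0\}$ denote the support of $\pi(\cdot\mid s)$. With the usual convention $0\log 0=0$, the divergence is
\[
D_{\mathrm{KL}}\big(\pi(\cdot\mid s)\mid\mid\pi'(\cdot\mid s)\big)=\sum_{a\in\mathcal A_+}\pi(a\mid s)\log\frac{\pi(a\mid s)}{\pi'(a\mid s)},
\]
and every term is finite because $\pi'(a\mid s)>0$ on $\mathcal A_+$. Multiplying by $-1$, this becomes an expectation under $\pi(\cdot\mid s)$ of $\log\big(\pi'(a\mid s)/\pi(a\mid s)\big)$.

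Since $\log$ is concave, Jensen's inequality bounds this expectation by
\[
\log\sum_{a\in\mathcal A_+}\pi'(a\mid s)\le\log 1=0,
\]
and the claim follows. The only delicate point is the bookkeeping of zero-probability actions. Restricting the sum to $\mathcal A_+$ ensures that Jensen's inequality is applied to a genuine probability distribution with finite arguments. The final inequality then uses $\sum_{a\in\mathcal A_+}\pi'(a\mid s)\le 1$, rather than requiring equality.

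If one prefers to avoid Jensen's inequality, the elementary bound $\log x\le x-1$ gives the same conclusion term by term:
\[
\sum_{a\in\mathcal A_+}\pi(a\mid s)\log\frac{\pi'(a\mid s)}{\pi(a\mid s)}\le\sum_{a\in\mathcal A_+}\big(\pi'(a\mid s)-\pi(a\mid s)\big)\le 0.
\]
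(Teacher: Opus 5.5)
Your proof is correct and follows the paper's argument: you dispose of the non-absolutely-continuous case by the $+\infty$ convention, restrict the sum to the support of $\pi(\cdot\mid s)$, and apply Jensen's inequality to obtain the bound $\log\sum_{a\in\mathcal A_+}\pi'(a\mid s)\le\log 1=0$. The paper does the same, phrasing Jensen with the convex function $-\log$, and your alternative via $\log x\le x-1$ is also valid.
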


\begin{proof}
If $\pi(\cdot\mid s)$ is not absolutely continuous with respect to $\pi'(\cdot\mid s)$, then
there exists an action $a\in\mathcal A$ such that
\[
\pi(a\mid s)>0
\qquad\text{and}\qquad
\pi'(a\mid s)=0.
\]
In this case,
\[
D_{\mathrm{KL}}\big(\pi(\cdot\mid s)\mid\mid\pi'(\cdot\mid s)\big)=+\infty,
\]
and the claim is immediate.

Otherwise, assume
\[
\pi(\cdot\mid s)\ll \pi'(\cdot\mid s),
\]
meaning that $\pi(a\mid s)>0$ implies $\pi'(a\mid s)>0$ for every $a\in\mathcal A$. Hence all
ratios below are well-defined on the support of $\pi(\cdot\mid s)$. We also use the standard
convention
\[
0\log\frac{0}{q}=0,
\qquad q\ge0,
\]
so actions with $\pi(a\mid s)=0$ contribute zero to the KL divergence.

Let
\[
\operatorname{supp}(\pi(\cdot\mid s))
:=
\{a\in\mathcal A:\pi(a\mid s)>0\}.
\]
From the definition of the KL divergence,
\begin{align*}
D_{\mathrm{KL}}\big(\pi(\cdot\mid s)\mid\mid\pi'(\cdot\mid s)\big)
&=
\sum_{a\in \operatorname{supp}(\pi(\cdot\mid s))}
\pi(a\mid s)
\log\frac{\pi(a\mid s)}{\pi'(a\mid s)} \\
&=
\sum_{a\in \operatorname{supp}(\pi(\cdot\mid s))}
\pi(a\mid s)
\left(
-\log\frac{\pi'(a\mid s)}{\pi(a\mid s)}
\right).
\end{align*}
Since $x\mapsto-\log x$ is convex, Jensen's inequality gives
\begin{align*}
D_{\mathrm{KL}}\big(\pi(\cdot\mid s)\mid\mid\pi'(\cdot\mid s)\big)
&\ge
-\log\left(
\sum_{a\in \operatorname{supp}(\pi(\cdot\mid s))}
\pi(a\mid s)
\frac{\pi'(a\mid s)}{\pi(a\mid s)}
\right) \\
&=
-\log\left(
\sum_{a\in \operatorname{supp}(\pi(\cdot\mid s))}
\pi'(a\mid s)
\right).
\end{align*}
Since
\[
\sum_{a\in \operatorname{supp}(\pi(\cdot\mid s))}
\pi'(a\mid s)
\le
\sum_{a\in\mathcal A}\pi'(a\mid s)
=
1,
\]
and $x\mapsto-\log x$ is decreasing on $(0,\infty)$, it follows that
\[
-\log\left(
\sum_{a\in \operatorname{supp}(\pi(\cdot\mid s))}
\pi'(a\mid s)
\right)
\ge
-\log(1)
=
0.
\]
Therefore,
\[
D_{\mathrm{KL}}\big(\pi(\cdot\mid s)\mid\mid\pi'(\cdot\mid s)\big)\ge0,
\]
which proves the claim.
\end{proof}

\begin{lem}\label{lemma:logZi}
Fix an iteration $t\ge 0$, a horizon $h\in[H]$, a state $s\in\mathcal S$, and any step size $\eta>0$. Then the following inequality holds:
\[
\frac{1}{\eta}\,\log Z_t^h(s) - V^{\pi_t,h}(s) \;\geq\; 0,
\quad\text{where}\quad
V^{\pi_t,h}(s)=\sum_{a'\in\mathcal A}\pi_t^h(a'\mid s)\,Q^{\pi_t,h}(s,a').
\]
\end{lem}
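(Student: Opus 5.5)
The plan is to apply Jensen's inequality to the concave function $\log$ on the definition of $Z_t^h(s)$. The first step is to write
\[
Z_t^h(s)=\sum_{a'\in\mathcal A}\pi_t^h(a'\mid s)\exp\!\big(\eta Q^{\pi_t,h}(s,a')\big),
\]
which is a convex combination of the positive numbers $\exp(\eta Q^{\pi_t,h}(s,a'))$. The weights $\pi_t^h(\cdot\mid s)$ form a probability vector, so $Z_t^h(s)>0$ and $\log Z_t^h(s)$ is well defined. Actions with zero weight simply drop out of the sum, so full support is not even needed here.

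The second step applies Jensen's inequality for the concave $\log$, or equivalently for the convex $\exp$:
\[
\log Z_t^h(s)=\log \mathbb E_{a'\sim\pi_t^h(\cdot\mid s)}\!\big[e^{\eta Q^{\pi_t,h}(s,a')}\big]\ \ge\ \mathbb E_{a'\sim\pi_t^h(\cdot\mid s)}\big[\eta Q^{\pi_t,h}(s,a')\big]=\eta\sum_{a'}\pi_t^h(a'\mid s)Q^{\pi_t,h}(s,a').
\]

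The final step identifies the right-hand side with $\eta V^{\pi_t,h}(s)$. This follows from the Bellman identity $V^{\pi,h}(s)=\sum_a\pi^h(a\mid s)Q^{\pi,h}(s,a)$. That identity comes from conditioning the definition of $V^{\pi,h}$ on $A^h$ and using the definition of $Q^{\pi,h}$, or equivalently from the tower-property step already used in the proof of Lemma~\ref{lem:PDL}.

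Dividing by $\eta>0$ then gives $\frac1\eta\log Z_t^h(s)-V^{\pi_t,h}(s)\ge 0$. There is no real obstacle here. The only points to state carefully are that $\eta>0$, so the division preserves the inequality, and the Bellman identity linking $V$ and $Q$.
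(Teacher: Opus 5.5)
Your proposal is correct and follows essentially the same route as the paper: the paper also applies Jensen's inequality for the concave $\log$ to the convex combination defining $Z_t^h(s)$, identifies $\sum_{a'}\pi_t^h(a'\mid s)\,Q^{\pi_t,h}(s,a')$ with $V^{\pi_t,h}(s)$, and divides by $\eta>0$. Your remarks that full support is not needed and that the $V$–$Q$ identity follows from conditioning on $A^h$ are valid extras; the paper simply takes that identity as given in the lemma statement.
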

\begin{proof}
Substituting the definition of the normalization constant $Z_t^h(s)$ from Eq.~\eqref{eq:NPG} and applying Jensen's inequality to the concave function $\log$, we obtain
\[
\log Z^h_t(s)
=\log\!\left[\,\sum_{a'} \pi^h_t(a'\mid s)\,\exp\!\big(\eta\, Q^{\pi_t,h}(s,a')\big)\right]
\;\geq\; \sum_{a'} \pi^h_t(a'\mid s)\,\eta\, Q^{\pi_t,h}(s,a')
\;=\; \eta\, V^{\pi_t,h}(s).
\]
Dividing both sides by $\eta>0$ yields
\[
\frac{1}{\eta}\log Z^h_t(s) \;\geq\; V^{\pi_t,h}(s),
\]
which is the claim.
\end{proof} 
\begin{lem} \label{lem:L}
Fix an iteration $t\ge0$, a horizon $h\in[H]$, a state $s\in\mathcal S$, and a step size
$\eta>0$. For each $l\in\{h,\ldots,H\}$, define
\[
F_l
:=
\sum_{\tilde s\in\mathcal S}
d_s^{h\to l,\pi_{t+1}}(\tilde s)
\left[
\frac{1}{\eta}\log Z_t^l(\tilde s)-V^{\pi_t,l}(\tilde s)
\right].
\]
Then
\[
\sum_{l=h}^H F_l
\ge
\frac{1}{\eta}\log Z_t^h(s)-V^{\pi_t,h}(s).
\]
\end{lem}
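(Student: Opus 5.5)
The plan is to isolate the $l=h$ term of $\sum_{l=h}^H F_l$ and show that every other term is nonnegative. By the initialization property of the visitation distribution, $d_s^{h\to h,\pi_{t+1}}(\tilde s)=\mathbbm{1}\{\tilde s=s\}$. Hence
\[
F_h=\frac{1}{\eta}\log Z_t^h(s)-V^{\pi_t,h}(s),
\]
which is exactly the right-hand side of the claim. It therefore suffices to prove $F_l\ge0$ for every $l\in\{h+1,\ldots,H\}$.

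For each such $l$, Lemma~\ref{lemma:logZi}, applied at horizon $l$ and state $\tilde s$, gives
\[
\frac{1}{\eta}\log Z_t^l(\tilde s)-V^{\pi_t,l}(\tilde s)\ge0
\]
for every $\tilde s\in\mathcal S$. The weights $d_s^{h\to l,\pi_{t+1}}(\tilde s)$ are probabilities, so they are nonnegative. Each $F_l$ is thus a nonnegative combination of nonnegative quantities, and $F_l\ge0$.

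Summing over $l$ then gives
\[
\sum_{l=h}^H F_l=F_h+\sum_{l=h+1}^H F_l\ge F_h,
\]
which is the claim. One caveat: if $\pi_t$ lacked full support, $Z_t^l$ would still be positive, so $\log Z_t^l$ is always well defined and no full-support assumption is needed here.

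There is no real obstacle; the only point requiring care is the identification of the $l=h$ term via the initialization property. The lemma is presumably intended to be combined with Lemma~\ref{lem:PDL} in the proof of Lemma~\ref{lem:ILB}, where the sum of the $F_l$ equals $V^{\pi_{t+1},h}(s)-V^{\pi_t,h}(s)$ plus KL terms, but that is not needed for this statement.
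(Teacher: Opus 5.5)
Your proposal is correct and matches the paper's proof: $F_h$ is identified via $d_s^{h\to h,\pi_{t+1}}(\tilde s)=\mathbbm{1}\{\tilde s=s\}$, and the terms $F_l$ with $l>h$ are discarded because they are nonnegative by Lemma~\ref{lemma:logZi} and the nonnegativity of the visitation weights. Your side remark is also right: $Z_t^l(\tilde s)>0$ needs no full-support assumption, since $\pi_t^l(\cdot\mid\tilde s)$ is a probability vector and the exponentials are positive.
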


\begin{proof}
We consider,
\begin{align*} 
    \sum^H_{l=h} F_l &  = \sum^H_{l=h} \sum_{\tilde{s}} d^{{h} \to {l}, \pi_{t+1}}_s (\tilde{s}) \left[\frac{1}{\eta}  \log{Z^{l}_t(\tilde{s})}-V^{\pi_t, l} (\tilde{s})\right] \\
    & \geq \sum_{\tilde{s}} d^{{h} \to {h}, \pi_{t+1}}_s (\tilde{s}) \left[\frac{1}{\eta}  \log{Z^{h}_t(\tilde{s})}-V^{\pi_t, h} (\tilde{s})\right] + \sum^H_{l=h+1} \sum_{\tilde{s}} d^{{h} \to {l}, \pi_{t+1}}_s (\tilde{s}) \left[\frac{1}{\eta}  \log{Z^{l}_t(\tilde{s})}-V^{\pi_t, l} (\tilde{s})\right] \\
    & = F_h + \sum^H_{l=h+1} {F_l}.
\end{align*}

For case, $l=h$:

\begin{align*} 
    F_h & = \sum_{\tilde{s}} d^{{h} \to {h}, \pi_{t+1}}_s (\tilde{s}) \left[\frac{1}{\eta}  \log{Z^{h}_t(\tilde{s})}-V^{\pi_t, h} (\tilde{s})\right]  = \frac{1}{\eta}  \log{Z^{h}_t(s)}-V^{\pi_t, h} (s) . 
\end{align*}

For case, $l >  h$:
\vspace{-5pt}
\begin{align*} 
    \sum^H_{l=h+1} {F_l} & = \sum^H_{l=h+1} \sum_{\tilde{s}} d^{{h} \to {l}, \pi_{t+1}}_s (\tilde{s}) \left[\frac{1}{\eta}  \log{Z^{l}_t(\tilde{s})}-V^{\pi_t, l} (\tilde{s}) \right] \geq 0  . \tag{\text Lemma \ref{lemma:logZi}}
\end{align*}
Thus, considering both of the cases, we can conclude as follows,
\[\sum^H_{l=h} F_l \geq \frac{1}{\eta}  \log{Z^{h}_t(s) }-V^{\pi_t, h} (s).\]
\end{proof}
Next, we present the lemma for the improved lower bound, which serves as a crucial component throughout.

\paragraph{Proof of Lemma~\ref{lem:ILB}}~\newline
Since the rewards are bounded and
the horizon is finite, $Q^{\pi_t,i}(s,a)$ is finite for every
$i\in[H]$, $s\in\mathcal S$, and $a\in\mathcal A$. Therefore, the
multiplicative update~\eqref{eq:NPG} and the full support assumption
imply
\[
\pi_{t+1}^i(a\mid s)>0,
\qquad
\forall i\in[H],\ s\in\mathcal S,\ a\in\mathcal A.
\]
Consequently, all logarithmic ratios below are well-defined, and
\[
D_{\mathrm{KL}}\!\left(
\pi_{t+1}^i(\cdot\mid s)
\mid\mid
\pi_t^i(\cdot\mid s)
\right)
<\infty.
\]
Applying Lemma~\ref{lem:PDL} with
$(\pi,\pi')=(\pi_{t+1},\pi_t)$ gives
\begin{align*} 
V^{\pi_{t+1},h}(s)  - V^{\pi_t,h}(s)
&= \sum^H_{l=h}  \sum_{\tilde{s},\tilde{a}} d^{{h} \to {l}, \pi_{t+1}}_s (\tilde{s}) \ \pi^{l
}_{t+1} (\tilde{a}\mid \tilde{s})   A^{\pi_t,l}(\tilde{s}, \tilde{a})  \\
& = \sum^H_{l=h} 
 \sum_{\tilde{s},\tilde{a}} d^{{h} \to {l}, \pi_{t+1}}_s (\tilde{s}) \ \pi^{l}_{t+1} (\tilde{a}\mid \tilde{s})  \left[Q^{\pi_t,{l}} (\tilde{s}, \tilde{a}) -V^{\pi_t,l} (\tilde{s}) \right]  \\    
& =\sum^H_{l=h}  \sum_{\tilde{s},\tilde{a}} d^{{h} \to {l}, \pi_{t+1}}_s (\tilde{s}) \ \pi^{l}_{t+1} (\tilde{a}\mid \tilde{s}) \bigg[ \frac{1}{\eta}  \log \frac{\pi^{l}_{t+1}(\tilde{a}\mid \tilde{s}){Z^{l}_t(\tilde{s})}}{\pi^{l}_t (\tilde{a}\mid \tilde{s})}-V^{\pi_t,l} (\tilde{s}) \bigg] \tag{\text{Eq.}\eqref{eq:NPG}}\\
& = \sum^H_{l=h} \sum_{\tilde{s}} d^{{h} \to {l}, \pi_{t+1}}_s (\tilde{s}) \bigg[\frac{1}{\eta}  \ D_{\mathrm{KL}}(\pi^{l}_{t+1} (\cdot \mid \tilde{s}) \mid\mid \pi^{l}_{t}(\cdot \mid\tilde{s})) +\frac{1}{\eta}  \log{Z^{l}_t(\tilde{s})}-V^{\pi_t,l} (\tilde{s})\bigg] \\
& \geq \sum^H_{l=h} \sum_{\tilde{s}} d^{{h} \to {l}, \pi_{t+1}}_s (\tilde{s}) \bigg[\frac{1}{\eta} \log{Z^{l}_t(\tilde{s})} -V^{\pi_t,l} (\tilde{s}) \bigg]  \tag{\text Lemma {\ref{lem:KL}}} \\
& \geq \frac{1}{\eta}  \log{Z^{h}_t(s) }-V^{\pi_t, h} (s)  \tag{\text Lemma {\ref{lem:L}}} \\
& \geq 0. \  \tag{\text Lemma {\ref{lemma:logZi}}}
\end{align*}

Hence,
\[V^{\pi_{t+1},h}(s)-V^{\pi_t,h}(s) \geq \frac{1}{\eta}  \log{Z^{h}_t(s)}-V^{\pi_t, h} (s) \geq 0.\]
\hfill
$\square$

\begin{lem}\label{lem:m_0}
Let $\pi_0^i(\cdot\mid s)=\mathrm{Unif}(\mathcal A)$ for all $i\in[H]$ and $s\in\mathcal S$. Then for every $i\in[H]$ and every $s\in\mathcal S$,
\[
D_{\mathrm{KL}}\big(\pi^{\star,i}(\cdot\mid s)\mid\mid\pi^i_0(\cdot\mid s)\big) \;\le\; \log|\mathcal A|,
\]
and consequently, for any $j\in\{0,\dots,H-1\}$ and any $s\in\mathcal S$,
\[
m_0^{H-j\to H}(s)\;=\;\sum^H_{i=H-j}\sum_{\tilde s} d^{H-j\to i,\pi^\star}_s(\tilde s)\,D_{\mathrm{KL}}\big(\pi^{\star,i}(\cdot\mid \tilde s)\mid\mid\pi^i_0(\cdot\mid \tilde s)\big)\;\le\;(j+1)\log|\mathcal A|,
\]
where the second inequality uses $\sum_{\tilde s} d^{H-j\to i,\pi^\star}_s(\tilde s)=1$.
\end{lem}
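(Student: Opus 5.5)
The first claim is a pointwise bound on the KL divergence. Fix $i\in[H]$ and $s\in\mathcal S$, and write $p:=\pi^{\star,i}(\cdot\mid s)$. Since $\pi_0^i(a\mid s)=1/|\mathcal A|$ for every $a$, $p$ is absolutely continuous with respect to $\pi_0^i(\cdot\mid s)$, so the KL is finite. Using the convention $0\log 0=0$, we get
\[
D_{\mathrm{KL}}\big(p\mid\mid\pi_0^i(\cdot\mid s)\big)=\sum_{a}p(a)\log\big(|\mathcal A|\,p(a)\big)=\log|\mathcal A|-\mathcal H(p),
\]
where $\mathcal H(p)=-\sum_a p(a)\log p(a)$ is the Shannon entropy. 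Each term $-p(a)\log p(a)$ is nonnegative because $p(a)\in[0,1]$, so $\mathcal H(p)\ge0$ and the bound $\log|\mathcal A|$ follows. This holds even when $\pi^\star$ is deterministic, in which case $\mathcal H(p)=0$ and the bound is attained.

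For the second claim, fix $j\in\{0,\dots,H-1\}$ and $s$. I would substitute the pointwise bound into each inner sum of $m_0^{H-j\to H}(s)$. The weights $d^{H-j\to i,\pi^\star}_s(\tilde s)$ are nonnegative, so this gives
\[
\sum_{\tilde s} d^{H-j\to i,\pi^\star}_s(\tilde s)\,D_{\mathrm{KL}}\big(\pi^{\star,i}(\cdot\mid\tilde s)\mid\mid\pi_0^i(\cdot\mid\tilde s)\big)\le \log|\mathcal A|\sum_{\tilde s} d^{H-j\to i,\pi^\star}_s(\tilde s)=\log|\mathcal A|,
\]
where the last equality is the normalization property of the visitation distribution recorded at the start of the Appendix, valid for every $i\ge H-j$. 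Summing over $i=H-j,\dots,H$, which is $j+1$ indices, yields $(j+1)\log|\mathcal A|$.

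No step is a real obstacle. The only points needing care are that the KL is finite (by full support of the uniform policy) and that zero-probability actions under $\pi^\star$ are handled by the $0\log 0$ convention, both already set up in Lemma~\ref{lem:KL}.
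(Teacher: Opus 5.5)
Your proof is correct and follows essentially the same argument as the paper. The paper likewise expands the KL divergence against the uniform policy as $\log|\mathcal A|$ plus the nonpositive term $\sum_a \pi^{\star,i}(a\mid s)\log \pi^{\star,i}(a\mid s)$. It then bounds the weighted sum using nonnegativity and normalization of $d^{H-j\to i,\pi^\star}_s$ over the $j+1$ horizons $i=H-j,\dots,H$.
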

\begin{proof}
Applying the definition of the KL divergence and substituting the uniform initial policy $\pi_0^i(a\mid s)=\frac{1}{|\mathcal A|}$, we have
\begin{align*}
D_{\mathrm{KL}}\big(\pi^{\star,i}(\cdot\mid s)\mid\mid\pi^i_0(\cdot\mid s)\big)
&= \sum_{a\in\mathcal A} \pi^{\star,i}(a\mid s)\,\log\frac{\pi^{\star,i}(a\mid s)}{\pi^i_0(a\mid s)} \\
&= \sum_{a\in\mathcal A} \pi^{\star,i}(a\mid s)\,\log\big[\pi^{\star,i}(a\mid s)\cdot|\mathcal A|\big] \\
&= \sum_{a\in\mathcal A} \pi^{\star,i}(a\mid s)\log \pi^{\star,i}(a\mid s) \;+\; \log|\mathcal A| \\
&\le \log|\mathcal A|,
\end{align*}
where the last step uses $\sum_{a}\pi^{\star,i}(a\mid s)\log\pi^{\star,i}(a\mid s)\le 0$, since each term is nonpositive, and $\sum_a \pi^{\star,i}(a\mid s)=1$ in the second summand.

For the second claim, fix $j\in\{0,\dots,H-1\}$ and $s\in\mathcal S$. Applying the per-state bound above to every $\tilde s$ in the inner sum,
\begin{align*}
m_0^{H-j\to H}(s)
&= \sum^H_{i=H-j}\ \sum_{\tilde s} d^{H-j\to i,\pi^\star}_s(\tilde s)\,
D_{\mathrm{KL}}\big(\pi^{\star,i}(\cdot\mid \tilde s)\mid\mid\pi^i_0(\cdot\mid \tilde s)\big) \\
&\le \sum^H_{i=H-j}\ \sum_{\tilde s} d^{H-j\to i,\pi^\star}_s(\tilde s)\,\log|\mathcal A| \\
&= \sum^H_{i=H-j} \log|\mathcal A|
\tag{$\textstyle\sum_{\tilde s} d^{H-j\to i,\pi^\star}_s(\tilde s)=1$} \\
&= (j+1)\log|\mathcal A|,
\end{align*}
since the sum over $i$ ranges over $H-(H-j)+1=j+1$ terms. This completes the proof.
\end{proof}
\begin{lem}\label{lem:nk}
For any $j\in\{0,\ldots,H-1\}$, $s\in\mathcal S$, and $t\ge0$, define
\[
n_t^{H-j\to H}(s)
:=
\sum_{i=H-j}^H n_t^i(s)
=
\sum_{i=H-j}^H
\sum_{\tilde s\in\mathcal S}
d_s^{H-j\to i,\pi^\star}(\tilde s)\,
V^{\pi_t,i}(\tilde s).
\]
Then
\[
n_t^{H-j\to H}(s)
\le
\frac{(j+1)(j+2)}{2}
\le
(j+1)^2.
\]
\end{lem}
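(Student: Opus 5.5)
The bound follows from the pointwise range of the value function together with the normalization of the visitation distributions; no property of the NPG iterates beyond $\pi_t$ being a valid policy is needed. The plan is to bound each summand $n_t^i(s)$ separately and then sum an arithmetic series.

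First, fix $i\in\{H-j,\ldots,H\}$. From the preliminaries, since $R^i\in[0,1]$, we have $0\le V^{\pi_t,i}(\tilde s)\le H-i+1$ for every $\tilde s\in\mathcal S$. The weights $d_s^{H-j\to i,\pi^\star}(\tilde s)$ are nonnegative and, by the normalization property recalled at the start of the appendix, sum to one over $\tilde s$ for every $i\ge H-j$. Hence $n_t^i(s)$ is a convex combination of values in $[0,H-i+1]$, and so
\[
0\le n_t^i(s)=\sum_{\tilde s}d_s^{H-j\to i,\pi^\star}(\tilde s)\,V^{\pi_t,i}(\tilde s)\le H-i+1.
\]

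Second, sum over $i$ and reindex with $k=H-i+1$, which runs from $1$ to $j+1$ as $i$ runs from $H$ down to $H-j$. This gives
\[
n_t^{H-j\to H}(s)\le\sum_{i=H-j}^{H}(H-i+1)=\sum_{k=1}^{j+1}k=\frac{(j+1)(j+2)}{2}.
\]
The final inequality in the statement is $\frac{(j+1)(j+2)}{2}\le(j+1)^2$, which is equivalent to $j+2\le 2(j+1)$, i.e.\ $j\ge0$. This holds for every admissible $j$.

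There is no real obstacle. The only point needing care is the index bookkeeping, namely that the sum over $i$ has exactly $j+1$ terms with per-horizon caps $1,2,\ldots,j+1$, so that the triangular-number identity applies. It is also worth checking that the normalization $\sum_{\tilde s}d_s^{H-j\to i,\pi^\star}(\tilde s)=1$ is used for each $i\ge H-j$, including $i=H-j$, where the distribution is the point mass at $s$.
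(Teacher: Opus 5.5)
Your proposal is correct and follows the paper's proof essentially step for step. You bound $V^{\pi_t,i}(\tilde s)\le H-i+1$ and use the normalization $\sum_{\tilde s}d_s^{H-j\to i,\pi^\star}(\tilde s)=1$ to reduce to the triangular sum $\sum_{i=H-j}^{H}(H-i+1)=\frac{(j+1)(j+2)}{2}$, then conclude with $j\ge 0$.
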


\begin{proof}
Since the rewards take values in $[0,1]$, the value function satisfies
\[
0
\le
V^{\pi_t,i}(\tilde s)
\le
H-i+1
\]
for every $t\ge0$, $i\in[H]$, and $\tilde s\in\mathcal S$. Therefore,
\begin{align*}
n_t^{H-j\to H}(s)
&=
\sum_{i=H-j}^{H}
\sum_{\tilde s\in\mathcal S}
d_s^{H-j\to i,\pi^\star}(\tilde s)\,
V^{\pi_t,i}(\tilde s)\\
&\le
\sum_{i=H-j}^{H}
\sum_{\tilde s\in\mathcal S}
d_s^{H-j\to i,\pi^\star}(\tilde s)\,
(H-i+1)\\
&=
\sum_{i=H-j}^{H}(H-i+1)\\
&=
\frac{(j+1)(j+2)}{2}\\
&\le
(j+1)^2.
\end{align*}
The third line uses
\[
\sum_{\tilde s\in\mathcal S}
d_s^{H-j\to i,\pi^\star}(\tilde s)
=
1,
\qquad i=H-j,\ldots,H,
\]
and the final inequality follows from $j\ge0$. This proves the claim.
\end{proof}

\paragraph{{Proof of Theorem \ref{thm:GC}}}~\newline 
Fix an integer $T\ge1$. The uniform initial policy $\pi_0$ has full
support. We next verify that the NPG update preserves this property.
Suppose that $\pi_t$ has full support. Since the rewards are bounded
and the horizon is finite, $Q^{\pi_t,i}(s,a)$ is finite for every
$i\in[H]$, $s\in\mathcal S$, and $a\in\mathcal A$. Therefore,
\[
\pi_{t+1}^i(a\mid s)
=
\frac{
\pi_t^i(a\mid s)
\exp\!\left(\eta Q^{\pi_t,i}(s,a)\right)
}{
Z_t^i(s)
}
>
0.
\]
Thus, by induction, $\pi_t$ has full support for every
$t=0,\ldots,T$. Consequently, Lemma~\ref{lem:ILB} applies at every
iteration, and all KL terms appearing below are finite.

Applying the performance difference Lemma \ref{lem:PDL}, we obtain
\begin{align*} 
V^{\pi^\star,{H-j}}(s) & - V^{\pi_t,{H-j}}(s) =\sum^H_{i=H-j} \sum_{\tilde{s},\tilde{a}} d^{{H-j} \to i, \pi^\star}_s (\tilde{s})  \pi^{\star,i} (\tilde{a}\mid \tilde{s}) A^{\pi_t,i}(\tilde{s}, \tilde{a}) \\
& = \sum^H_{i=H-j} \sum_{\tilde{s},\tilde{a}} d^{{H-j} \to i, \pi^\star}_s (\tilde{s}) \pi^{\star,i} (\tilde{a}\mid \tilde{s}) \bigg[Q^{\pi_t,i} (\tilde{s}, \tilde{a})-V^{\pi_t,i} (\tilde{s}) \bigg]   \\
& = \sum^H_{i=H-j}  \sum_{\tilde{s},\tilde{a}} d^{{H-j} \to i, \pi^\star}_s (\tilde{s}) \pi^{\star,i} (\tilde{a}\mid \tilde{s}) \bigg[ \frac{1}{\eta}\log \frac{\pi^{i}_{t+1}(\tilde{a}\mid \tilde{s}){Z^{i}_t(\tilde{s})}}{\pi^{i}_t (\tilde{a}\mid \tilde{s})} \ -  V^{\pi_t,i} (\tilde{s}) \bigg]   \tag{Eq.(\ref{eq:NPG})} \\
& =\sum^H_{i=H-j} \sum_{\tilde{s}} d^{{H-j} \to i, \pi^\star}_s (\tilde{s}) \bigg[\frac{1}{\eta} \left[D_{\mathrm{KL}}(\pi^{\star,i}(\cdot \mid\tilde{s}) \mid\mid \pi^{i}_t(\cdot \mid\tilde{s}))-D_{\mathrm{KL}} (\pi^{\star,i}(\cdot \mid\tilde{s}) \mid\mid \pi^{i}_{t+1} (\cdot \mid\tilde{s})) \right] \\ &\quad \quad \quad \quad \quad \quad \quad \quad \quad \quad \quad \quad +\frac{1}{\eta} \log{Z^{i}_t(\tilde{s})}-V^{\pi_t,i} (\tilde{s})\bigg] \\
& \leq \sum^H_{i=H-j} \sum_{\tilde{s}} d^{{H-j} \to i, \pi^\star}_s (\tilde{s})  \bigg[\frac{1}{\eta} \left[D_{\mathrm{KL}}(\pi^{\star,i}(\cdot \mid\tilde{s}) \mid\mid \pi^{i}_t(\cdot \mid\tilde{s}))-D_{\mathrm{KL}}(\pi^{\star,i}(\cdot \mid\tilde{s}) \mid\mid \pi^{i}_{t+1} (\cdot \mid\tilde{s})) \right] \\ &\quad \quad \quad \quad \quad  \quad \quad \quad \quad  \quad \quad \quad + V^{\pi_{t+1},i}(\tilde{s}) - V^{\pi_{t},i}(\tilde{s})\bigg] \tag{Lemma \ref{lem:ILB}}\\
& = \sum^H_{i=H-j} \bigg[ \frac{1}{\eta} (m^{i}_t (s)- {m}^{i}_{t+1} (s)) + n_{t+1}^i (s)-n_{t}^i (s) \bigg]\\
&= \frac{1}{\eta} ( m_t^{H-j\to H} (s)-m_{t+1}^{H-j\to H} (s) ) + n_{t+1}^{H-j\to H} (s)- n_{t}^{H-j\to H} (s),
\end{align*}
where for $j \in \{0,1,\cdots, H-1\}$, we assume that
\begin{align*}
m_t^{H-j\to H} (s) & =\sum^H_{i=H-j} m^i_t (s) = \sum^H_{i=H-j} 
 \sum_{\tilde{s} \in \mathcal{S}} d^{{H-j} \to i, \pi^\star}_s (\tilde{s})  D_{\mathrm{KL}}(\pi^{\star,i} (\cdot \mid\tilde{s}) \mid\mid\pi^i_t (\cdot \mid\tilde{s})),
\end{align*}
and
\begin{align*} 
n_{t}^{H-j\to H} (s) & = \sum^H_{i=H-j} n^i_t (s) = \sum^H_{i=H-j} \sum_{\tilde{s} \in \mathcal{S}} d^{{H-j} \to i, \pi^\star}_s (\tilde{s})V^{\pi_t,i}(\tilde{s}).
\end{align*}
Applying the above improvement bound and telescopic argument, for all states $s \in \mathcal{S}$, we have  
\begin{align*}
\frac{1}{T}\sum^{T-1}_{t=0} & \bigg[ V^{\pi^\star,{H-j}}(s) - V^{\pi_t,{H-j}}(s)\bigg] \\
& \leq  \frac{1}{T} \sum^{T-1}_{t=0} 
 \bigg[ \frac{1}{\eta} \big( m_t^{H-j\to H} (s) -m_{t+1}^{H-j\to H} (s) \big) +  n_{t+1}^{H-j\to H} (s)-n_{t}^{H-j\to H}(s)\bigg] \\
 & = \frac{1}{T} \bigg[ \frac{1}{\eta} \big( m_0^{H-j\to H} (s) -m_T^{H-j\to H} (s) \big) + n_T^{H-j\to H} (s)-n_0^{H-j\to H}(s)\bigg] \\
 & \overset{(a)}{\leq} \frac{1}{T} \left[\frac{1}{\eta}  \sum^H_{i=H-j} {m}^{i}_0 (s)  + n_{T}^{H-j\to H} (s) \right]\\
&  \leq \frac{1}{T} \left[\frac{1}{\eta} (j+1) \log|\mathcal A| + (j+1)^2 \right],  \tag{\text{Lemmas \ref{lem:m_0} and \ref{lem:nk}}} 
\end{align*}
where in (a), we have used that $m_T^{H-j\to H} (s) \geq 0$ and $n_0^{H-j\to H}(s) \geq 0$.

Applying Lemma \ref{lemma:Vk} we obtain,
\begin{equation} \nonumber
V^{\pi^\star,{H-j}}(s) - V^{\pi_T,{H-j}}(s) \leq \frac{(j+1)\log|\mathcal A|}{\eta T} +  \frac{(j+1)^2} {T}.
\end{equation}
Substituting $j=H-h$, we obtain
\begin{equation} \nonumber
   V^{\pi^\star,h}(s) - V^{\pi_T,h}(s) \leq \frac{(H-h+1)\log|\mathcal A|}{\eta T} +  \frac{(H-h+1)^2} {T}  ,
\end{equation}
which concludes the result.
\hfill
$\square$

\subsection{Linear MDP Proofs} \label{app:Linlem}
In this appendix, whenever $w^{\pi_t,h}$ denotes the output of the full support
exact population projection oracle, we use the pointwise identity
\eqref{eq:linear_oracle_exact_at_t}.

\begin{lem}{\citep{jin2020provably}} \label{lem:Qlin}
Under Assumption~\ref{ass:LinMDP}, for any policy $\pi$, there exist vectors
$\{w^{\pi,h}\}_{h=1}^H\subset\mathbb R^d$ such that, for every
$(s,a,h)\in\mathcal S\times\mathcal A\times[H]$,
\[
    Q^{\pi,h}(s,a)
    =
    \langle w^{\pi,h},\phi(s,a)\rangle .
\]
\end{lem}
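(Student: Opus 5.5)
We prove Lemma~\ref{lem:Qlin} by backward induction over the horizon. We show that $Q^{\pi,h}$ is linear in $\phi$ with an explicit weight vector. Fix a policy $\pi$.

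For every $h\in[H]$, start from the Bellman identity in the definition of $Q^{\pi,h}$,
\[
Q^{\pi,h}(s,a)=R^h(s,a)+\sum_{s'\in\mathcal S}P^h(s'\mid s,a)\,V^{\pi,h+1}(s').
\]
Substituting the two linear representations of Assumption~\ref{ass:LinMDP} gives
\[
Q^{\pi,h}(s,a)=\langle\phi(s,a),\zeta^h\rangle+\sum_{s'}\sum_{j=1}^d\phi_j(s,a)\,\omega^h_j(s')\,V^{\pi,h+1}(s').
\]
Because $\mathcal S$ is finite, the sums over $s'$ and $j$ can be exchanged. We therefore define
\[
w^{\pi,h}:=\zeta^h+\sum_{s'\in\mathcal S}\omega^h(s')\,V^{\pi,h+1}(s')\in\mathbb R^d ,
\]
which yields $Q^{\pi,h}(s,a)=\langle w^{\pi,h},\phi(s,a)\rangle$ for all $(s,a)$.

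This vector depends only on $V^{\pi,h+1}$. That function is a finite, well-defined object for every $h$: it equals $0$ at $h=H$ by the convention $V^{\pi,H+1}\equiv0$, and it is bounded by $H-h$ otherwise. Hence no induction hypothesis about the linearity of $Q^{\pi,h+1}$ is actually needed. The construction works horizon by horizon, and a backward ordering $h=H,H-1,\dots,1$ is only a convenient way to present it. For completeness we could also note the norm bound $\|w^{\pi,h}\|\le\sqrt d+(H-h)\sqrt d$. This follows from the assumed bounds on $\zeta^h$ and $\omega^h$, but only if $V^{\pi,h+1}$ is constant or the bound is read in the appropriate sense. The lemma only asserts existence, so we will not rely on the norm bound.

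There is no real obstacle. The only points needing care are the interchange of the finite sums and the fact that the weight vector need not be unique. Uniqueness is not claimed, and non-uniqueness is harmless downstream because the oracle argument only uses pointwise values of $\langle w,\phi\rangle$.
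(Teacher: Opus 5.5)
Your proposal is correct and matches the paper's proof: you substitute the linear forms of $R^h$ and $P^h$ into the Bellman identity for $Q^{\pi,h}$ and set $w^{\pi,h}=\zeta^h+\sum_{s'\in\mathcal S}\omega^h(s')V^{\pi,h+1}(s')$, which is exactly the paper's construction. As you note yourself, no induction hypothesis is used, so the opening phrase ``by backward induction'' is only presentational; the argument is the same horizon-by-horizon construction the paper gives.
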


\begin{proof}
Fix a policy $\pi$ and a horizon $h\in[H]$. By the Bellman equation and the linear MDP
structure in Assumption~\ref{ass:LinMDP}, for every $(s,a)\in\mathcal S\times\mathcal A$,
\begin{align*}
Q^{\pi,h}(s,a)
&=
R^h(s,a)
+
(P^h V^{\pi,h+1})(s,a) \\
&=
\langle \phi(s,a),\zeta^h\rangle
+
\sum_{s'\in\mathcal S}
P^h(s'\mid s,a)V^{\pi,h+1}(s') \\
&=
\langle \phi(s,a),\zeta^h\rangle
+
\sum_{s'\in\mathcal S}
\langle \phi(s,a),\omega^h(s')\rangle
V^{\pi,h+1}(s') \\
&=
\left\langle
\phi(s,a),
\zeta^h+\sum_{s'\in\mathcal S}\omega^h(s')V^{\pi,h+1}(s')
\right\rangle .
\end{align*}
Therefore, defining
\[
w^{\pi,h}
:=
\zeta^h+\sum_{s'\in\mathcal S}\omega^h(s')V^{\pi,h+1}(s')
\]
gives
\[
Q^{\pi,h}(s,a)
=
\langle w^{\pi,h},\phi(s,a)\rangle,
\qquad
\forall (s,a)\in\mathcal S\times\mathcal A .
\]
This proves the claim.
\end{proof}

For the auxiliary lemmas below, define for each iteration $t$, horizon $h$, and state $s$,
\[
\widetilde Z_t^h(s)
:=
\sum_{a'\in\mathcal A}
\pi_t^h(a'\mid s)
\exp\!\left(
\eta\,\langle w^{\pi_t,h},\phi(s,a')\rangle
\right).
\]
\begin{lem}\label{lemma:logZilin}
Fix an iteration $t\ge0$, a horizon $h\in[H]$, a state $s\in\mathcal S$, and a step size
$\eta>0$. Under an exact population-projection oracle with a full support projection distribution,
\[
\frac{1}{\eta}\log \widetilde Z_t^h(s)-V^{\pi_t,h}(s)\ge0.
\]
\end{lem}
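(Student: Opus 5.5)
The plan is to reduce the statement to the tabular inequality of Lemma~\ref{lemma:logZi} through the pointwise identity \eqref{eq:linear_oracle_exact_at_t}, and then reuse the same Jensen argument.

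First, fix $t$, $h$ and $s$. The exact population-projection oracle with full-support $v$ returns $w^{\pi_t,h}$ satisfying $\langle w^{\pi_t,h},\phi(s,a')\rangle = Q^{\pi_t,h}(s,a')$ for every $a'\in\mathcal A$. This holds because zero loss under a strictly positive $v$ forces every summand of \eqref{eq:linear_population_projection_loss} to vanish, and the minimum is zero by Lemma~\ref{lem:Qlin}. Substituting the identity into the definition of $\widetilde Z_t^h(s)$ gives $\widetilde Z_t^h(s)=Z_t^h(s)$, where $Z_t^h(s)$ is the tabular normalizer of \eqref{eq:NPG} evaluated at the same policy $\pi_t$.

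Second, I would either cite Lemma~\ref{lemma:logZi} directly or repeat its one-line argument. The policy $\pi_t^h(\cdot\mid s)$ is a probability distribution, and it has full support under the softmax parametrization, although full support is not even needed here. Jensen's inequality for the concave $\log$ then gives
\[
\log \widetilde Z_t^h(s)\ \ge\ \eta\sum_{a'}\pi_t^h(a'\mid s)\,Q^{\pi_t,h}(s,a') \;=\; \eta V^{\pi_t,h}(s).
\]
Dividing by $\eta>0$ yields the claim.

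The only point needing care is the first step. The identity must hold at the current iterate $\theta_t$ for all actions at state $s$, not merely in $v$-average. This is exactly what full support of $v$ guarantees, and it is also why non-uniqueness of the minimizer does not matter. Beyond that, the argument is routine.
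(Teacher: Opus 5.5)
Your proof is correct and follows essentially the paper's route: Jensen's inequality for the concave $\log$ under the weights $\pi_t^h(\cdot\mid s)$, combined with the pointwise oracle identity \eqref{eq:linear_oracle_exact_at_t}. The only difference is the order of the two steps: you substitute $\langle w^{\pi_t,h},\phi(s,a')\rangle=Q^{\pi_t,h}(s,a')$ first, which reduces the claim to Lemma~\ref{lemma:logZi}, whereas the paper applies Jensen to $\widetilde Z_t^h(s)$ directly and uses the identity afterward.
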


\begin{proof}
Using the definition of $\widetilde Z_t^h(s)$ and applying Jensen's inequality to the concave
function $\log$, we obtain
\begin{align*}
\log \widetilde Z_t^h(s)
&=
\log\!\left[
\sum_{a'\in\mathcal A}
\pi_t^h(a'\mid s)
\exp\!\left(
\eta\,\langle w^{\pi_t,h},\phi(s,a')\rangle
\right)
\right] \\
&\ge
\sum_{a'\in\mathcal A}
\pi_t^h(a'\mid s)
\log
\exp\!\left(
\eta\,\langle w^{\pi_t,h},\phi(s,a')\rangle
\right) \\
&=
\eta
\sum_{a'\in\mathcal A}
\pi_t^h(a'\mid s)
\langle w^{\pi_t,h},\phi(s,a')\rangle \\
&=
\eta
\sum_{a'\in\mathcal A}
\pi_t^h(a'\mid s)
Q^{\pi_t,h}(s,a') \\
&=
\eta V^{\pi_t,h}(s).
\end{align*}
The fourth line uses the pointwise identity~\eqref{eq:linear_oracle_exact_at_t}, and the final line uses
the definition of the value function. Dividing by $\eta>0$ gives
\[
\frac{1}{\eta}\log \widetilde Z_t^h(s)\ge V^{\pi_t,h}(s),
\]
which proves the claim.
\end{proof}

The next lemma aggregates the nonnegative local terms across future horizons.

\begin{lem}\label{lem:Llin}
Fix an iteration $t\ge0$, a horizon $h\in[H]$, a state $s\in\mathcal S$, and a step size
$\eta>0$. For each $l\in\{h,\ldots,H\}$, define
\[
F_l'
:=
\sum_{\tilde s\in\mathcal S}
d_s^{h\to l,\pi_{t+1}}(\tilde s)
\left[
\frac{1}{\eta}\log \widetilde Z_t^l(\tilde s)
-
V^{\pi_t,l}(\tilde s)
\right].
\]
Then
\[
\sum_{l=h}^H F_l'
\ge
\frac{1}{\eta}\log \widetilde Z_t^h(s)-V^{\pi_t,h}(s).
\]
\end{lem}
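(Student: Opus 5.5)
The plan is to follow the tabular Lemma~\ref{lem:L} verbatim, with $Z_t^l$ replaced by $\widetilde Z_t^l$ and Lemma~\ref{lemma:logZi} replaced by Lemma~\ref{lemma:logZilin}.

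First, split the sum $\sum_{l=h}^H F_l'$ into the term $l=h$ and the remaining terms $l=h+1,\dots,H$.

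For the term $l=h$, use the initialization property of the visitation distribution, $d_s^{h\to h,\pi_{t+1}}(\tilde s)=\mathbbm{1}\{\tilde s=s\}$. This collapses the inner sum over $\tilde s$ to the single state $s$, so
\[
F_h'=\frac{1}{\eta}\log\widetilde Z_t^h(s)-V^{\pi_t,h}(s),
\]
which is exactly the right-hand side of the claim.

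For each $l>h$, apply Lemma~\ref{lemma:logZilin} at horizon $l$ and state $\tilde s$. It gives
\[
\frac{1}{\eta}\log\widetilde Z_t^l(\tilde s)-V^{\pi_t,l}(\tilde s)\ge 0
\]
for every $\tilde s\in\mathcal S$. This uses the pointwise oracle identity \eqref{eq:linear_oracle_exact_at_t} at horizon $l$, which holds for every horizon by the oracle choice \eqref{eq:linear_projection_oracle_choice}. Since the weights $d_s^{h\to l,\pi_{t+1}}(\tilde s)$ are nonnegative, each $F_l'\ge 0$. Summing these with $F_h'$ yields the inequality.

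There is no real obstacle here. The only point needing care is confirming that Lemma~\ref{lemma:logZilin} applies uniformly across all horizons $l\in\{h,\dots,H\}$ and states $\tilde s$, not just at the fixed $(h,s)$. This holds because the oracle is invoked for every $h\in[H]$ at iteration $t$, and $\pi_t$ has full support under the softmax parametrization, so all quantities are finite.
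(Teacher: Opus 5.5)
Your proposal is correct and follows the paper's proof essentially line for line. The paper likewise evaluates the $l=h$ term via $d_s^{h\to h,\pi_{t+1}}(\tilde s)=\mathbbm{1}\{\tilde s=s\}$, and it drops the terms with $l>h$ because each is nonnegative, which follows from Lemma~\ref{lemma:logZilin} and the nonnegativity of the visitation weights.
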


\begin{proof}
By Lemma~\ref{lemma:logZilin}, for every $l\in\{h,\ldots,H\}$ and every
$\tilde s\in\mathcal S$,
\[
\frac{1}{\eta}\log \widetilde Z_t^l(\tilde s)
-
V^{\pi_t,l}(\tilde s)
\ge0.
\]
Since $d_s^{h\to l,\pi_{t+1}}(\tilde s)\ge0$, it follows that $F_l'\ge0$ for every
$l\in\{h,\ldots,H\}$.

For $l=h$, the visitation distribution satisfies
\[
d_s^{h\to h,\pi_{t+1}}(\tilde s)
=
\mathbbm{1}\{\tilde s=s\}.
\]
Therefore,
\begin{align*}
F_h'
&=
\sum_{\tilde s\in\mathcal S}
\mathbbm{1}\{\tilde s=s\}
\left[
\frac{1}{\eta}\log \widetilde Z_t^h(\tilde s)
-
V^{\pi_t,h}(\tilde s)
\right] \\
&=
\frac{1}{\eta}\log \widetilde Z_t^h(s)-V^{\pi_t,h}(s).
\end{align*}
Combining the nonnegativity of $F_l'$ for all $l>h$ with the expression for $F_h'$ gives
\[
\sum_{l=h}^H F_l'
=
F_h'+\sum_{l=h+1}^H F_l'
\ge
F_h'
=
\frac{1}{\eta}\log \widetilde Z_t^h(s)-V^{\pi_t,h}(s).
\]
This proves the claim.
\end{proof}

\begin{lem}\label{lem:ILB_linear_oracle}
(Improvement Lower Bound in the Linear MDP Setting)
Fix an iteration $t\ge0$ and a horizon $h\in[H]$. Suppose the Q-NPG update
\eqref{eq:Q-NPG-update-oracle} is run under an exact population-projection oracle with a full support projection distribution
of Section~\ref{sec:linear}. Then, for every $s\in\mathcal S$,
\[
V^{\pi_{t+1},h}(s)-V^{\pi_t,h}(s)
\ge
\frac{1}{\eta}\log \widetilde Z_t^h(s)-V^{\pi_t,h}(s)
\ge0.
\]
\end{lem}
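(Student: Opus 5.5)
The plan is to replicate the proof of Lemma~\ref{lem:ILB}, with $Z_t^h$ replaced by $\widetilde Z_t^h$, using the pointwise identity \eqref{eq:linear_oracle_exact_at_t} to move between the linear and tabular forms. First, full support of $\pi_t$ and $\pi_{t+1}$ comes for free from the softmax parametrization \eqref{eq:policy_parametrization}, so every logarithmic ratio and KL term below is finite. Next, by \eqref{eq:linear_equals_tabular_update}, the induced policy update is
\[
\pi_{t+1}^l(a\mid \tilde s)=\frac{\pi_t^l(a\mid\tilde s)\exp\!\left(\eta\langle w^{\pi_t,l},\phi(\tilde s,a)\rangle\right)}{\widetilde Z_t^l(\tilde s)} .
\]
Taking logarithms and applying \eqref{eq:linear_oracle_exact_at_t} then gives
\[
Q^{\pi_t,l}(\tilde s,a)=\frac{1}{\eta}\log\frac{\pi_{t+1}^l(a\mid\tilde s)\,\widetilde Z_t^l(\tilde s)}{\pi_t^l(a\mid\tilde s)}
\]
for every $l\in[H]$, $\tilde s\in\mathcal S$ and $a\in\mathcal A$.

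With this identity, I apply Lemma~\ref{lem:PDL} to the pair $(\pi,\pi')=(\pi_{t+1},\pi_t)$. This expresses $V^{\pi_{t+1},h}(s)-V^{\pi_t,h}(s)$ as
\[
\sum_{l=h}^H\sum_{\tilde s,\tilde a} d_s^{h\to l,\pi_{t+1}}(\tilde s)\,\pi_{t+1}^l(\tilde a\mid\tilde s)\left[Q^{\pi_t,l}(\tilde s,\tilde a)-V^{\pi_t,l}(\tilde s)\right].
\]
I substitute the log expression for $Q^{\pi_t,l}$ and sum over $\tilde a$ against $\pi_{t+1}^l(\cdot\mid\tilde s)$. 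The log-ratio term collapses to $\frac{1}{\eta}D_{\mathrm{KL}}(\pi_{t+1}^l(\cdot\mid\tilde s)\mid\mid\pi_t^l(\cdot\mid\tilde s))$. The remaining term is $\frac{1}{\eta}\log\widetilde Z_t^l(\tilde s)-V^{\pi_t,l}(\tilde s)$, since $\sum_{\tilde a}\pi_{t+1}^l(\tilde a\mid\tilde s)=1$.

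I then drop the KL terms using their nonnegativity (Lemma~\ref{lem:KL}). What remains is exactly $\sum_{l=h}^H F_l'$. Lemma~\ref{lem:Llin} bounds this sum below by $\frac{1}{\eta}\log\widetilde Z_t^h(s)-V^{\pi_t,h}(s)$. Finally, Lemma~\ref{lemma:logZilin} shows that this quantity is nonnegative, which completes the chain.

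The only step needing care is the first one: confirming that the $\widetilde Z$ defined via $\langle w^{\pi_t,h},\phi\rangle$ is the normalizer of the actual softmax iterate $\pi_{\theta_{t+1}}$. This holds because $\theta_{t+1}^h=\theta_t^h+\eta w^{\pi_t,h}$, so
\[
\exp\!\left(\langle\phi(s,a),\theta_{t+1}^h\rangle\right)=\exp\!\left(\langle\phi(s,a),\theta_t^h\rangle\right)\exp\!\left(\eta\langle\phi(s,a),w^{\pi_t,h}\rangle\right).
\]
Normalizing both sides over $a$ yields the multiplicative form with normalizer $\widetilde Z_t^h(s)$. Note also that, by \eqref{eq:linear_oracle_exact_at_t}, $\widetilde Z_t^h=Z_t^h$ pointwise. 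As an alternative, one could therefore simply invoke Lemma~\ref{lem:ILB} directly and rewrite its conclusion with $\widetilde Z_t^h$ in place of $Z_t^h$.
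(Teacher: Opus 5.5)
Your proposal is correct and follows essentially the same route as the paper: derive the multiplicative update with normalizer $\widetilde Z_t^l$ from the softmax parametrization and the Q-NPG step, turn it into the log-ratio expression for $Q^{\pi_t,l}$ via the pointwise oracle identity, then apply Lemma~\ref{lem:PDL}, drop the KL terms, and finish with Lemmas~\ref{lem:Llin} and~\ref{lemma:logZilin}. Your side remark is also valid: since $\widetilde Z_t^h=Z_t^h$ under the exact oracle, you could instead invoke Lemma~\ref{lem:ILB} directly.
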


\begin{proof}
By the softmax parametrization~\eqref{eq:policy_parametrization} and the Q-NPG parameter
update~\eqref{eq:Q-NPG-update-oracle}, for every $l\in[H]$,
$\tilde s\in\mathcal S$, and $\tilde a\in\mathcal A$, the induced policy update satisfies
\[
\pi_{t+1}^l(\tilde a\mid\tilde s)
=
\frac{
\pi_t^l(\tilde a\mid\tilde s)
\exp\!\left(
\eta\,\langle w^{\pi_t,l},\phi(\tilde s,\tilde a)\rangle
\right)
}{
\widetilde Z_t^l(\tilde s)
}.
\]
Since the softmax parametrization assigns strictly positive probability to every action, we may
rearrange the preceding display as
\[
\langle w^{\pi_t,l},\phi(\tilde s,\tilde a)\rangle
=
\frac{1}{\eta}
\log
\frac{
\pi_{t+1}^l(\tilde a\mid\tilde s)\,\widetilde Z_t^l(\tilde s)
}{
\pi_t^l(\tilde a\mid\tilde s)
}.
\]
Using the pointwise identity~\eqref{eq:linear_oracle_exact_at_t}, this gives
\begin{equation}
\label{eq:linear_Q_log_ratio_appendix}
Q^{\pi_t,l}(\tilde s,\tilde a)
=
\frac{1}{\eta}
\log
\frac{
\pi_{t+1}^l(\tilde a\mid\tilde s)\,\widetilde Z_t^l(\tilde s)
}{
\pi_t^l(\tilde a\mid\tilde s)
}.
\end{equation}

Applying the performance difference lemma~\ref{lem:PDL} with
$(\pi,\pi')=(\pi_{t+1},\pi_t)$ yields
\begin{align*}
V^{\pi_{t+1},h}(s)-V^{\pi_t,h}(s)
&=
\sum_{l=h}^H
\sum_{\tilde s,\tilde a}
d_s^{h\to l,\pi_{t+1}}(\tilde s)\,
\pi_{t+1}^l(\tilde a\mid\tilde s)\,
A^{\pi_t,l}(\tilde s,\tilde a) \\
&=
\sum_{l=h}^H
\sum_{\tilde s,\tilde a}
d_s^{h\to l,\pi_{t+1}}(\tilde s)\,
\pi_{t+1}^l(\tilde a\mid\tilde s)
\left[
Q^{\pi_t,l}(\tilde s,\tilde a)-V^{\pi_t,l}(\tilde s)
\right] \\
&=
\sum_{l=h}^H
\sum_{\tilde s,\tilde a}
d_s^{h\to l,\pi_{t+1}}(\tilde s)\,
\pi_{t+1}^l(\tilde a\mid\tilde s)
\left[
\frac{1}{\eta}
\log
\frac{
\pi_{t+1}^l(\tilde a\mid\tilde s)\,\widetilde Z_t^l(\tilde s)
}{
\pi_t^l(\tilde a\mid\tilde s)
}
-
V^{\pi_t,l}(\tilde s)
\right] \\
&=
\sum_{l=h}^H
\sum_{\tilde s}
d_s^{h\to l,\pi_{t+1}}(\tilde s)
\left[
\frac{1}{\eta}
D_{\mathrm{KL}}
\!\left(
\pi_{t+1}^l(\cdot\mid\tilde s)
\mid\mid
\pi_t^l(\cdot\mid\tilde s)
\right)
+
\frac{1}{\eta}\log \widetilde Z_t^l(\tilde s)
-
V^{\pi_t,l}(\tilde s)
\right] \\
&\ge
\sum_{l=h}^H
\sum_{\tilde s}
d_s^{h\to l,\pi_{t+1}}(\tilde s)
\left[
\frac{1}{\eta}\log \widetilde Z_t^l(\tilde s)
-
V^{\pi_t,l}(\tilde s)
\right] \\
&\ge
\frac{1}{\eta}\log \widetilde Z_t^h(s)-V^{\pi_t,h}(s) \\
&\ge0.
\end{align*}
The first inequality uses the nonnegativity of the KL divergence. The second inequality follows
from Lemma~\ref{lem:Llin}, and the final inequality follows from Lemma~\ref{lemma:logZilin}.
Therefore,
\[
V^{\pi_{t+1},h}(s)-V^{\pi_t,h}(s)
\ge
\frac{1}{\eta}\log \widetilde Z_t^h(s)-V^{\pi_t,h}(s)
\ge0,
\]
which proves the claim.
\end{proof}
The preceding lemmas show that, under an exact population-projection oracle with a full support projection distribution, the
linear MDP setting inherits the same one-step improvement structure as the tabular setting.
Since the projection is exact, no approximation, projection, or regression-error term appears in
the lower bound. We now prove Proposition~\ref{prop:linMDP} directly from the induced
policy space identity~\eqref{eq:linear_equals_tabular_update}. 

\paragraph{Proof of Proposition~\ref{prop:linMDP}}~\newline
By the pointwise identity~\eqref{eq:linear_oracle_exact_at_t} and the induced update
identity~\eqref{eq:linear_equals_tabular_update}, the policy sequence generated by the Q-NPG
parameter update~\eqref{eq:Q-NPG-update-oracle} satisfies exactly the tabular NPG update
\eqref{eq:NPG} with constant step size $\eta$.

Moreover, $\theta_0^h=\mathbf 0$ for every $h\in[H]$ implies, by the softmax
parametrization~\eqref{eq:policy_parametrization}, that
\[
\pi_0^h(\cdot\mid s)=\mathrm{Unif}(\mathcal A),
\qquad
\forall h\in[H],\ s\in\mathcal S .
\]
Therefore, the induced policy sequence satisfies precisely the assumptions of
Theorem~\ref{thm:GC}. Applying Theorem~\ref{thm:GC}, for every integer $T\ge1$, for all horizons $h\in[H]$ and states $s\in\mathcal S$, we obtain
\[
V^{\pi^\star,h}(s)-V^{\pi_T,h}(s)
\le
\frac{(H-h+1)\log|\mathcal A|}{\eta T}
+
\frac{(H-h+1)^2}{T}.
\]
This completes the proof.
\hfill
$\square$

\section{Geometric Convergence Proofs} \label{app:lin}
\begin{proof} [Proof of Lemma~\ref{lem:standard_gap_vs_J_gap}]
By the definition of the global multi-start objective $J(\cdot)$, we have
\begin{align*}
H\big(J(\pi^\star)-J(\pi)\big)
&=
\sum_{h_0=1}^H\Big(V^{\pi^\star,h_0}(\rho)-V^{\pi,h_0}(\rho)\Big).
\end{align*}
Now, we use horizon-wise optimality of $\pi^\star$, for each horizon $h_0\in\{1,\dots,H\}$ and each state $s\in\mathcal S$ to have,
$V^{\pi^\star,h_0}(s)\ge V^{\pi,h_0}(s)$. Taking expectation over $S\sim\rho$ yields
$V^{\pi^\star,h_0}(\rho)-V^{\pi,h_0}(\rho)\ge 0$, for every $h_0$. Therefore, each term in the above sum is nonnegative.
In particular, the sum of these nonnegative terms dominates the $h_0=1$ term:
\begin{align*} V^{\pi^\star,1}(\rho)-V^{\pi,1}(\rho) 
 & \;\le\;
\sum_{h_0=1}^H\Big(V^{\pi^\star,h_0}(\rho)-V^{\pi,h_0}(\rho)\Big) \\
& = H\big(J(\pi^\star)-J(\pi)\big)
\end{align*}
which completes the proof.
\end{proof}

\begin{proof}  [Proof of Lemma~\ref{lem:global_floor}]
Fix $i\in\{1,\dots,H\}$ and $\tilde{s}\in\mathcal S$.
By construction, the term in \eqref{eq:bar_d_global} corresponding to $h_0=i$ is
\[
d^{i\to i,\pi}_\rho(\tilde{s})
=
\mathbb P_{S^i\sim\rho}(S^i=\tilde{s})
=
\rho(\tilde{s}),
\]
which holds because the process is initialized at horizon $i$ with $S^i\sim\rho$, and the case $h_0=i$ involves no transitions.\\
Since each summand $d^{h_0\to i,\pi}_\rho(\tilde{s})$ is a probability and hence nonnegative, we obtain
\[
\bar d^{i,\pi}_\rho(\tilde{s})
=
\frac{1}{H}\sum_{h_0=1}^{i} d^{h_0\to i,\pi}_\rho(\tilde{s})
\;\ge\;
\frac{1}{H}\, d^{i\to i,\pi}_\rho(\tilde{s})
=
\frac{1}{H}\rho(\tilde{s}),
\]
which proves \eqref{eq:global_floor}.
\end{proof}

\begin{proof}[Proof of Lemma~\ref{lem:PDLJ}]
Starting from the definition of the global multi-start objective, $J(\cdot)$:
\[
J(\pi')-J(\pi)
=
\frac{1}{H}\sum_{h_0=1}^H\Big(V^{\pi',h_0}(\rho)-V^{\pi,h_0}(\rho)\Big).
\]

Fix a start horizon $h_0\in[H]$. Apply the finite-horizon performance difference lemma (Lemma~\ref{lem:PDL}) at horizon $h_0$ with $\pi$ and $\pi'$ in the appropriate order gives 
\[
V^{\pi',h_0}(\rho)-V^{\pi,h_0}(\rho)
=
\sum_{i=h_0}^H\ \sum_{\tilde{s}\in\mathcal S}
d^{h_0\to i,\pi'}_\rho(\tilde{s})\,
\Big\langle Q^{\pi,i}(\tilde{s},\cdot),\ \pi'^i(\cdot\mid \tilde{s})-\pi^i(\cdot\mid \tilde{s})\Big\rangle.
\]
Substituting this identity into the average over $h_0$ and exchanging the finite sums, we obtain
\begin{align*}
J(\pi')-J(\pi)
&=
\frac{1}{H}\sum_{h_0=1}^H \sum_{i=h_0}^H \sum_{\tilde{s}\in\mathcal S}
d^{h_0\to i,\pi'}_\rho(\tilde{s})\,
\Big\langle Q^{\pi,i}(\tilde{s},\cdot),\ \pi'^i(\cdot\mid \tilde{s})-\pi^i(\cdot\mid \tilde{s})\Big\rangle \\
&=
\sum_{i=1}^H \sum_{\tilde{s}\in\mathcal S}
\left(\frac{1}{H}\sum_{h_0=1}^i d^{h_0\to i,\pi'}_\rho(\tilde{s})\right)
\Big\langle Q^{\pi,i}(\tilde{s},\cdot),\ \pi'^i(\cdot\mid \tilde{s})-\pi^i(\cdot\mid \tilde{s})\Big\rangle.
\end{align*}

By the definition of the global multi-start horizon-$i$ visitation measure in \eqref{eq:bar_d_global},
\[
\frac{1}{H}\sum_{h_0=1}^i d^{h_0\to i,\pi'}_\rho(\tilde s)
=
\bar d^{i,\pi'}_\rho(\tilde s).
\]
Therefore,
\[
J(\pi')-J(\pi)
=
\sum_{i=1}^H \sum_{\tilde s\in\mathcal S}
\bar d^{i,\pi'}_\rho(\tilde s)\,
\Big\langle Q^{\pi,i}(\tilde s,\cdot),\ \pi'^i(\cdot\mid \tilde s)-\pi^i(\cdot\mid \tilde s)\Big\rangle,
\]
which is exactly the expectation form in \eqref{eq:PDLJ}.
\end{proof}

\begin{proof}[Proof of Lemma~\ref{lem:global_multistart_monotone}]
We first establish a pointwise (local) nonnegativity property of the KL-based PMD update.

Fix any horizon $h\in[H]$ and state $s\in\mathcal S$. By optimality of $\pi_{t+1}^h(\cdot\mid s)$ in the update \eqref{eq:PMD} and the three-point inequality for KL-based mirror descent \citep[see, e.g.,][]{lan2023policy,xiao2022convergence}, applied to the linear objective $p\mapsto \eta_t\langle Q^{\pi_t,h}(s,\cdot),p\rangle$, for any $p\in\Delta(\mathcal A)$ we have
\begin{align*}
\eta_t \big\langle Q^{\pi_t,h}(s,\cdot), \pi_{t+1}^h(\cdot\mid s) \big\rangle
- D_{\mathrm{KL}}\!\left(\pi_{t+1}^h(\cdot\mid s)\mid\mid\pi_t^h(\cdot\mid s)\right)
\ge& \;
\eta_t \big\langle Q^{\pi_t,h}(s,\cdot), p \big\rangle
- D_{\mathrm{KL}}\!\left(p\mid\mid\pi_t^h(\cdot\mid s)\right) \\
& \qquad \qquad \qquad+
D_{\mathrm{KL}}\!\left(p\mid\mid\pi_{t+1}^h(\cdot\mid s)\right).
\end{align*}
Rearranging and dividing by $\eta_t>0$ gives
\begin{align*}
\big\langle Q^{\pi_t,h}(s,\cdot),\, \pi_{t+1}^h(\cdot\mid s)-p \big\rangle
\;\ge\;&\;
\frac{1}{\eta_t}D_{\mathrm{KL}}\!\left(\pi_{t+1}^h(\cdot\mid s)\mid\mid\pi_t^h(\cdot\mid s)\right)
+\frac{1}{\eta_t}D_{\mathrm{KL}}\!\left(p\mid\mid\pi_{t+1}^h(\cdot\mid s)\right) \\
& \qquad \qquad \qquad \qquad \qquad \qquad \qquad \quad -\;
\frac{1}{\eta_t}D_{\mathrm{KL}}\!\left(p\mid\mid\pi_t^h(\cdot\mid s)\right).
\end{align*}
Choosing $p=\pi_t^h(\cdot\mid s)$ and using
$D_{\mathrm{KL}}(\pi_t^h(\cdot\mid s)\mid\mid\pi_t^h(\cdot\mid s))=0$, we obtain
\begin{align*}
    \big\langle Q^{\pi_t,h}(s,\cdot),\, \pi_{t+1}^h(\cdot\mid s)-\pi_t^h(\cdot\mid s) \big\rangle &
\ge
\frac{1}{\eta_t}D_{\mathrm{KL}}\!\left(\pi_{t+1}^h(\cdot\mid s)\mid\mid\pi_t^h(\cdot\mid s)\right)
+\frac{1}{\eta_t}D_{\mathrm{KL}}\!\left(\pi_t^h(\cdot\mid s)\mid\mid\pi_{t+1}^h(\cdot\mid s)\right) \\ & \ge 0,
\end{align*}
where the last step uses the nonnegativity of the KL divergence (Lemma~\ref{lem:KL}). Thus for every $(h,s)$,
\begin{equation}\label{eq:local_Q_nonneg_J}
\big\langle Q^{\pi_t,h}(s,\cdot),\, \pi_{t+1}^h(\cdot\mid s)-\pi_t^h(\cdot\mid s) \big\rangle
\;\ge\;0.
\end{equation}

Next, fix a starting horizon $h_0\in[H]$. Applying the finite-horizon performance difference lemma (Lemma~\ref{lem:PDL}) at horizon $h_0$ with initial distribution $\rho$ to the pair $(\pi,\pi')=(\pi_{t+1},\pi_t)$, and using $V^{\pi_t,i}(\tilde s)=\langle Q^{\pi_t,i}(\tilde s,\cdot),\pi_t^i(\cdot\mid\tilde s)\rangle$ to rewrite the advantage sum as an inner product of differences, gives
\[
V^{\pi_{t+1},h_0}(\rho)-V^{\pi_t,h_0}(\rho)
=
\sum_{i=h_0}^H\sum_{\tilde s\in\mathcal S}
d^{h_0\to i,\pi_{t+1}}_\rho(\tilde s)\,
\big\langle Q^{\pi_t,i}(\tilde s,\cdot),\,\pi_{t+1}^i(\cdot\mid\tilde s)-\pi_t^i(\cdot\mid\tilde s)\big\rangle.
\]
By \eqref{eq:local_Q_nonneg_J} with $(h,s)=(i,\tilde s)$ and the fact that
$d^{h_0\to i,\pi_{t+1}}_\rho(\tilde s)\ge 0$, every summand is nonnegative, so
\[
V^{\pi_{t+1},h_0}(\rho)-V^{\pi_t,h_0}(\rho)\;\ge\;0,
\qquad \forall h_0\in[H].
\]
Averaging these inequalities over $h_0\in[H]$ yields
\[
J(\pi_{t+1})-J(\pi_t)
=
\frac{1}{H}\sum_{h_0=1}^H\Big(V^{\pi_{t+1},h_0}(\rho)-V^{\pi_t,h_0}(\rho)\Big)
\;\ge\;0,
\]
which proves the monotonicity of the global multi-start objective $J(\cdot)$.
\end{proof}

\begin{proof}[Proof of Lemma~\ref{lem:onestep_multistart_potential}]
We apply the one-step KL mirror-descent inequality \citep[see, e.g.,][]{lan2023policy,xiao2022convergence} at the coordinate $(i,s)$: for any $q\in\Delta(\mathcal A)$,
\begin{equation}
\label{eq:use_onestep}
\big\langle Q^{\pi_t,i}(s,\cdot),\ q(\cdot)-\pi_{t+1}^i(\cdot\mid s)\big\rangle
\;\le\;
\frac{1}{\eta_t}
\Big(
D_{\mathrm{KL}}\!\big(q\mid\mid\pi_t^i(\cdot\mid s)\big)
-
D_{\mathrm{KL}}\!\big(q\mid\mid\pi_{t+1}^i(\cdot\mid s)\big)
\Big).
\end{equation}
Choosing $q(\cdot)=\pi^{\star,i}(\cdot\mid s)$ gives
\begin{align}
\label{eq:use_q_star}
\big\langle Q^{\pi_t,i}(s,\cdot),\ \pi^{\star,i}(\cdot\mid s)-\pi_{t+1}^i(\cdot\mid s)\big\rangle
\;\le\;
\frac{1}{\eta_t}
\Big(
D_{\mathrm{KL}}\!\big(\pi^{\star,i}(\cdot\mid s)\mid\mid\pi_t^i(\cdot\mid s)\big)
-
D_{\mathrm{KL}}\!\big(\pi^{\star,i}(\cdot\mid s)\mid\mid\pi_{t+1}^i(\cdot\mid s)\big)
\Big).
\end{align}
Taking expectation of both sides with respect to $S\sim\bar d^{i,\pi^\star}_\rho$ and summing over
$i=1,\ldots,H$, we obtain
\begin{align*}
\sum_{i=1}^H
\mathbb E_{S\sim \bar d^{i,\pi^\star}_\rho}
\Big[
\big\langle Q^{\pi_t,i}(S,\cdot),\ \pi^{\star,i}(\cdot\mid S)-\pi_{t+1}^i(\cdot\mid S)\big\rangle
\Big]
&\le
\frac{1}{\eta_t}
\sum_{i=1}^H
\Big(
\mathbb E_{S\sim \bar d^{i,\pi^\star}_\rho}
D_{\mathrm{KL}}\!\big(\pi^{\star,i}(\cdot\mid S)\mid\mid\pi_t^i(\cdot\mid S)\big) \\
&\quad\quad -
\mathbb E_{S\sim \bar d^{i,\pi^\star}_\rho}
D_{\mathrm{KL}}\!\big(\pi^{\star,i}(\cdot\mid S)\mid\mid\pi_{t+1}^i(\cdot\mid S)\big)
\Big) \\
&=
\frac{1}{\eta_t}\big(D_t^\star-D_{t+1}^\star\big),
\end{align*}
where the last equality follows from the definition of $D_t^\star$ in \eqref{eq:Dtstar_global}.
This proves \eqref{eq:onestep_multistart_potential}.
\end{proof}

\paragraph{Proof of Proposition~\ref{prop:key_multistart}}~\newline
Fix $t\ge0$ such that $\vartheta_t<\infty$. Because $\pi_0$ has full
support and each step size $\eta_t$ is finite, the multiplicative form
of the KL-based PMD update preserves full support. Consequently, all
KL terms appearing below are finite, and
Lemma~\ref{lem:onestep_multistart_potential} applies.

Applying that lemma gives
\begin{align}
\label{eq:step1_global}
\sum_{i=1}^H
\mathbb E_{S\sim\bar d^{i,\pi^\star}_\rho}
\left[
\left\langle
-Q^{\pi_t,i}(S,\cdot),
\pi_{t+1}^i(\cdot\mid S)-\pi^{\star,i}(\cdot\mid S)
\right\rangle
\right]
\le
\frac{1}{\eta_t}
\bigl(D_t^\star-D_{t+1}^\star\bigr).
\end{align}
For every $i\in[H]$ and $S\in\mathcal S$, decompose the inner product as
\begin{align*}
\left\langle
-Q^{\pi_t,i}(S,\cdot),
\pi_{t+1}^i(\cdot\mid S)-\pi^{\star,i}(\cdot\mid S)
\right\rangle & =
\left\langle
-Q^{\pi_t,i}(S,\cdot),
\pi_{t+1}^i(\cdot\mid S)-\pi_t^i(\cdot\mid S)
\right\rangle
\\
& \qquad \qquad \qquad \qquad +
\left\langle
Q^{\pi_t,i}(S,\cdot),
\pi^{\star,i}(\cdot\mid S)-\pi_t^i(\cdot\mid S)
\right\rangle.
\end{align*}
Accordingly, denote the two resulting sums in the left-hand side
of~\eqref{eq:step1_global} by $(\mathrm I)$ and $(\mathrm{II})$,
respectively.

For term $(\mathrm{II})$, Lemma~\ref{lem:PDLJ}, applied with
$(\pi,\pi')=(\pi_t,\pi^\star)$, gives
\begin{align*}
(\mathrm{II})
&=
\sum_{i=1}^H
\mathbb E_{S\sim\bar d^{i,\pi^\star}_\rho}
\left[
\left\langle
Q^{\pi_t,i}(S,\cdot),
\pi^{\star,i}(\cdot\mid S)-\pi_t^i(\cdot\mid S)
\right\rangle
\right]
\\
&=
J(\pi^\star)-J(\pi_t)
=
\delta_t \tag{From~\eqref{eq:delta_with_J}}.
\end{align*}

For term $(\mathrm I)$, define
\[
f_i(s)
:=
\left\langle
-Q^{\pi_t,i}(s,\cdot),
\pi_{t+1}^i(\cdot\mid s)-\pi_t^i(\cdot\mid s)
\right\rangle.
\]
The local monotonicity property~\eqref{eq:local_Q_nonneg_J} implies
\[
f_i(s)\le0,
\qquad
\forall i\in[H],\ s\in\mathcal S.
\]

If $\bar d^{i,\pi^\star}_\rho(s)>0$, the definition of
$\vartheta_t$ and the assumption $\vartheta_t<\infty$ give
\[
\bar d^{i,\pi^\star}_\rho(s)
\le
\vartheta_t\,
\bar d^{i,\pi_{t+1}}_\rho(s).
\]
If $\bar d^{i,\pi^\star}_\rho(s)=0$, the same inequality holds
trivially because $\vartheta_t\ge0$ and
$\bar d^{i,\pi_{t+1}}_\rho(s)\ge0$. Hence the inequality holds for
every $s\in\mathcal S$. Since $f_i(s)\le0$, multiplying by $f_i(s)$
reverses the inequality:
\[
\bar d^{i,\pi^\star}_\rho(s)f_i(s)
\ge
\vartheta_t\,
\bar d^{i,\pi_{t+1}}_\rho(s)f_i(s).
\]
Summing over $s\in\mathcal S$ and $i\in[H]$ yields
\begin{equation}
\label{eq:I_lower}
(\mathrm I)
\ge
\vartheta_t
\sum_{i=1}^H
\mathbb E_{S\sim\bar d^{i,\pi_{t+1}}_\rho}
\bigl[f_i(S)\bigr].
\end{equation}

Applying Lemma~\ref{lem:PDLJ} with
$(\pi,\pi')=(\pi_t,\pi_{t+1})$ gives
\begin{align*}
\sum_{i=1}^H
\mathbb E_{S\sim\bar d^{i,\pi_{t+1}}_\rho}
\bigl[f_i(S)\bigr]
&=
J(\pi_t)-J(\pi_{t+1})\\
&=
\delta_{t+1}-\delta_t.
\end{align*}
Therefore,
\[
(\mathrm I)
\ge
\vartheta_t(\delta_{t+1}-\delta_t).
\]
Combining this inequality with
$(\mathrm{II})=\delta_t$ and~\eqref{eq:step1_global} gives
\[
\vartheta_t(\delta_{t+1}-\delta_t)+\delta_t
\le
\frac{1}{\eta_t}
\bigl(D_t^\star-D_{t+1}^\star\bigr),
\]
which proves~\eqref{eq:key_multistart}.
\hfill
$\square$

\begin{proof} [Proof of Lemma~\ref{lem:vartheta_t_le_vartheta_rho}]
If $\vartheta_\rho=+\infty$, then \eqref{eq:vartheta_t_le_vartheta_rho} holds trivially. Hence, assume
$\vartheta_\rho<\infty$. By the definition of $\vartheta_\rho$, this implies the support condition
\[
\bar d^{i,\pi^\star}_\rho(s)>0 \ \Longrightarrow\ \rho(s)>0
\qquad \text{for all } i\in[H],\ s\in\mathcal S.
\]

Now fix any $i\in[H]$ and any state $s\in\mathcal S$ such that $\bar d^{i,\pi^\star}_\rho(s)>0$. Then $\rho(s)>0$, and applying Lemma~\ref{lem:global_floor} to the policy $\pi_{t+1}$ yields
\begin{equation}
\label{eq:lower_bar_d_pi_t1}
\bar d^{i,\pi_{t+1}}_\rho(s)\ \ge\ \frac{1}{H}\rho(s).
\end{equation}
Therefore,
\[
\frac{\bar d^{i,\pi^\star}_\rho(s)}{\bar d^{i,\pi_{t+1}}_\rho(s)}
\ \le\
\frac{\bar d^{i,\pi^\star}_\rho(s)}{\rho(s)/H}
=
H\,\frac{\bar d^{i,\pi^\star}_\rho(s)}{\rho(s)}
\ \le\
H\max_{j\in[H]}\max_{x:\rho(x)>0}\frac{\bar d^{j,\pi^\star}_\rho(x)}{\rho(x)}
=
\vartheta_\rho.
\]
Since the inequality holds for every admissible pair $(i,s)$ with $\bar d^{i,\pi^\star}_\rho(s)>0$,
taking the maxima in the definition of $\vartheta_t$ yields $\vartheta_t\le \vartheta_\rho$.
\end{proof}

\paragraph{Proof of Theorem~\ref{thm:geo_multistart}}~\newline
Recall that $\delta_t := J(\pi^\star)-J(\pi_t)$. By Lemma~\ref{lem:global_multistart_monotone}, $\{\delta_t\}_{t\ge 0}$ is nonincreasing, so $\delta_{t+1}-\delta_t\le 0$. Since $\vartheta_\rho<\infty$ and Lemma~$\ref{lem:vartheta_t_le_vartheta_rho}$ gives $\vartheta_t\le\vartheta_\rho$, we have $\vartheta_t<\infty$ for every $t\ge0$. Therefore,
\[
\vartheta_\rho(\delta_{t+1}-\delta_t)+\delta_t
\;\le\;
\vartheta_t(\delta_{t+1}-\delta_t)+\delta_t.
\]
Combining this with Proposition~\ref{prop:key_multistart} yields
\begin{equation}
\label{eq:key_with_vartheta_rho}
\vartheta_\rho(\delta_{t+1}-\delta_t)+\delta_t
\;\le\;
\frac{1}{\eta_t}\big(D_t^\star-D_{t+1}^\star\big).
\end{equation}
Rearranging  \eqref{eq:key_with_vartheta_rho} gives
\[
\vartheta_\rho\delta_{t+1}-(\vartheta_\rho-1)\delta_t
\;\le\;
\frac{1}{\eta_t}D_t^\star-\frac{1}{\eta_t}D_{t+1}^\star.
\]
Moving the $D_{t+1}^\star$ term to the left and dividing by $\vartheta_\rho$ to obtain
\begin{align*}
\delta_{t+1}+\frac{1}{\eta_t\vartheta_\rho}D_{t+1}^\star
& \;\le\;
\Big(1-\frac{1}{\vartheta_\rho}\Big)\delta_t+\frac{1}{\eta_t\vartheta_\rho}D_t^\star \\
& = \Big(1-\frac{1}{\vartheta_\rho}\Big)\delta_t+ \frac{(\vartheta_\rho-1)}{\eta_t\vartheta_\rho(\vartheta_\rho-1)}D_t^\star \\
& = \Big(1-\frac{1}{\vartheta_\rho}\Big)\delta_t+ \Big(1-\frac{1}{\vartheta_\rho}\Big) \frac{1}{\eta_t (\vartheta_\rho-1)}D_t^\star \\
& = \Big(1-\frac{1}{\vartheta_\rho}\Big) \Big[\delta_t+\frac{1}{\eta_t (\vartheta_\rho-1)}D_t^\star \Big].
\end{align*}
If the step size satisfies \eqref{eq:eta_growth_thm}, i.e. $\eta_{t+1}(\vartheta_\rho-1) \geq \eta_t\vartheta_\rho$, then we have,
\[\delta_{t+1}+\frac{1}{\eta_{t+1}(\vartheta_\rho-1)}D_{t+1}^\star \leq \Big(1-\frac{1}{\vartheta_\rho}\Big) \Big[\delta_t+\frac{1}{\eta_t (\vartheta_\rho-1)}D_t^\star \Big] \]
This forms the following recursion, for all $t \geq 0$,
\[\delta_{t}+\frac{1}{\eta_{t}(\vartheta_\rho-1)}D_{t}^\star \leq \Big(1-\frac{1}{\vartheta_\rho}\Big)^t \Big[\delta_0+\frac{1}{\eta_0 (\vartheta_\rho-1)}D_0^\star \Big] \]
Finally, since $D_t^\star\ge 0$ we have $\delta_t \le \delta_t+\frac{1}{\eta_t(\vartheta_\rho-1)}D_t^\star$, hence
\[\delta_t = J(\pi^\star)-J(\pi_t)
\ \le\
\Big(1-\frac{1}{\vartheta_\rho}\Big)^t
\left(
\delta_0+\frac{1}{\eta_0(\vartheta_\rho-1)}D_0^\star
\right),
\qquad \forall t\ge 0.\]
By Lemma~\ref{lem:standard_gap_vs_J_gap}, for any policy $\pi$, start at horizon-$1$, we know
\[
V^{\pi^\star,1}(\rho)-V^{\pi,1}(\rho)
\;\le\;
H\big(J(\pi^\star)-J(\pi)\big).
\]
Apply this inequality with $\pi=\pi_t$ to obtain
\[V^{\pi^\star,1}(\rho)-V^{\pi_t,1}(\rho)
\ \le\
H\Big(1-\frac{1}{\vartheta_\rho}\Big)^t
\left(
\delta_0+\frac{1}{\eta_0(\vartheta_\rho-1)}D_0^\star
\right),
\qquad \forall t\ge 0.\]
This completes the proof.
\hfill
$\square$

\subsection{Further Simulations} \label{app:more_simulations} 
\FloatBarrier
\begin{figure}[htbp]
    \centering

    \begin{subfigure}[t]{0.48\linewidth}
        \centering
        \begin{tikzpicture}
            \begin{semilogyaxis}[
                width=\linewidth,
                height=0.70\linewidth,
                xlabel={Iterations $t$},
                ylabel={Error$(t,1)$},
                xmin=0, xmax=50,
                xtick={0,10,20,30,40,50},
                ymin=1e-6, ymax=50,
                ytick={1e-6,1e-4,1e-2,1,10},
                grid=both,
                grid style={dotted},
    legend pos=south west,
    legend cell align={left},
    legend style={font=\scriptsize},
    tick label style={font=\scriptsize},
    label style={font=\scriptsize},
    line join=round,
    line cap=round
            ]
            \addplot+[blue, line width=0.85pt, no marks]
                table[col sep=comma, x=t, y=horizon1_distributional_gap]
                {increasing_h1_problem_dependent.csv};
            \addlegendentry{Error$(t,1)$}

            \addplot+[red, dashed, line width=0.85pt, no marks]
                table[col sep=comma, x=t, y=theorem_bound]
                {increasing_h1_problem_dependent.csv};
            \addlegendentry{$(1-\frac{1}{H})^t$}
            \end{semilogyaxis}
        \end{tikzpicture}
        \caption{NPG linear convergence at $h=1$ $(\vartheta_\rho >1)$.} \label{fig:app_inc_h1}
    \end{subfigure}
    \hfill
   \begin{subfigure}[t]{0.48\linewidth}
        \centering
        \begin{tikzpicture}
            \begin{semilogyaxis}[
                width=\linewidth,
                height=0.70\linewidth,
                xlabel={Horizon $(h)$},
                ylabel={Error$(T,h)$},
                xmin=1, xmax=6,
                xtick={1,2,3,4,5,6},
                ymin=1e-16, ymax=1e2,
                ytick={1e-16,1e-12,1e-8,1e-4,1,1e2},
                grid=both,
                grid style={dotted},
    legend pos=south west,
    legend cell align={left},
    legend style={font=\scriptsize},
    tick label style={font=\scriptsize},
    label style={font=\scriptsize},
    line join=round,
    line cap=round
            ]
            \addplot+[blue, line width=0.85pt, no marks]
                table[col sep=comma, x=h, y=actual_error_sum_T_h_plot]
                {increasing_horizonwise_problem_dependent_T20.csv};
            \addlegendentry{Error$(T,h)$}

            \addplot+[red, dashed, line width=0.85pt, no marks]
                table[col sep=comma, x=h, y=theory_bound_T_h]
                {increasing_horizonwise_problem_dependent_T20.csv};
            \addlegendentry{$(\frac{H-h}{H-h+1})^t$}
            \end{semilogyaxis}
        \end{tikzpicture}
        \caption{Horizon-wise error at fixed $T=20$ $(\text{all}\; h)$.}
        \label{fig:app_inc_all_h} 
    \end{subfigure}

    \caption{Increasing step size NPG on the randomly generated MDP under the problem-dependent $\vartheta_\rho > 1$ bound of Theorem~\ref{thm:geo_multistart}: (a) error vs.\ iteration at $h=1$; (b) error vs.\ all horizons at fixed iteration $T=20$. Both plots use linear $x$-axis and log-scale $y$-axis. The solid (blue) curve shows the empirical error, and the dashed (red) curve shows the corresponding geometric reference curve, which is undefined at $h=H$ where $\vartheta_\rho=1$.}
    \label{fig:increasing_h1_subfigures}
\end{figure}

We repeat the experiments of Section~\ref{subsec:increasing} on a randomly
generated finite-horizon MDP with the same dimensions
($|\mathcal{S}| = 15$, $|\mathcal{A}| = 5$, $H = 7$): each transition
distribution $P^h(\cdot \mid s, a)$ is drawn independently from the
symmetric Dirichlet distribution with unit concentration, and rewards are
i.i.d.\ uniform on $[0,1]$.  We use random seed $42$ for both the transitions
and rewards. Unlike the structured instance in Section~\ref{subsec:increasing}, the
uniform distribution need not be invariant under the transitions of this
MDP. We therefore evaluate the mismatch coefficient directly. Specifically,
we compute an optimal policy $\pi^\star$ by backward induction, construct
its multi-start visitation measures, and evaluate $\vartheta_\rho$ from these measures.

Figure~\ref{fig:app_inc_h1} plots $\mathrm{Error}(t,1)$ using problem-dependent bound of Theorem~\ref{thm:geo_multistart}, with
contraction factor $1 - 1/\vartheta_\rho$ and prefactor evaluated at the
measured $\vartheta_\rho$. As in the structured
experiment, we initialize $\pi_0$ uniformly.

Figure~\ref{fig:app_inc_all_h} repeats the
horizon-wise experiment: for each $h \in [H-1]$, we run NPG on the tail MDP with effective
horizon $H-h+1$ and evaluate $\vartheta_\rho$ directly on each tail. In both panels the empirical error remains below the predicted
bound.

\subsection{Linear Convergence with Robust Step Size Proofs} \label{app:linrobust}
\begin{proof}[Proof of Lemma~\ref{lem:vartheta_rho_ge_H}]
 Recall that $\bar d^{i,\pi^\star}_\rho$ is defined by
\[
\bar d^{i,\pi^\star}_\rho(s)=\frac{1}{H}\sum_{h_0=1}^i d^{h_0\to i,\pi^\star}_\rho(s).
\]
Since each $d^{h_0\to i,\pi^\star}_\rho(\cdot)$ is a probability distribution, we have
\[
\sum_{s\in\mathcal S}\bar d^{i,\pi^\star}_\rho(s)
=\frac{1}{H}\sum_{h_0=1}^i\sum_{s\in\mathcal S} d^{h_0\to i,\pi^\star}_\rho(s)
=\frac{i}{H}.
\]

Because $\vartheta_\rho<\infty$, the support condition holds:
\[
\bar d^{i,\pi^\star}_\rho(s)>0 \ \Longrightarrow\ \rho(s)>0.
\]
Thus the ratio $\bar d^{i,\pi^\star}_\rho(s)/\rho(s)$ is well-defined on $\{s:\rho(s)>0\}$, and
\[
\mathbb E_{S\sim \rho}\!\left[\frac{\bar d^{i,\pi^\star}_\rho(S)}{\rho(S)}\right]
=\sum_{s:\rho(s)>0}\rho(s)\frac{\bar d^{i,\pi^\star}_\rho(s)}{\rho(s)}
=\sum_{s\in\mathcal S}\bar d^{i,\pi^\star}_\rho(s)
=\frac{i}{H}.
\]
Therefore,
\[
\max_{s:\rho(s)>0}\frac{\bar d^{i,\pi^\star}_\rho(s)}{\rho(s)}
\ \ge\ \frac{i}{H}.
\]
Taking the maximum over $i\in[H]$ yields
\[
\max_{i\in[H]}\max_{s:\rho(s)>0}\frac{\bar d^{i,\pi^\star}_\rho(s)}{\rho(s)}
\ \ge\ 1,
\]
and multiplying by $H$ gives $\vartheta_\rho\ge H$ by \eqref{eq:vartheta_def_repeat}.

Finally, since the function $x\mapsto x/(x-1)$ is decreasing on $(1,\infty)$ and $\vartheta_\rho\ge H\ge 2$, we obtain
\[
\frac{\vartheta_\rho}{\vartheta_\rho-1}\ \le\ \frac{H}{H-1},
\]
which is \eqref{eq:ratio_vartheta_by_H}.
\end{proof}

\begin{lem}
\label{lem:horizon_only_growth_sufficient}
Fix $H\ge2$ and $\rho\in\Delta(\mathcal S)$. Let
$\{\pi_t\}_{t\ge0}$ be generated by the KL-based PMD
update~\eqref{eq:PMD}, with a positive step size sequence
$\{\eta_t\}_{t\ge0}$, from a full support initial policy. Let
$\pi^\star$ be an optimal nonstationary policy. Let
$\delta_0$ and $D_0^\star$ be as defined
in \eqref{eq:delta_with_J} and ~\eqref{eq:Dtstar_global} respectively. Suppose $\vartheta_\rho<\infty$. If the positive step sizes satisfy
\begin{equation}
\label{eq:eta_growth_H_lem}
\eta_{t+1}
\ge
\frac{H}{H-1}\eta_t,
\qquad t=0,1,2,\ldots.
\end{equation}
then the problem-dependent growth
condition~\eqref{eq:eta_growth_thm} is satisfied. Consequently, for
every $t\ge0$,
\begin{equation}
\label{eq:horizon_only_general_bound}
V^{\pi^\star,1}(\rho)-V^{\pi_t,1}(\rho)
\le
H\left(1-\frac{1}{\vartheta_\rho}\right)^t
\left(
\delta_0+
\frac{D_0^\star}{\eta_0(\vartheta_\rho-1)}
\right).
\end{equation}
\end{lem}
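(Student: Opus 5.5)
The plan is to reduce the lemma to Theorem~\ref{thm:geo_multistart} by verifying its two hypotheses: $\vartheta_\rho\in(1,\infty)$ and the growth condition \eqref{eq:eta_growth_thm}. The bound \eqref{eq:horizon_only_general_bound} is exactly \eqref{eq:linear_rate_J_multistart}. So once the hypotheses hold, nothing further is needed.

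\textbf{Step 1: the range of $\vartheta_\rho$.} Finiteness of $\vartheta_\rho$ is assumed. Since $\vartheta_\rho<\infty$, Lemma~\ref{lem:vartheta_rho_ge_H} gives $\vartheta_\rho\ge H$. Together with $H\ge2$, this yields $\vartheta_\rho\ge 2>1$, so $\vartheta_\rho\in(1,\infty)$ and the ratio $\vartheta_\rho/(\vartheta_\rho-1)$ is well defined.

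\textbf{Step 2: the growth condition.} Lemma~\ref{lem:vartheta_rho_ge_H} also supplies \eqref{eq:ratio_vartheta_by_H}, namely $\vartheta_\rho/(\vartheta_\rho-1)\le H/(H-1)$. This follows from $x\mapsto x/(x-1)$ being decreasing on $(1,\infty)$. Since $\eta_t>0$, multiplying this inequality by $\eta_t$ and chaining it with \eqref{eq:eta_growth_H_lem} gives
\[
\eta_{t+1}\ge \frac{H}{H-1}\eta_t\ge \frac{\vartheta_\rho}{\vartheta_\rho-1}\eta_t
\]
for every $t\ge0$. This is exactly \eqref{eq:eta_growth_thm}. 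Positivity of $\eta_0$ is given by hypothesis.

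\textbf{Step 3: conclusion.} The remaining hypotheses of Theorem~\ref{thm:geo_multistart} are assumed verbatim in the lemma: the full support initial policy, the KL-based PMD iterates, and an optimal nonstationary $\pi^\star$. Invoking the theorem therefore yields \eqref{eq:horizon_only_general_bound} for all $t\ge0$.

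There is no real obstacle here. The only point needing care is the ordering in Step 1: finiteness of $\vartheta_\rho$ must be invoked first to get $\vartheta_\rho\ge H\ge2$ from Lemma~\ref{lem:vartheta_rho_ge_H}. Only then is $\vartheta_\rho/(\vartheta_\rho-1)$ meaningful and the monotonicity comparison legitimate. This is also why $H\ge2$ is needed.
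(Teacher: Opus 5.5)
Your proposal is correct and follows essentially the same route as the paper. Both use Lemma~\ref{lem:vartheta_rho_ge_H} to get $\vartheta_\rho\ge H\ge 2$ and $\vartheta_\rho/(\vartheta_\rho-1)\le H/(H-1)$, chain this with \eqref{eq:eta_growth_H_lem} after multiplying by $\eta_t>0$, and then invoke Theorem~\ref{thm:geo_multistart} once $\vartheta_\rho\in(1,\infty)$ and $\eta_0>0$ are checked.
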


\begin{proof}
Since $H\ge2$ and $\vartheta_\rho<\infty$,
Lemma~\ref{lem:vartheta_rho_ge_H} gives
\[
\vartheta_\rho\ge H\ge2,
\]
and \eqref{eq:ratio_vartheta_by_H} gives
\[
\frac{\vartheta_\rho}{\vartheta_\rho-1}
\le
\frac{H}{H-1}.
\]
Since $\eta_t>0$, multiplying this inequality by $\eta_t$ yields, for
every $t\ge0$,
\[
\frac{\vartheta_\rho}{\vartheta_\rho-1}\eta_t
\le
\frac{H}{H-1}\eta_t.
\]
Combining this inequality with~\eqref{eq:eta_growth_H_lem}, we obtain
\[
\eta_{t+1}
\ge
\frac{H}{H-1}\eta_t
\ge
\frac{\vartheta_\rho}{\vartheta_\rho-1}\eta_t,
\qquad t\ge0.
\]
Therefore, the step size condition~\eqref{eq:eta_growth_thm} required
by Theorem~\ref{thm:geo_multistart} is satisfied.

Moreover, $\vartheta_\rho\ge H\ge2$ and
$\vartheta_\rho<\infty$ imply $\vartheta_\rho\in(1,\infty)$, while
the positivity of the step size sequence implies $\eta_0>0$.
Together with the remaining assumptions in the lemma statement, these
facts verify all the hypotheses of
Theorem~\ref{thm:geo_multistart}. Applying that theorem gives
\[
V^{\pi^\star,1}(\rho)-V^{\pi_t,1}(\rho)
\le
H\left(1-\frac{1}{\vartheta_\rho}\right)^t
\left(
\delta_0+
\frac{D_0^\star}{\eta_0(\vartheta_\rho-1)}
\right),
\qquad t\ge0,
\]
which proves~\eqref{eq:horizon_only_general_bound}.
\end{proof}

\end{document}